\def\eqref#1{equation~\ref{#1}}
\def\floor#1{\lfloor #1 \rfloor}
\def\1{\bm{1}}
\def\vu{{\bm{u}}}
\DeclareMathAlphabet{\mathsfit}{\encodingdefault}{\sfdefault}{m}{sl}
\SetMathAlphabet{\mathsfit}{bold}{\encodingdefault}{\sfdefault}{bx}{n}
\def\gA{{\mathcal{A}}}
\def\sA{{\mathbb{A}}}
\def\sI{{\mathbb{I}}}
\def\sN{{\mathbb{N}}}
\def\sP{{\mathbb{P}}}
\def\sR{{\mathbb{R}}}
\newcommand{\E}{\mathbb{E}}
\newcommand{\KL}{D_{\mathrm{KL}}}
\theoremstyle{plain}
\newtheorem{theorem}{Theorem} 
\newtheorem{corollary}[theorem]{Corollary}
\newtheorem{lemma}[theorem]{Lemma}
\theoremstyle{definition}
\newtheorem{definition}[theorem]{Definition} 
\begin{document}

\twocolumn[

\aistatstitle{Adaptation to Misspecified Kernel Regularity in Kernelised Bandits}

\aistatsauthor{ Yusha Liu \And Aarti Singh }

\aistatsaddress{ Carnegie Mellon University\\yushal@cs.cmu.edu \And Carnegie Mellon University\\aarti@cs.cmu.edu } 
]

\begin{abstract}
  In continuum-armed bandit problems where the underlying function resides in a reproducing kernel Hilbert space (RKHS), namely, the kernelised bandit problems, an important open problem remains of how well learning algorithms can adapt if the regularity of the associated kernel function is unknown. In this work, we study adaptivity to \highlight{the regularity of translation-invariant kernels}, which is characterized by the decay rate of the Fourier transformation of the kernel, \highlight{in the bandit setting.} We derive an adaptivity lower bound, proving that it is impossible to simultaneously achieve optimal cumulative regret in a pair of RKHSs with different regularities. To verify the tightness of this lower bound, we show that an existing bandit model selection algorithm applied with minimax non-adaptive kernelised bandit algorithms matches the lower bound in dependence of $T$, the total number of steps, except for log factors. By \highlight{filling in the regret bounds for adaptivity between RKHSs}, we connect the statistical difficulty for adaptivity in continuum-armed bandits in three fundamental types of function spaces: RKHS, Sobolev space, and H\"older space.
\end{abstract}

\section{Introduction}\label{sec: intro}
\highlight{We consider the problem of continuum-armed bandit, a sequential decision-making problem,} where the goal of a learning algorithm is the optimization of a black-box reward function, by selecting query points and eliciting rewards from the underlying function sequentially. The performance of algorithms is measured by the cumulative regret, which is the sum of differences between the maximum of the underlying function and the reward incurred by the learning algorithm across all the time steps. Optimizing cumulative regret requires from the learning algorithms a delicate exploration-exploitation tradeoff. The learning algorithm needs to simultaneously exploit high-reward regions and explore uncertain regions. 
The exploration-exploitation tradeoff is often dependent on \highlight{complexity} of the function space to which the reward function belongs. 
In most theoretical analyses of cumulative regret of algorithms, \highlight{complexity} of the function space is assumed to be known.  
Many studies use this assumption to design algorithms that achieve minimax optimal performance when the function space is known, for example, for linear functions~\citep{dani2008stochastic,abbasi2011improved}, functions residing in reproducing kernel Hilbert spaces (RKHS)~\citep{valko2013finite,janz2020bandit} or drawn from Gaussian Processes~\citep{srinivas2009gaussian, chowdhury2017kernelized}, as well as neural networks functions~\citep{zhou2020neural,kassraie2021neural}.

However, despite the theoretical convenience, it is not always realistic to assume access to the underlying function space. For this reason, some recent works in continuum-armed bandits have started to develop adaptive algorithms for when the function space is misspecified (see Section~\ref{sec: related work} for a summary of related works). The best possible performance of adaptive algorithms 
is equivalent to algorithms that know the parameter. An algorithm that simultaneously achieves minimax cumulative regret rates without access to the parameter is said to achieve minimax adaptivity. While minimax adaptivity is possible under the simple regret minimization setting, recent works have proved that it is not always achievable for cumulative regret minimzation~\citep{locatelli2018adaptivity}, due to the exploration-exploitation dilemma.

When the reward function resides in an RKHS induced of some kernel function~$\kernel$, the problem also is referred to as kernelised bandit. In this work, we focus on an important and open problem in adaptivity in kernelised bandits,  precisely, adaptivity to unknown kernel regularity.
Recently, there has been a line of theoretical works that study adaptivity under the kernelised bandit setting, such as adaptivity to the length scale parameter and the RKHS norm~\citep{berkenkamp2019no} for a given kernel, and adaptivity to $\epsilon$-misspecification, where the underlying function is $\epsilon$-approximated by functions in an RKHS~\citep{bogunovic2021misspecified}. 
To the best of our knowledge, the work of \citet{kassraie2022meta} is most closely related to our setting. They consider the setting where the underlying function lies in an RKHS but the kernel is unknown. \citet{kassraie2022meta} assume that the kernel is a sparse combination of known base kernels and design algorithms with sublinear regret guarantees under this assumption. A more detailed discussion of the prior works on adaptivity in kernelised bandit is continued in Section~\ref{sec: related work}. 

Adaptivity of any algorithm with respect to the explicit regularity of the kernel function $\kernel$, however, remains an unsolved problem. We characterize the regularity of $\kernel$ using a general notion: the decay rate of the Fourier transform of $\kernel$ (Section~\ref{sec: problem setting}). In contrast to, for example, adapting to the RKHS norm which measures the smoothness of a function with respect to a fixed kernel, we adapt to the regularity of kernels which controls the differentiability of functions in the associated RKHSs. 
The kernel regularity thus determines the statistical complexity of the associated learning problem in a more fundamental way. \highlight{In estimation, optimization (including simple regret minimization)~\citep{bull2011convergence} and cumulative regret minimization tasks~\citep{srinivas2009gaussian, kandasamy2019myopic, janz2020bandit}, the kernel regularity affects the minimax regret rate through exponential dependence on $T$, {as opposed to the RKHS norm which only affects the rate polynomially}}. We focus on this fundamental problem of how well bandit algorithms can adapt to the unknown kernel regularity.

The contributions of this work are summarized as follows:
\begin{enumerate}
    \item We derive the first lower bound on adaptivity to kernel regularity, expressed in terms of the kernel Fourier transformation decay rate, for kernelised bandit problems. This lower bound serves as an impossibility result, that no algorithms can simultaneously achieve minimax optimal performance in RKHSs with different regularities.
    \item For RKHSs of the \matern~family~\citep{matern1960spatial} of kernels, we prove that CORRAL~\citep{pacchiano2020model}, an existing model selection algorithm, applied with (non-adaptive) minimax optimal kernelised bandit algorithms, matches the adaptivity lower bound\footnote{Except for log factors.} in the dependence on $T$. In contrast, another model selection algorithm RBBE~\citet{pacchiano2020regret} does not match the lower bound.
    \item By comparing the upper and lower bounds derived by this work to existing adaptivity results, we \highlight{draw connections between} the statistical difficulty of adaptivity in three types of function spaces: RKHSs, Sobolev spaces, and H\"older spaces. 
\end{enumerate}
\highlight{A summary of our results amongst existing results can be found in Table~\ref{table: results summary}. Our main results (Section~\ref{sec: lower bound}) are stated for more general kernels but in Table~\ref{table: results summary} only results with Mat\'ern-$\nu$ (Definition~\ref{def: matern kernels}) kernels are shown as an example, for clear comparisons. For adaptive results, the values $\tilde{\nu}$ and $R$ are input parameters to the adaptive algorithms, such that they achieve (non-adaptive) minimax regret rates if the true parameter satisfies $\nu = \tilde{\nu}$ (for Mat\'ern RKHS) or $\alpha = R$ (for H\"older spaces). We use $\tilde{O}$ to denote the asymptotic regret rate of $T$. $\tilde{O}$ omits dependence on other parameters such as the radius of the RKHS ball $B$ (Section~\ref{sec: problem setting}), any constant factors, and $log$ factors of $T$ unless otherwise specified.} 
\begin{table*}[t]
\caption{Summary of Our Results and Comparison to Existing Results }
\centering
\label{table: results summary}
\begin{tabular}{|ll|l|l|}
\hline
\multicolumn{2}{|c|}{Regret} &
  RKHS of Matern-$\nu$: $\rkhs_{\kernel,\nu}(\domain)$ $=\sobolevspace^{\nu+\frac{d}{2}}(\domain)$ &
  H\"older Space: $\holderspace^{\alpha}(\domain)$ \\ \cline{3-4}
\multicolumn{2}{|l|}{} &
  \multicolumn{2}{c|}{Relationship: $\rkhs_{\kernel,\nu}(\domain) =\sobolevspace^{\nu+\frac{d}{2}}(\domain) \subset \holderspace^{\alpha=\nu}(\domain)$} \\ \hline
\multicolumn{2}{|l|}{Non-adaptive minimax} &
  \begin{tabular}[c]{@{}l@{}}$\tilde\Theta(T^{\frac{\nu+\paramdim}{2\nu+\paramdim}})$\\ \citet{valko2013finite},\citet{scarlett2017lower}\end{tabular} &
  \begin{tabular}[c]{@{}l@{}}
  $\tilde{\Theta}\left(T^{\frac{\paramdim+\alpha}{\paramdim + 2\alpha}}\right)$\\\citet{liu2021smooth},\citet{wang2018optimization}\end{tabular} \\ \hline
\multicolumn{1}{|l|}{\multirow{2}{*}{\begin{tabular}[c]{@{}l@{}}Adaptive  \\ ($\paramdim = 1$)\end{tabular}}} &
  Upper bound &
  \begin{tabular}[c]{@{}l@{}}
$\tilde{O}\left(T^\frac{1+2\tilde{\nu}+\tilde{\nu}\nu}{(1+2\tilde{\nu})(1+\nu)}\right)$, for $\nu<\tilde{\nu}$ \\ $\tilde{\nu}$: Input to adaptive algorithm. \\ This work (Theorem~\ref{thm: CORRAL with kernelised bandits})\end{tabular} &
  \begin{tabular}[c]{@{}l@{}}
  $\tilde{O}\left(T^\frac{1+2R+R\alpha}{(1+2R)(1+\alpha)}\right)$, for $\alpha<R$ \\$R$: Input to adaptive algorithm. \\
  \citet[Theorem 8]{liu2021smooth} \end{tabular}\\ \cline{2-4} 
\multicolumn{1}{|l|}{} &
  Lower bound &
 \begin{tabular}[c]{@{}l@{}}$\tilde{\Omega}\left(T^\frac{1^2+2\tilde{\nu}+\tilde{\nu}\nu}{(1+2\tilde{\nu})(1+\nu)}\right)$, for $\nu<\tilde{\nu}$ \\
 This work (Corollary~\ref{thm: lower bound with Matern kernel}) \end{tabular}&
  \begin{tabular}[c]{@{}l@{}}
 $\tilde{\Omega}\left(T^\frac{1^2+2R\alpha+R\alpha}{(1+2R)(1+\alpha)} \right)$, for $\alpha<R\leq 1$ \\ \citet[Theorem 3]{locatelli2018adaptivity}\end{tabular} \\ \hline
\end{tabular}
\end{table*}

\paragraph{Relationship with Neural Bandits.} The kernelised bandit formulation has implications for optimization of more complex functions under the bandit setting as well, such as neural network functions. The Neural Tangent Kernel (NTK) literature~\citep{jacot2018neural, arora2019exact,lee2019wide, bietti2020deep, chen2020deep} argue that over-parameterized neural networks can be approximated by functions in an RKHS of some composite kernel named the Neural Tangent Kernel, given that the network is sufficiently wide and the training is lazy~\citep{chizat2019lazy}.
Recent advances in this field establish interesting connections between the structure of a neural network and the regularity of its corresponding NTK. 
For example, \citet{vakili2021uniform} consider wide fully-connected neural networks with activation functions with smoothness $s$. The show that the RKHS of the NTK of such a network is norm equivalent to, or embedded in, the RKHS of a \matern-$\nu$ kernel with $\nu=s-\frac{1}{2}$. The value of $\nu$ dictates the differentiability of functions in the RKHS. Hence, the neural network functions considered in~\citet{vakili2021uniform} \highlight{are approximated by} functions in the RKHS of a \matern-$\nu$ kernel.\footnote{The result in Corollary 3 of~\citet{bietti2020deep} can be thought as a special case of when $s=1$, since the activation function considered is ReLU.} 
These connections imply that adaptivity to the kernel regularity {in kernelised bandits} can potentially 
be extended to adaptivity to the structure of neural networks (such as smoothness of the activation functions considered in~\citet{vakili2021uniform}) {in neural network bandits}.

The rest of the paper is structured as follows. In Section~\ref{sec: related work}, we discuss relevant prior works. In Section~\ref{sec: problem setting} we state the problem formulation. In Section~\ref{sec: main result} we present the main result of this paper, a lower bound on adaptivity to kernel regularity. In Secion~\ref{sec: adaptive upper bounds} we discuss upper bounds of existing adaptive algorithms and whether they match the lower bound. In Section~\ref{sec: connect with Holder adaptivity} we connect adaptivity to kernel regularity and adaptivity to H\"older exponents. Finally, we discuss the limitations and future directions of our work in Section~\ref{sec: discussion}.

\section{Related Work}\label{sec: related work}
\paragraph{Kernelised Bandit}
In kernelised bandit problems, the reward function lies in a reproducing kernel Hilbert space (RKHS). This problem has been studied by many previous works, under the assumption that the kernel and other parameters (such as the upper bound on the function's RKHS norm) are known. \citet{valko2013finite} take a frequentist approach and design a SuperKernelUCB algorithm, based on applying the kernel trick to the (Sup)LinREL and (Sup)LinUCB algorithms~\citep{auer2002using, chu2011contextual}. The same technique is later used in extension to neural networks by~\citet{kassraie2021neural}, who propose SupNTKUCB which works with neural networks. SupKernelUCB achieves $\tilde{O}(\sqrt{T\gamma_T})$ regret where $\gamma_T$ is the maximum information gain between $T$ total observations and the underlying function. For common kernels such as the Mat\'ern-$\nu$ kernels, this regret is minimax optimal in its dependence on $T$ (except for log factors), by the lower bound provided later in~\citet{scarlett2017lower}. However, SupKernelUCB relies on a batching technique that makes the algorithm performs poorly in practice~\citep{calandriello2019gaussian,janz2020bandit}.
In the (parallel) Bayesian setting (the Gaussian Process bandit problem), the underlying function is assumed to be drawn from a GP. GP-UCB algorithm~\citep{srinivas2009gaussian, chowdhury2017kernelized,janz2020bandit} achieves the same regret bound as SupKernelUCB $\tilde{O}(\sqrt{T\gamma_T})$ in the GP setting but becomes suboptimal (sometimes with linear regret rate) in the RKHS setting with a $\tilde{O}(\gamma_T\sqrt{T})$ regret~\citep{vakili2021open}. 

\paragraph{Adaptivity in Kernelised Bandit}
This problem we consider falls under the scope of model misspecification in bandit setting, which has been studied for linear functions and H\"older-smooth functions~\citep{du2019good,foster2019model, lattimore2020learning, zhu2021pareto, locatelli2018adaptivity, liu2021smooth}. For H\"older functions, in particular, \citet{locatelli2018adaptivity, hadiji2019polynomial} provide a lower bound indicating that it is impossible to achieve minimax adaptivity to the H\"older exponent. In this work, we convey a similar message with respect to the regularity of RKHS.  
For adaptivity in kernelised bandit problems, \citet{berkenkamp2019no} propose an algorithm with sublinear regret for when the lengthscale parameter (Definition~\ref{def: matern kernels}) and upper bound on the RKHS norm (equation~\ref{eq: rkhs ball}) are unknown. \citet{neiswanger2021uncertainty} develop robust confidence sequence under the Bayesian framework to use in adaptive methods for GP optimization when the prior mean and/or covariance parameters are unknown. They conduct simulations for optimization on functions drawn from GPs but do not provide explicit regret analyses. \citet{bogunovic2021misspecified} develop methods for $\epsilon$-misspecification, where the underlying function can be arbitrarily non-smooth, but is approximated by functions in a (known) RKHS with an $\epsilon$-error in infinity norm. They prove a $\Omega(\epsilon T)$ lower bound for this setting and derived a matching upper bound. 
However, note that between two function spaces, the approximation error is a constant value and does not depend on $T$. Since a constant $\epsilon$ means an inevitable linear regret ($\Omega(\epsilon T)$), the $\epsilon$-misspecification setting~\citep{bogunovic2021misspecified} does not directly apply to \highlight{adaptation to the kernel parameters}. In the Meta-learning regime, \citet{kassraie2022meta} consider RKHS with unknown kernels that are sparse combinations of known base kernels and proves that a Meta-learned kernel can yield sublinear regret. However, since the kernel is Meta-learned, it relies on offline tasks as training data. We do not assume the availability of offline data in the (fully online) bandit setting. 

To summarize, prior works (to the best of our knowledge) only consider parameters that influence the regret rate in \highlight{polynomial factors} while our focus is on the regularity parameter which affects the rate in the exponent of $T$. 
\paragraph{General Model Selection for Bandit}
Another line of recent works on model selection in bandit settings makes less stringent assumptions on the underlying function. These works consider algorithms based on a ``corralling'' mechanism, where a master algorithm ``corrals'' several base algorithms as arms and each base algorithm selects actions with different principles. The base algorithms usually assume different function spaces. \citet{agarwal2017corralling, pacchiano2020model} propose an algorithm named CORRAL where the master algorithm is based on online mirror descent. In certain cases, CORRAL performs comparably to the best base algorithm running standalone.\footnote{\citet{arora2021corralling} also study the problem of corralling bandit algorithms in the stochastic setting, but only finite-armed case is considered.} 
\citet{pacchiano2020regret} propose the Regret Bound Balancing and Elimination (RBBE) which uses a (simpler) stochastic master algorithm and an additional base-algorithm-elimination step. We refer readers to Section~\ref{sec: adaptive upper bounds} for details about these two methods and their performance in our problem setting. 

\section{Problem Setting}\label{sec: problem setting}
\paragraph{Problem Formulation}
Consider the problem of zeroth-order black-box optimization under bandit feedback. The learner interacts with a stochastic environment in a sequential manner. This problem is also formulated as stochastic continuum-armed bandit. At time step $t\in\{1,\dots, T\}$, the learner chooses an action $\action_t$ from the compact domain $\domain = [0,1]^\paramdim$, and receives a reward $\reward_t$. The reward is a noisy observation of the underlying reward function $\func: \domain \rightarrow \sR$:
\begin{equation}\label{eq: bandit problem}
    \reward_t = \func(\action_t) + \noise_t,
\end{equation}
where the noise variable $\noise_t$ follows a zero-mean \highlight{sub-Gaussian distribution} (see Theorem~\ref{thm: lower bound rkhs}). 
The optimization objective is the cumulative (pseudo-)regret defined as follows.  
\begin{equation}\label{eq: cumulative regret}
    R_T = \sum_{t=1}^T \func(\action^*) - \func(\action_t),
\end{equation}
where $\action^*$ is the global maximizer of $\func$, unknown to the learner. \highlight{Results in this paper are expressed in expected cumulative (pseudo-)regret $\E[R_T]$, where the expectation is taken over the randomness of $\{\action_t\}_{t=1\dots T}$.}

\paragraph{Kernelised Bandit}
We consider the setting where $\func$ is square-integrable and resides in an RKHS $\rkhs_\kernel$ of a symmetric, positive-definite kernel $\kernel: \sR^\paramdim \times \sR^\paramdim \rightarrow \sR$. The RKHS is unique given the kernel~\citep[Theorem 12.11]{wainwright2019high}. We denote the RKHS of $\kernel$ on domain $\domain$ as $\rkhs_\kernel(\domain)$. 
In this work, we restrict our attention to \emph{translation-invariant} kernels, precisely, kernels that satisfy the following: $\kernel(\action, \action') = \transinvkernel(x-x')$, for some function $\transinvkernel: \sR^\paramdim \rightarrow \sR$.
For a translation-invariant kernel, the regularity of functions in the RKHS is captured by the Fourier transform of the kernel. \highlight{Precisely, we have the following definition when the domain is $\sR^\paramdim$.} Let $\hat g(\omega)$ denote the Fourier transformation~\citep{wendland2004scattered, williams2006gaussian} of a function $g$ as $\forall \omega\in\sR^\paramdim$.
\begin{align}
    \rkhs_\kernel\highlight{(\sR^\paramdim)} = \{&\func\in \lspace_2(\sR^\paramdim) \cap C(\sR^\paramdim): \label{eq: definition rkhs in Fourier}\\
    &\Vert \func\Vert_{\rkhs_\kernel}: = (2\pi)^{-\paramdim/2}\int_{\sR^\paramdim}\frac{\vert\hat{\func}(\omega)\vert^2}{\hat{\kappa}(\omega)} d\omega<\infty\}.\nonumber
\end{align}
\highlight{When the domain $\domain$ is a subset of $\sR^\paramdim$, $\hat{\kappa}$ still captures the regularity of $\rkhs_{\kernel}(\domain)$, via a norm equivalency result that holds as long as $\domain$ has a Lipschitz boundary. Details can be found in Section~\ref{sec: norm equivalency}, Lemma~\ref{lemma: norm equivalency between rkhs and Sobolev}.}
We write $\Vert f\Vert_{\kernel} \stackrel{\triangle}{=} \Vert f\Vert_{\rkhs_\kernel(\domain)}$ for simplicity.  
We apply the common assumption~\citep{srinivas2009gaussian,valko2013finite} that the RKHS norm of $\func$ is upper bounded by a value $B, 0<B<\infty$:   
\begin{equation}\label{eq: rkhs ball}
    \func\in \rkhs_{\kernel}(\domain, B):= \{\func: \func\in\rkhs_{\kernel}, \Vert \func \Vert_\kernel\leq B\}.
\end{equation}
\highlight{We refer to $\rkhs_{\kernel}(\domain, B)$ as a ball in the RKHS with radius~$B$. } 

\section{Main Result: Adaptivity Lower Bound}\label{sec: main result}
In this section, we present the main result, a lower bound on adaptivity to the regularity of kernel (Theorem~\ref{thm: lower bound rkhs}).
The regularity of a translation-invariant kernel is expressed as the decay rate of its Fourier transformation (\eqref{eq: definition kernel Fourier decay}). 
We next \highlight{instantiate this idea with a norm equivalency result between an RKHS and a Sobolev space. The norm equivalency result is dependent on the kernel Fourier decay rate}. The proof of Theorem~\ref{thm: lower bound rkhs}, in turn, relies on this norm equivalency as well. 
\subsection{Norm Equivalency Between RKHS and Sobolev Space}\label{sec: norm equivalency}
Consider integer-order Sobolev space $\sobolevspace^{\sobolevint,p}(\domain)$ where $\sobolevint, p$ are integers greater or equal to $1$. 
We define the following notions for a multi-index vector $\multiidx=(\multiidx_1\dots \multiidx_\paramdim)$: $\vert \multiidx\vert {=} \multiidx_1 + \dots + \multiidx_\paramdim$, $\multiidx! = \multiidx_1!\dots \multiidx_\paramdim!$ and $\action^\multiidx = \action_1^{\alpha_1}\dots \action_\paramdim^{\multiidx_\paramdim}$. Let $D^{(\multiidx)} = \frac{\partial^{\vert \multiidx\vert}}{\partial \action_1^{\multiidx_1}\dots \partial \action_\paramdim^{\multiidx_\paramdim}}$ denote the \highlight{multivariate mixed partial weak derivative}. The Sobolev space and corresponding Sobolev norm ($\Vert \cdot \Vert_{m,p,\domain}$) are defined as follows.
\begin{align}
    &\sobolevspace^{\sobolevint,p}(\domain) = \{\func\in \lspace_p(\domain): D^{(\multiidx)}f\in \lspace_p(\sR^d), \forall \vert \multiidx \vert\leq m\}, \label{eq: definition Sobolev space}\\
    &\qquad \Vert \func \Vert_{m,p,\domain} := \left(\sum_{\vert \multiidx \vert\leq \sobolevint}\int \vert D^{(\multiidx)}\func(x)\vert^p dx\right)^\frac{1}{p}. \label{eq: definition Sobolev norm}
\end{align}
We refer to $\sobolevint$ as the order of the Sobolev space.   
Furthermore, define the $j$-th order  seminorm~\citep[Definition 4.11]{adams2003sobolev} $\vert{\cdot}\vert_{j,p,{\domain}}$ with integer $j\leq m$, 
which is the sum of $\lspace_p$ norm of its $j$-th weak derivatives.
\begin{equation}\label{eq: def seminorm Sobolev}
    \vert \func \vert_{j,p,\domain} = \left(\sum_{\vert \multiidx\vert=j}\int\vert D^{(\multiidx)}\func(x)\vert^p dx\right)^\frac{1}{p}.
\end{equation}
\highlight{In correspondence to the RKHS ball (equation~\ref{eq: rkhs ball})}, we define a Sobolev ball with radius $\sobolevradius$ as the set of functions whose $\sobolevint$-th order \emph{seminorm} are upper bounded by $\sobolevradius$.
\begin{equation}\label{eq: definition Sobolev ball}
    \sobolevspace^{\sobolevint,p}(\domain, \sobolevradius) = \{\func\in \sobolevspace^{\sobolevint,p}(\domain): \vert \func \vert_{\sobolevint,p,\domain} \leq \sobolevradius\}.
\end{equation}
When $p=2$, the Sobolev space is equivalent to the RKHS of a translation-invariant kernel $k$. This connection plays an important role in the analysis. We consider only Sobolev spaces with $p=2$, and hence abbreviate $\sobolevspace^{\sobolevint}(\domain) \stackrel{\triangle}{=} \sobolevspace^{\sobolevint, 2}(\domain)$. The precise norm equivalency is introduced in the following lemma. 
\begin{lemma}\citet[Corollary 10.48]{wendland2004scattered}\label{lemma: norm equivalency between rkhs and Sobolev}
    Let $\kernel: \sR^\paramdim \times \sR^\paramdim \rightarrow \sR$ be a translation-invariant kernel function such that $\kernel(\cdot, \cdot) = \transinvkernel(\cdot-\cdot)$ for $\transinvkernel \in \lspace_1(\sR^\paramdim)$. \highlight{Suppose $\Omega\in \sR^\paramdim$ is a domain with Lipschitz boundary.} Suppose $\hat{\kappa}$ has the following polynomial decay rate of $\fourierrate$, for $\fourierrate>\paramdim/2, \fourierrate\in\mathbb{N}$,
    \begin{align}
        &c_1(1+\Vert\omega\Vert^2_2)^{-\fourierrate}\leq \hat{\kernel}(\omega)\leq c_2(1+\Vert\omega\Vert^2_2)^{-\fourierrate}, \forall w\in\sR^\paramdim,\label{eq: definition kernel Fourier decay}
    \end{align}
    for some constants $0<c_1\leq c_2$. 
    Then, the associated RKHS $\rkhs_\kernel(\Omega)$ is norm equivalent to the Sobolev space $W^{\sobolevint}(\Omega)$ with $\sobolevint = \fourierrate$. 
\end{lemma}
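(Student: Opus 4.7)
Since the statement is cited from Wendland's book (Corollary 10.48), the plan is to sketch the standard route: first establish norm equivalency on $\sR^\paramdim$ through the Fourier characterization, then lift it to the Lipschitz domain $\Omega$ via a Sobolev extension operator.

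First I would exploit the Fourier characterization on the full space. By Parseval/Plancherel, the Sobolev space $\sobolevspace^{\sobolevint}(\sR^\paramdim)$ admits the equivalent Fourier norm
\begin{equation*}
\Vert \func \Vert_{\sobolevspace^{\sobolevint}(\sR^\paramdim)}^2 \;\asymp\; \int_{\sR^\paramdim} \bigl(1+\Vert \omega\Vert_2^2\bigr)^{\sobolevint} \vert \hat{\func}(\omega)\vert^2\, d\omega,
\end{equation*}
with constants depending only on $\sobolevint$ and $\paramdim$. Combining the two-sided bound on $\hat{\kappa}$ in equation~\ref{eq: definition kernel Fourier decay} with $\sobolevint=\fourierrate$ and the RKHS norm in equation~\ref{eq: definition rkhs in Fourier}, one gets
\begin{equation*}
\tfrac{(2\pi)^{-\paramdim/2}}{c_2}\!\int \!\bigl(1+\Vert\omega\Vert_2^2\bigr)^{\fourierrate}\vert\hat{\func}\vert^2 d\omega \;\leq\; \Vert\func\Vert_{\rkhs_\kernel}^2 \;\leq\; \tfrac{(2\pi)^{-\paramdim/2}}{c_1}\!\int\!\bigl(1+\Vert\omega\Vert_2^2\bigr)^{\fourierrate}\vert\hat{\func}\vert^2 d\omega,
\end{equation*}
so $\rkhs_\kernel(\sR^\paramdim)$ and $\sobolevspace^{\fourierrate}(\sR^\paramdim)$ are norm equivalent (as sets and as normed spaces). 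The assumption $\fourierrate>\paramdim/2$ ensures membership in $\lspace_2\cap C$ via Sobolev embedding, matching the definition~\ref{eq: definition rkhs in Fourier}.

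Next I would transfer the equivalence from $\sR^\paramdim$ to $\Omega$ using a bounded extension operator. Because $\Omega$ has a Lipschitz boundary, the Stein (or Calderón) extension theorem supplies a linear map $E:\sobolevspace^{\fourierrate}(\Omega)\to \sobolevspace^{\fourierrate}(\sR^\paramdim)$ with $(E\func)\vert_{\Omega}=\func$ and $\Vert E\func\Vert_{\sobolevspace^{\fourierrate}(\sR^\paramdim)}\leq C_{\Omega,\fourierrate}\Vert \func\Vert_{\sobolevspace^{\fourierrate}(\Omega)}$. On the RKHS side, the natural norm on $\Omega$ is the restriction/quotient norm
\begin{equation*}
\Vert \func\Vert_{\rkhs_\kernel(\Omega)} \;=\; \inf\bigl\{\Vert g\Vert_{\rkhs_\kernel(\sR^\paramdim)} : g\in \rkhs_\kernel(\sR^\paramdim),\ g\vert_{\Omega}=\func\bigr\},
\end{equation*}
which coincides with the reproducing kernel norm on $\Omega$ when $\transinvkernel\in\lspace_1$. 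The restriction direction ($\rkhs_\kernel(\Omega) \hookrightarrow \sobolevspace^{\fourierrate}(\Omega)$) is then immediate from the full-space equivalence plus the trivial bound $\Vert g\vert_\Omega\Vert_{\sobolevspace^{\fourierrate}(\Omega)}\leq \Vert g\Vert_{\sobolevspace^{\fourierrate}(\sR^\paramdim)}$ applied to an $\varepsilon$-optimal extension $g$. The reverse direction uses the extension operator: given $\func\in\sobolevspace^{\fourierrate}(\Omega)$, $E\func\in\rkhs_\kernel(\sR^\paramdim)$ by the full-space equivalence, and its restriction to $\Omega$ equals $\func$, certifying $\func\in \rkhs_\kernel(\Omega)$ with
$\Vert \func\Vert_{\rkhs_\kernel(\Omega)}\leq \Vert E\func\Vert_{\rkhs_\kernel(\sR^\paramdim)}\lesssim \Vert \func\Vert_{\sobolevspace^{\fourierrate}(\Omega)}$.

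The main obstacle is the second step — the construction (or invocation) of a Sobolev extension operator on a Lipschitz domain. This is not elementary; it is the standard place where the Lipschitz-boundary hypothesis enters and is precisely what fails without geometric regularity on $\partial\Omega$. In a proof intended for this paper I would simply cite Stein's extension theorem (or Wendland's treatment in Chapter 10) rather than reconstruct it, and verify that the constants depend only on $\Omega$, $\fourierrate$, $\paramdim$, $c_1$, $c_2$, so that the equivalence constants between $\Vert\cdot\Vert_{\kernel}$ and $\Vert\cdot\Vert_{\fourierrate,2,\Omega}$ are uniform over the RKHS/Sobolev balls used later in the lower-bound construction.
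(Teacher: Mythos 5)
Your proposal is correct, but note that the paper does not prove this lemma at all: it is imported verbatim as \citet[Corollary 10.48]{wendland2004scattered}, so there is no in-paper argument to compare against. Your sketch follows exactly the route the cited source takes --- Plancherel-based equivalence of $\Vert\cdot\Vert_{\rkhs_\kernel}$ and the Bessel-potential norm on $\sR^\paramdim$ (Wendland's Theorem 10.12), then transfer to the Lipschitz domain via the restriction norm on the RKHS side and a Stein/Calder\'on extension operator on the Sobolev side --- and you correctly isolate the extension theorem as the only non-elementary ingredient and the place where the Lipschitz hypothesis is used, so the proposal is a faithful reconstruction of the result the paper relies on.
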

Having established the equivalency between RKHS and Sobolev spaces, we further introduce some notions to quantify the relationship between Sobolev seminorm (which is the radius of Sobolev balls) and RKHS norm in the following lemma. 
\begin{lemma}\label{lemma: norm equivalency between semi Sobolev norm and RKHS norm}
Suppose that $\sobolevint$ is a positive integer larger than $\paramdim/2$. Let $\Omega$ be a finite-width domain with Lipschitz boundary. \highlight{Let $\sobolevspace_0^{\sobolevint, p}(\Omega)$ denote the closure of $C^\infty_0(\Omega)$ (set of functions that have compact support in $\Omega$ and, together with their infinite order of partial derivatives, are continuous) in $\sobolevspace^{\sobolevint,p}(\Omega)$~\citet{adams2003sobolev}. }
Then, the $m$-th Sobolev seminorm of $\func$ can be bounded by its RKHS norm with respect to a translation-invariant kernel $\kernel$ with Fourier decay rate $\sobolevint$. Precisely, 
\begin{equation}\label{eq: norm equivalency between RKHS norm and Sobolev semi-norm}
    \normequivl \vert \func \vert_{\sobolevint, 2} \leq \Vert \func \Vert_{\rkhs_\kernel} \leq \normequivu \vert \func \vert_{\sobolevint, 2},
\end{equation}
for some constants $0<\normequivl<\normequivu$.
\end{lemma}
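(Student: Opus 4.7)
The plan is to reduce the claimed inequality to the already-stated norm equivalency between the RKHS and the full Sobolev space (Lemma~\ref{lemma: norm equivalency between rkhs and Sobolev}) and a Poincar\'e--Friedrichs inequality on $\sobolevspace_0^{m,2}(\Omega)$ that controls the lower-order seminorms by the top-order seminorm. Concretely, by Lemma~\ref{lemma: norm equivalency between rkhs and Sobolev}, there exist constants $0<\tilde c_1\leq \tilde c_2$ such that
\[
\tilde c_1 \Vert f\Vert_{m,2,\Omega} \;\leq\; \Vert f\Vert_{\rkhs_\kernel} \;\leq\; \tilde c_2 \Vert f\Vert_{m,2,\Omega}, \qquad \forall f\in \rkhs_\kernel(\Omega).
\]
Since the full Sobolev norm trivially dominates its top-order seminorm, i.e., $\vert f\vert_{m,2,\Omega}\leq \Vert f\Vert_{m,2,\Omega}$, the lower inequality of the lemma follows immediately with $\normequivl = \tilde c_1$. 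So the real content is the reverse direction, i.e., showing $\Vert f\Vert_{m,2,\Omega}\leq C\vert f\vert_{m,2,\Omega}$ on $\sobolevspace_0^{m,2}(\Omega)$.

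For the upper inequality, I would invoke the zero-trace Poincar\'e--Friedrichs inequality on a bounded Lipschitz domain~\citep[Thm.~6.30 and the subsequent discussion]{adams2003sobolev}: for $g\in \sobolevspace_0^{1,2}(\Omega)$ one has $\Vert g\Vert_{L^2(\Omega)}\leq C_\Omega\vert g\vert_{1,2,\Omega}$, with $C_\Omega$ depending only on the diameter of $\Omega$. Applying this iteratively to $g = D^{(\alpha)}f$ for $\vert\alpha\vert=j,\dots,m-1$ (which still lie in $\sobolevspace_0^{1,2}(\Omega)$ because $f\in\sobolevspace_0^{m,2}(\Omega)$ is a limit in the $\sobolevspace^{m,2}$-norm of functions in $C^\infty_0(\Omega)$, so all derivatives of order at most $m-1$ have vanishing trace) yields constants $C_{j,\Omega}$ with
\[
\vert f\vert_{j,2,\Omega} \;\leq\; C_{j,\Omega}\,\vert f\vert_{m,2,\Omega}, \qquad 0\leq j\leq m-1.
\]
Summing over $j$ produces $\Vert f\Vert_{m,2,\Omega}^2\leq C^2\,\vert f\vert_{m,2,\Omega}^2$ for a constant $C$ that depends only on $m$ and the diameter of $\Omega$. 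Combining with the upper half of Lemma~\ref{lemma: norm equivalency between rkhs and Sobolev} gives $\Vert f\Vert_{\rkhs_\kernel}\leq \tilde c_2 C\,\vert f\vert_{m,2,\Omega}$, so one can take $\normequivu = \tilde c_2 C$.

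The main obstacle, and the reason the $\sobolevspace_0$ assumption appears in the hypothesis, is precisely the Poincar\'e--Friedrichs step: without the vanishing-trace condition encoded in $\sobolevspace_0^{m,2}(\Omega)$, nonzero constants (and, more generally, any polynomial of degree strictly less than $m$) have vanishing $m$-th seminorm but nontrivial RKHS/full-Sobolev norm, so no such inequality can hold in general. The finite-width (bounded) assumption on $\Omega$ is what makes every $C_{j,\Omega}$ finite, and the Lipschitz-boundary assumption is what licenses Lemma~\ref{lemma: norm equivalency between rkhs and Sobolev} and the Adams reference. Beyond these two ingredients, everything else is bookkeeping, and I would cite the Wendland corollary and the Adams Poincar\'e inequality rather than reprove them.
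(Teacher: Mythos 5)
Your proposal is correct and follows essentially the same route as the paper: combine the full-norm equivalence $\tilde c_1\Vert f\Vert_{m,2,\Omega}\leq\Vert f\Vert_{\rkhs_\kernel}\leq\tilde c_2\Vert f\Vert_{m,2,\Omega}$ from Lemma~\ref{lemma: norm equivalency between rkhs and Sobolev} with the equivalence of the top-order seminorm and the full Sobolev norm on $\sobolevspace_0^{\sobolevint,2}(\Omega)$ for a finite-width domain, which the paper simply cites from \citet[6.26]{adams2003sobolev} whereas you unpack it via the iterated Poincar\'e--Friedrichs inequality (its standard proof). Your added remark on why the $\sobolevspace_0$ hypothesis is indispensable (polynomials of degree below $\sobolevint$ kill the seminorm but not the norm) is accurate; the only nit is that for a finite-width domain the Poincar\'e constant is controlled by the width rather than the diameter.
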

The constants $\normequivl, \normequivu$ are used globally in this work and appear in the lower bound in Section~\ref{sec: lower bound}. The proof of Lemma~\ref{lemma: norm equivalency between semi Sobolev norm and RKHS norm} can be found in Appendix~\ref{sec: proof of norm equivalence between RKHS and Sobolev seminorm}.

\subsection{Lower Bound on Adaptivity to Kernel Regularity}\label{sec: lower bound}
Theorem~\ref{thm: lower bound rkhs} presents our lower bound for adapting between a pair of RKHSs of different (kernel) regularities. An intuitive interpretation of the theorem is as follows. Consider a nested pair of balls in two RKHSs. Suppose both kernels satisfy the conditions in Lemma~\ref{lemma: norm equivalency between rkhs and Sobolev} but with different Fourier decay rates: $\sobolevint_1 \in \sN$ and $\sobolevint_2 \in \sN$ such that $0<\sobolevint_1 < \sobolevint_2$. If an algorithm that is oblivious to the true regularity value somehow achieves a small (for example, minimax optimal) regret on all functions inside the (smoother) RKHS ball with parameter $\sobolevint_2$, this algorithm will suffer a price of larger (suboptimal) regret on at least one function inside the (rougher) RKSH ball with parameter $\sobolevint_1$. 
For the lower bound analysis, we consider $\paramdim=1$ and leave the extension of the lower bound to $\paramdim>1$ as a future direction (Section~\ref{sec: discussion}). 
\begin{theorem}\label{thm: lower bound rkhs}
    Consider the problem setting in Section~\ref{sec: problem setting} with noises $\{\noise_t\}_{t=1\dots T}$ that are $\frac{1}{4}$-subgaussian. Let $\regupperbound$ be a positive number, let $\sobolevint_1, \sobolevint_2$ be two positive integers that satisfy $\sobolevint_1<\sobolevint_2$. There exist two positive values $B_1$ and $B_2$, such that the following statement is true.
    Consider an algorithm that achieves in the RKHS of a kernel $\kernelfourier{\sobolevint_2}$ with Fourier decay rate $\sobolevint_2$ the following regret upper bound. 
    \begin{equation}
    \sup_{\func\in\rkhs_{\kernelfourier{\sobolevint_2}}(\domain, B_2)}\E[R_T] \leq \regupperbound. 
    \end{equation}
    Then, the regret of this algorithm in a (less smooth) RKHS of another kernel $\kernelfourier{\sobolevint_1}$ with Fourier decay rate $\sobolevint_1$ is lower bounded by the following. \highlight{Suppose that functions in the function spaces have bounded $\lspace_2$ norm.}\footnote{Functions in Sobolev spaces and RKHSs are square-integrable.}
    \begin{align}
        &\sup_{\func\in \rkhs_{\kernelfourier{\sobolevint_1}}(\domain, B_1)} \E[R_T]\geq 
        \\
        &\qquad \qquad \frac{1}{8} \left(\frac{C(\sobolevint_1)}{32}\right)^{\frac{\sobolevint_1-1/2}{\sobolevint_1+1/2}}~ {\left(\frac{B_1}{\normequivu}\right)}^{\frac{1}{\sobolevint_1+1/2}}~\regupperbound^{-\frac{\sobolevint_1-1/2}{\sobolevint_1+1/2}}~T.\nonumber
    \end{align}
    {Here, $C(m_1)$ denotes a constant that depends on $m_1$.}
\end{theorem}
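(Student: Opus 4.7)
The plan is to prove this lower bound via a Le Cam-style two-hypothesis argument, reformulated in the language of Sobolev spaces. By Lemma~\ref{lemma: norm equivalency between rkhs and Sobolev} and Lemma~\ref{lemma: norm equivalency between semi Sobolev norm and RKHS norm}, an RKHS ball of a translation-invariant kernel with Fourier decay rate $\sobolevint$ is sandwiched between Sobolev seminorm balls of order $\sobolevint$. Hence it suffices to exhibit, given an algorithm with uniform expected regret at most $\regupperbound$ on $\sobolevspace^{\sobolevint_2}(\domain, \cdot)$, a function in $\sobolevspace^{\sobolevint_1}(\domain, \cdot)$ whose expected regret attains the claimed lower bound; the norm-equivalence constants $\normequivl, \normequivu$ are then carried back through the sandwich to translate the final estimate into RKHS language.

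Next, construct two hypotheses. The reference (easy) instance $f^*$ is taken as a single smooth localized bump of height $H$ and width $W$ centered at the origin, with $H, W$ selected so that $\vert f^* \vert_{\sobolevint_2, 2} \leq B_2/\normequivu$; then $f^* \in \rkhs_{\kernelfourier{\sobolevint_2}}(\domain, B_2)$ and the regret assumption gives $\E_{f^*}[R_T] \leq \regupperbound$. Because $f^*$ has quadratic vanishing at its maximizer, this further bounds the expected number of queries at distance at least $r$ by a quantity of order $\regupperbound/(Hr^2)$. The adversarial instance is $f^* + g$, where $g$ is a rough, sharply localized bump of height $h_1$ and width $w_1$ placed inside a sub-interval $I$ at distance $r$ from the origin. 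Its $\sobolevint_1$-seminorm scales as $h_1 w_1^{1/2 - \sobolevint_1}$, so the rough-ball condition reduces to $h_1 \leq O(B_1)\cdot w_1^{\sobolevint_1 - 1/2}$; choosing $h_1 > Hr^2$ moves the global maximizer of $f^* + g$ into $I$.

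The indistinguishability step applies Pinsker's inequality to the $T$-round observation law. For $\tfrac{1}{4}$-subgaussian noise,
\begin{equation*}
\KL\bigl(P^{f^*}_T \,\|\, P^{f^* + g}_T\bigr) \leq 2 h_1^2 \, \E_{f^*}[N_I] \leq 2 h_1^2 \cdot \regupperbound/(Hr^2),
\end{equation*}
so whenever $h_1^2 \regupperbound/(Hr^2)$ is of order $1$, the two laws are total-variation close. Consequently the algorithm spends at least $T/2$ queries outside $I$ even under $f^* + g$, and each such query incurs instantaneous regret at least $h_1 - Hr^2 = \Theta(h_1)$ by the choice $h_1 > Hr^2$, yielding $\E_{f^* + g}[R_T] \geq \Omega(h_1 T)$.

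The main technical challenge is balancing the parameters $(H, W, r, w_1, h_1)$ under the smooth-ball constraint on $f^*$, the rough-ball constraint on $f^* + g$, the bump-is-maximum requirement $h_1 > Hr^2$, and Pinsker's indistinguishability, so as to extract the specific exponents $(\sobolevint_1 - 1/2)/(\sobolevint_1 + 1/2)$ on $\regupperbound$ and $1/(\sobolevint_1 + 1/2)$ on $B_1/\normequivu$ stated in the theorem. The correct scaling takes $w_1$ comparable to $r$ so the bump fills the tail sub-interval of $f^*$, takes $H$ proportional to $B_2/\normequivu$ via the norm-equivalence constants, and chooses $B_1$ just large enough that $f^* + g$ remains inside the rough ball. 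The constant $C(\sobolevint_1)$ in the statement absorbs the Sobolev seminorm of the fixed bump profile used for $g$ after rescaling to width $w_1$ together with the absolute constants coming from Pinsker's inequality.
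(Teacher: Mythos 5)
Your high-level framing is the same as the paper's: pass to Sobolev seminorm balls via the norm equivalence, build bump-function hypotheses, and use Pinsker's inequality on the observation laws. The genuine gap is in the information-theoretic core: you use a \emph{two-point} Le Cam argument (one smooth reference $f^*$ and one perturbed instance $f^*+g$), and this cannot produce the exponent $\regupperbound^{-\frac{\sobolevint_1-1/2}{\sobolevint_1+1/2}}$. The paper (following Hadiji--Stoltz and Locatelli--Carpentier) constructs $M+1$ hypotheses: the smooth bump $\phi_0=f_0$ supported on $[1/2,1]$, and $M$ rough alternatives $\phi_s=f_0+f_s$ with $f_s$ supported on disjoint cells $H_s$ partitioning $[0,1/2]$. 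The regret assumption forces $\sum_{s=1}^{M}\E_0[N_{H_s}]\le 2\regupperbound/\Delta$ (the gap on $[0,1/2]$ under $\phi_0$ is uniformly $\Delta/2$), so by averaging some cell receives only $2\regupperbound/(M\Delta)$ pulls and $\KL(\sP_0^T,\sP_s^T)\lesssim \Delta\regupperbound/M$. Indistinguishability then only requires $\Delta\lesssim M/\regupperbound$, which, balanced against the Sobolev constraint $\Delta\lesssim L_1 M^{-(\sobolevint_1-1/2)}$, yields $M\asymp(L_1\regupperbound)^{1/(\sobolevint_1+1/2)}$ and $\Delta\asymp L_1^{1/(\sobolevint_1+1/2)}\regupperbound^{-(\sobolevint_1-1/2)/(\sobolevint_1+1/2)}$. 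The factor $1/M$ in the KL bound, obtained by pigeonholing over $M$ cells, is exactly what the two-point argument is missing.

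Concretely, in your scheme the number of pulls in the single interval $I$ is controlled only by $\E_{f^*}[N_I]\le\regupperbound/\mathrm{gap}(I)$, so $\KL\lesssim h_1^2\,\regupperbound/\mathrm{gap}(I)$; whether you set $\mathrm{gap}(I)\asymp Hr^2$ as written or couple it to $h_1$, indistinguishability forces $h_1\lesssim 1/\regupperbound$, giving at best a lower bound of order $T/\regupperbound$. For the Mat\'ern instantiation with $\regupperbound=T^{(\nu_2+1)/(2\nu_2+1)}$ this is $T^{\nu_2/(2\nu_2+1)}<T^{1/2}$, which is weaker than the \emph{non-adaptive} minimax lower bound $T^{(\nu_1+1)/(2\nu_1+1)}$ and hence vacuous. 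There is also a secondary structural difference: you make the rough bump ride on the shoulder of $f^*$ and invoke its quadratic decay, whereas the paper supports $f_0$ and the rough bumps on disjoint regions so that the gap in the rough region equals the peak $\Delta/2$ of $f_0$ exactly; the latter is what couples the exploration budget to $\Delta$ and, together with the $M$-cell averaging, delivers the stated exponents. To repair your argument, replace the single adversarial bump by $M$ bumps in disjoint cells of width $\asymp 1/M$ and optimize $(\Delta,M)$ jointly as above; the rest of your outline (norm equivalence, seminorm scaling $h_1 w_1^{1/2-\sobolevint_1}$, Pinsker with $\frac14$-subgaussian noise) then goes through.
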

\highlight{It is worth noting that, although the lower bound has a factor of $T$, the regret is not necessarily linear in $T$, because $\tilde{R}$ also depends on $T$ and in fact usually ranges from $\tilde{O}(\sqrt{T})$ to $\tilde{O}(T)$.} The full version of this theorem is presented as Theorem~\ref{thm: lower bound rkhs, full version} in Appendix~\ref{sec: full lower bound rkhs}, where we state the full constraints on the radius values $B_1$ and $B_2$. Since $B_1$ and $B_2$ are only upper bounds on the RKHS norm and \emph{not} the kernel regularity that we focus on, we present only the concise version here to show the adaptivity difficulty with respect to regularity parameters $\sobolevint_1$ and $\sobolevint_2$.

\subsubsection{Proof Sketch}\label{sec: proof sketch lower bound}
The proof of Theorem~\ref{thm: lower bound rkhs} consists of two key parts. The first part is constructing the hypothesis functions, in which we borrow ideas from lower bounds in regression problems~\citep{tsybakov2004introduction}. The second part is lower bounding the cumulative regret, given the constructed hypothesis functions, where we follow~\citet[Section 2.2]{hadiji2019polynomial}. Intuitively, the second part shows that if any player achieves a small regret on all the smoother functions, then it inevitably incurs large regret on the rougher functions in the space, because of its disproportionally small amount of exploration. 
The method in~\citet{hadiji2019polynomial} is itself an improved version of the adaptivity lower bound for H\"older spaces proposed in~\citet{locatelli2018adaptivity}. 

\subsubsection{A Sobolev Version of the Lower Bound}\label{sec: sobolev version lower bound}
It is convenient to construct functions with compact support and finite Sobolev semi-norms from an infinitely-differentiable base function, such as the bump function~\citep{tsybakov2004introduction}. 
On the other hand, directly constructing functions with finite RKHS norms~\citep[Section III.A]{scarlett2017lower} involves inverse Fourier transformation of the bump function and thus leads to wavelet-like functions with non-compact support. 
Therefore, it is more natural for us to first consider functions in (integer-order) Sobolev spaces as hypothesis functions, and then use the norm equivalency result between Sobolev spaces and RKHSs to prove the lower bound. More precisely, the hypothesis functions constructed in the proof reside in a Sobolev ball $\sobolevspace^{\sobolevint}(\domain, L)$, for some (integer) order $m$ and radius $L$. Via the norm equivalency (Lemma~\ref{lemma: norm equivalency between semi Sobolev norm and RKHS norm}), those functions also resides in a RKHS ball of a kernel with Fourier decay rate $\sobolevint$. 

As a result, there is a Sobolev version of the adaptivity lower bound. Informally, let $\sobolevint_1, \sobolevint_2$ be two positive integers such that $\sobolevint_2>\sobolevint_1$. If an algorithm achieves a $\tilde{R}$ regret upper bound in the smoother Sobolev space $\sobolevspace^{\sobolevint_2}(\domain)$, then its regret over functions in $\sobolevspace^{\sobolevint_1}(\domain)$ is lower bounded by $\Omega(\regupperbound^{-\frac{\sobolevint_1-1/2}{\sobolevint_1+1/2}}~T)$. We formally state the Sobolev version of the adaptivity lower bound in Theorem~\ref{thm: lower bound Sobolev} in Appendix~\ref{sec: proof of lower bound rkhs}. 
The two lower bounds share the same proof structure, connected via the norm equivalency in Lemma~\ref{lemma: norm equivalency between rkhs and Sobolev}.

\subsubsection{Impossibility Result for Mat\'ern Kernels}\label{sec: impossibility result for Matern}
For the Mat\'ern-$\nu$ family of kernels~\citep{matern1960spatial}, an implication of Theorem~\ref{thm: lower bound rkhs} is that no algorithm can achieve minimax adaptivity between two RKHSs if they have different regularity. 
Therefore, we also refer to this lower bound as an impossibility result for adaptivity to the kernel regularity. We formally define Mat\'ern-$\nu$ family of kernels in Definition~\ref{def: matern kernels}. 
\begin{definition}\label{def: matern kernels}
    The Mat\'ern-$\nu$ kernel and its Fourier transformation are defined as follows for dimension $\paramdim$.
    \begin{align}
        &\kernel_{\text{Mat\'ern},\nu}(\action, \action') \\
        &\quad = \frac{2^{1-\nu}}{\Gamma(\nu)} \left(\frac{\sqrt{2\nu}\Vert x-x'\Vert_2}{l}\right)^\nu J_\nu(\frac{\sqrt{2\nu}\Vert x-x'\Vert_2}{l}), \label{eq: Matern kernel}\\
        &\hat{\kernel}_{\text{Mat\'ern},\nu}(\omega) = c_1(\frac{2\nu}{l^2}+\Vert\omega\Vert_2^2)^{-(\nu+\frac{\paramdim}{2})}.\label{eq: Matern kernel Fourier}
    \end{align}
    where $c_1 =\frac{2^d\pi^{d/2}\Gamma(\nu+d/2)(2\nu)^\nu}{\Gamma(\nu)l^{2\nu}}$, $J_\nu$ is the modified Bessel function of the second kind, $l$ is the length-scale, and $\nu>0$ is the regularity parameter. In this work, we assume for simplicity that the length-scale is set \highlight{to $\propto \sqrt{2\nu}$}.
\end{definition}
The Fourier transformation of a Mat\'ern kernel with regularity parameter $\nu$ decays with a rate of $\nu+\frac{d}{2}$ (equation~\eqref{eq: Matern kernel Fourier}). 
Therefore, we can instantiate the impossibility result for Mat\'ern kernels. The result is presented in Corollary~\ref{thm: lower bound with Matern kernel}. Precisely, for $0<\nu_1<\nu_2$, if an adaptive algorithm achieves minimax regret rate on a Mat\'ern RKHS with regularity $\nu_2$, then it has a strictly suboptimal regret rate on the RKHS with $\nu_1$. 
\begin{corollary}\label{thm: lower bound with Matern kernel}
   Suppose the problem is the same as defined in Theorem~\ref{thm: lower bound rkhs}. Let $\nu_1, \nu_2$ be real numbers that satisfy $0<\nu_1< \nu_2$ and $\nu_1+\frac{1}{2}\in \sN, \nu_2+\frac{1}{2}\in \sN$. There exist two positive values $B_1, B_2$, such that the following statement is true.
   Suppose an algorithm oblivious to the true regularity parameter value achieves the following minimax optimal regret \footnote{Omitting the dependence on the upper bound on RKHS norm.} on $\rkhs_{\kernel_{\text{Mat\'ern},\nu_2}}(\domain, B_2)$,
    \begin{equation}
        \sup_{\func\in\rkhs_{\kernel_{\text{Mat\'ern},\nu_2}}(\domain, B_2)} \E[R_T] = \tilde{O}\left(T^{\frac{\nu_2+1}{2\nu_2+1}}\right), 
    \end{equation}
    then the regret of this algorithm on RKHS $\rkhs_{\kernel_{\text{Mat\'ern},\nu_1}}(\domain, B_1)$ is lower bounded by the following.
    \begin{equation}\label{eq: adaptivity lower bound on matern rkhs}
        \sup_{\func\in\rkhs_{\kernel_{\text{Mat\'ern},\nu_1}}(\domain, B_1)} \E[R_T] = \tilde{\Omega}\left(T^{\frac{\nu_1\nu_2+2\nu_2+1}{(\nu_1+1)(2\nu_2+1)}}\right), 
    \end{equation}
\end{corollary}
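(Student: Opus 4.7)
The plan is to instantiate Theorem~\ref{thm: lower bound rkhs} with the Mat\'ern Fourier decay rates and then simplify algebraically. By equation~\ref{eq: Matern kernel Fourier} and the normalization $l\propto\sqrt{2\nu}$, the Fourier transform of $\kernel_{\text{Mat\'ern},\nu}$ fits the sandwich form of equation~\ref{eq: definition kernel Fourier decay} with decay rate $\nu + \paramdim/2$. For $\paramdim=1$ and the integrality hypothesis $\nu_i + 1/2 \in \sN$, setting $\sobolevint_i := \nu_i + \tfrac{1}{2}$ yields two positive integers with $\sobolevint_1 < \sobolevint_2$. Hence Lemma~\ref{lemma: norm equivalency between rkhs and Sobolev} applies to both kernels, and Theorem~\ref{thm: lower bound rkhs} applies with those $\sobolevint_1, \sobolevint_2$ (and the corresponding radii $B_1, B_2$ whose existence it guarantees).

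Next I would substitute $\regupperbound = \tilde{O}\bigl(T^{(\nu_2+1)/(2\nu_2+1)}\bigr)$ into the conclusion of Theorem~\ref{thm: lower bound rkhs}. Since $\frac{\sobolevint_1 - 1/2}{\sobolevint_1 + 1/2} = \frac{\nu_1}{\nu_1+1}$, the resulting lower bound takes the form
\[
\tilde{\Omega}\!\left(T^{\,1 - \frac{\nu_1}{\nu_1+1}\cdot\frac{\nu_2+1}{2\nu_2+1}}\right),
\]
with constants $C(\sobolevint_1)$, $\normequivu$, $B_1$, and all polylogarithmic factors absorbed into the $\tilde{\Omega}$ notation. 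A one-line algebraic simplification,
\[
1 - \frac{\nu_1(\nu_2+1)}{(\nu_1+1)(2\nu_2+1)} = \frac{(\nu_1+1)(2\nu_2+1) - \nu_1(\nu_2+1)}{(\nu_1+1)(2\nu_2+1)} = \frac{\nu_1\nu_2 + 2\nu_2 + 1}{(\nu_1+1)(2\nu_2+1)},
\]
recovers exactly the exponent asserted in equation~\ref{eq: adaptivity lower bound on matern rkhs}.

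The substantive work is entirely inside Theorem~\ref{thm: lower bound rkhs}; at the level of this corollary the only care required is bookkeeping. Specifically, I would verify that the $l\propto\sqrt{2\nu}$ scaling does make the Mat\'ern Fourier envelope $(2\nu/l^2 + \|\omega\|_2^2)^{-(\nu+1/2)}$ comparable to $(1+\|\omega\|_2^2)^{-\sobolevint}$ up to constants, that $\nu_i + 1/2 \in \sN$ together with $\paramdim = 1$ provides exactly the integer decay rates required by Lemma~\ref{lemma: norm equivalency between rkhs and Sobolev}, and that the existential radii $B_1, B_2$ produced by Theorem~\ref{thm: lower bound rkhs} can be quoted as the $B_1, B_2$ in the corollary. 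No nontrivial obstacle remains beyond these checks and the elementary exponent algebra above.
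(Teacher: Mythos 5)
Your proposal is correct and follows essentially the same route as the paper's own proof: instantiate Theorem~\ref{thm: lower bound rkhs} with Fourier decay rates $\sobolevint_i = \nu_i + \tfrac{1}{2}$ (valid for $\paramdim=1$ by equation~\ref{eq: Matern kernel Fourier}), substitute $\regupperbound = \tilde{O}\bigl(T^{(\nu_2+1)/(2\nu_2+1)}\bigr)$ into the $\tilde{\Omega}\bigl(\regupperbound^{-(\sobolevint_1-1/2)/(\sobolevint_1+1/2)}\,T\bigr)$ conclusion, and simplify the exponent. Your algebra recovering $\frac{\nu_1\nu_2+2\nu_2+1}{(\nu_1+1)(2\nu_2+1)}$ matches the paper exactly, so nothing further is needed.
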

The proof of Corollary~\ref{thm: lower bound with Matern kernel} is an application of Theorem~\ref{thm: lower bound rkhs} and can be found in Appendix~\ref{sec: proof of corollary of lower bound with Matern kernels}. The cumulative regret rate in~\ref{eq: adaptivity lower bound on matern rkhs} is suboptimal compared to the minimax rate which is $\tilde{O}(T^{\frac{\nu_1+1}{2\nu_1+1}})$ (see Section~\ref{sec: non-adaptive minimax algos} for non-adaptive minimax rates). Therefore, Theorem~\ref{thm: lower bound rkhs} is an impossibility result for adaptivity to kernel regularity with Mat\'ern kernels.

\section{Upper Bounds of Adaptive Algorithms}\label{sec: adaptive upper bounds}
We consider two adaptive algorithms particularly: CORRAL from~\citet{agarwal2017corralling, pacchiano2020model} and Regret Bound Balancing and Elimination (RBBE) from~\citep{pacchiano2020regret}.  The two algorithms (i) can be applied to the problem of adaptation to kernel regularity and (ii) have explicit regret guarantees in this setting.

The adaptive algorithms, however, need base algorithms that are non-adaptive minimax optimal. We first provide an overview of such non-adaptive algorithms for kernelised bandit in Section~\ref{sec: non-adaptive minimax algos}. Then, we derive adaptivity upper bounds of CORRAL and RBBE in Section~\ref{sec: corral} and Section~\ref{sec: rbbe} respectively. 
For concreteness, we only consider RKHS of Mat\'ern-$\nu$ kernel (Definition~\ref{def: matern kernels}) in this section. To match the lower bound, we set $\paramdim=1$. Comparison of the upper bounds to the lower bound (Theorem~\ref{thm: lower bound rkhs}), shows that CORRAL (coupled with minimax optimal base algorithms) can match the lower bound in dependence on $T$ between certain pairs of values for $\nu$. 

\subsection{Overview: Non-adaptive Minimax Algorithms}\label{sec: non-adaptive minimax algos}
We discuss the theoretical performance of algorithms developed for kernelised bandits in Section~\ref{sec: kernelucb and gpucb}. We show that a recent algorithm that is designed for continuum-armed bandit in H\"older spaces~\citep{liu2021smooth} is also optimal over functions in RKHS of Mat\'ern kernels in Section~\ref{sec: ucb-meta}. 

\subsubsection{SupKernelUCB and GP-UCB for RKHS}\label{sec: kernelucb and gpucb}
Recall that the lower bound (in terms of $T$) on cumulative regret for kernelised bandit with Mat\'ern-$\nu$ kernels $\kernel_{\text{Mat\'ern},\nu}$ is $\Omega(T^\frac{\nu+1}{2\nu+1})$, as proved by~\citet{scarlett2017lower}. There are mainly two types of algorithms applicable for the kernelised bandit problem: (i) GP-UCB~\citep{srinivas2009gaussian} and its variants~\citep{chowdhury2017kernelized,janz2020bandit}, and (ii) KernelUCB and its \textit{Sup}-variant~\citet{valko2013finite}. The GP-UCB-style algorithms display a non-trivial empirical advantage over the impractical SupKernelUCB. 
That being said, GP-UCB is suboptimal theoretical upper bounds for certain types of kernels under the RKHS assumption, including for Mat\'ern-$\nu$ kernels. In the RKHS of a Mat\'ern kernel $\kernel_{\text{Mat\'ern},\nu}$, GP-UCB achieves a regret of $\tilde{O}(T^{\frac{\nu+\frac{3}{2}}{2\nu+1}})$.\footnote{The suboptimality of GP-UCB is discussed more extensively in~\citet{vakili2021open}}. On the other hand, SupKernelUCB matches the lower bound with a regret rate of $\tilde{O}(T^{\frac{\nu+1}{2\nu+1}})$.\footnote{The analysis of SupKernelUCB was originally for finite-armed setting, but~\citet[Appendix A.4]{cai2021lower} state that it can be extended to the continuum-armed setting where $\domain=[0,1]^\paramdim$, suffering only a $O(\paramdim \log(T))$ term in the regret.} 

\subsubsection{UCB-Meta for H\"older Space}\label{sec: ucb-meta}
Apart from the kernelised bandit algorithms discussed above, \citet{liu2021smooth} propose an algorithm for continuum-armed bandits in H\"older space with exponent $\alpha>1$ with regret upper bound that matches existing lower bounds~\citep{wang2018optimization,singh2021continuum} except log factors. This algorithm is named UCB-Meta. We show in Theorem~\ref{thm: UCB-Meta regret in RKHS} that UCB-Meta is naturally minimax optimal in dependence on $T$ over the RKHS of certain kernels.
\begin{theorem}\label{thm: UCB-Meta regret in RKHS} 
Consider the kernelised bandit problem where $\func\in \rkhs_{\kernel_{\text{Mat\'ern},\nu}}(\domain, B)$, where $\nu>0$ and $\nu+\frac{1}{2}\in\sN$. 
Then, UCB-Meta achieves the following regret upper bound, 
\begin{equation}\label{eq: UCB-Meta regret in RKHS}
    \sup_{\func\in\rkhs_{\kernel_{\text{Mat\'ern},\nu}}(\domain, B)} \E[R_T] = \tilde{O}\left(T^{\frac{\nu+1}{2\nu+1}}\right),
\end{equation} 
where $\tilde{O}$ omits dependence on radius of the RKHS ball $B$, constant factors depending on $\nu$, and $log$ factors of $T$.
\end{theorem}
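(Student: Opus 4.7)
The plan is to reduce the kernelised bandit problem over the Mat\'ern-$\nu$ RKHS to a continuum-armed bandit problem over a H\"older ball, so that the existing regret guarantee of UCB-Meta can be invoked as a black box. The reduction proceeds through two nested embeddings: RKHS into an integer-order Sobolev space, and then Sobolev into a H\"older space.

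First, I will identify the RKHS of $\kernel_{\text{Mat\'ern},\nu}$ with a concrete Sobolev space. By~\eqref{eq: Matern kernel Fourier} with $\paramdim=1$ and the length-scale convention $l\propto\sqrt{2\nu}$, the Fourier transform $\hat{\kernel}_{\text{Mat\'ern},\nu}(\omega)$ satisfies the two-sided polynomial bound in~\eqref{eq: definition kernel Fourier decay} with decay rate $\fourierrate=\nu+\tfrac{1}{2}$. Under the hypothesis $\nu+\tfrac{1}{2}\in\sN$ this is a positive integer greater than $\paramdim/2$, and since $\domain=[0,1]$ has Lipschitz boundary, Lemma~\ref{lemma: norm equivalency between rkhs and Sobolev} gives the norm equivalence $\rkhs_{\kernel_{\text{Mat\'ern},\nu}}(\domain)\simeq\sobolevspace^{\nu+1/2}(\domain)$, so the hypothesis $\Vert\func\Vert_\kernel\le B$ transfers to a bound $\Vert\func\Vert_{\nu+1/2,2,\domain}\le c(\nu)\,B$. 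Next, I will apply Morrey's inequality, which in dimension $\paramdim=1$ gives $\sobolevspace^{1,2}(\domain)\hookrightarrow C^{0,1/2}(\domain)$; iterating on weak derivatives yields $\sobolevspace^{k+1,2}(\domain)\hookrightarrow C^{k,1/2}(\domain)$ for every integer $k\ge 0$. Writing $\nu=k+\tfrac{1}{2}$ places $\func$ in a H\"older ball $\holderspace^{\nu}(\domain,R)$ with $R=O(B)$. Finally, I will invoke the minimax guarantee of UCB-Meta for H\"older exponent $\alpha$ in dimension $\paramdim=1$, which is $\tilde{O}(T^{(1+\alpha)/(1+2\alpha)})$; setting $\alpha=\nu$ produces exactly the claimed rate $\tilde{O}(T^{(\nu+1)/(2\nu+1)})$, with the $B$-, $\nu$- and $\log T$-dependent factors absorbed into $\tilde{O}$.

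The main obstacle is ensuring the Sobolev-to-H\"older embedding actually delivers the sharp H\"older exponent $\alpha=\nu$ rather than some strictly smaller value: the generic Sobolev embedding at the borderline $s=k+1$ with $\paramdim=1$, $p=2$ is delicate, and one must appeal to the sharp form of Morrey's inequality to pick up the $\tfrac{1}{2}$-H\"older gain needed to match $\nu+\tfrac{1}{2}\in\sN$ exactly. A secondary subtlety is that~\citet{liu2021smooth} state UCB-Meta's guarantee for H\"older exponent $\alpha>1$, so the boundary case $\nu=\tfrac{1}{2}$ (giving $\alpha=\tfrac{1}{2}$) may require either an extension of UCB-Meta to $\alpha\le 1$ or a separate invocation of SupKernelUCB to cover it; for $\nu\in\{3/2,5/2,\dots\}$ the proof is immediate from UCB-Meta applied with H\"older parameter $\alpha=\nu$.
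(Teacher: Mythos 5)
Your proposal is correct and follows essentially the same route as the paper's proof: norm-equivalence of the Mat\'ern-$\nu$ RKHS with $\sobolevspace^{\nu+1/2}(\domain)$ via the Fourier decay rate (Lemma~\ref{lemma: norm equivalency between rkhs and Sobolev}), then the Sobolev-to-H\"older embedding (the paper cites \citet[Theorem 5.4]{adams2003sobolev}, which in this regime is exactly the Morrey inequality you invoke, giving $\alpha=\nu$ with no borderline issue since $\nu+1/2-1/2=\nu$ is never an integer here), and finally the UCB-Meta guarantee $\tilde{O}(T^{(\alpha+1)/(2\alpha+1)})$ at $\alpha=\nu$. The boundary case $\nu=1/2$ you flag is handled in the paper exactly as you suggest, by noting that for $\alpha\le 1$ UCB-Meta reduces to the minimax algorithm of \citet{auer2007improved}.
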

The regret rate shown in Theorem~\ref{thm: UCB-Meta regret in RKHS} is derived from the result that $\rkhs_{\kernel_{\text{Mat\'ern},\nu}}(\domain)$ is embedded in a H\"older space $\holderspace^\alpha(\domain)$ with $\alpha = \nu$. The proof can be found in Appendix~\ref{sec: proof of UCB-Meta regret in RKHS}. \citet{singh2021continuum} have shown a similar argument while focusing mainly on the connection between Besov and H\"older spaces. 

\subsection{CORRAL as Adaptive Algorithm}\label{sec: corral}
The original CORRAL algorithm for model selection in the bandit setting is first proposed by~\citet{agarwal2017corralling}. The original CORRAL requires that modifications be made to each base algorithm for them to satisfy a stability condition (Definition 3 in~\citet{agarwal2017corralling}). These modifications, however, have to be made on a case-by-case basis.
Therefore, we use the smoothed version of CORRAL which is proposed by~\citet{pacchiano2020model}. The smoothed CORRAL puts a smoothing operation between the master algorithm and base algorithms and thus does not require modifications be made to the base algorithms. Smoothed CORRAL operates only with stochastic environments, which is satisfied by our assumptions (Section~\ref{sec: problem setting}). For simplicity, we refer to the smoothed version of CORRAL as CORRAL. CORRAL uses an adversarial online mirror descent algorithm as the master algorithm. 

Recall that a non-adaptive minimax kernelised bandit algorithm achieves $\tilde{O}(T^{\frac{\nu+1}{2\nu+1}})$ regret (Section~\ref{sec: non-adaptive minimax algos}), if instantiated with the correct parameter $\nu$. 
By plugging in the regret of base kernelised bandit algorithms in the general result in Theorem 5.3 in~\citet{pacchiano2020model}, we derive a adaptive upper bound for CORRAL in Theorem~\ref{thm: CORRAL with kernelised bandits}. 
CORRAL achieves sublinear $\tilde{o}(T)$ regret on all possible values of $\nu^*$ (See Theorem~\ref{thm: CORRAL with kernelised bandits}). Oppositely, a non-adaptive algorithm instantiated with parameter value $\tilde{\nu}$ does not have sublinear regret guarantees if the true parameter $\nu^*<\tilde{\nu}$, because the underlying function space $\rkhs_{\kernel_{\text{Mat\'ern},\nu^*}}$ is not contained in algorithm's hypothesis space. In Theorem~\ref{thm: CORRAL with kernelised bandits}, $\tilde{\nu}\in\vu$ is a parameter that is specified by the user and can be interpreted as the parameter that specifies the space on which the algorithm is configured to achieve minimax regret.
\begin{theorem}\label{thm: CORRAL with kernelised bandits}
    Consider the kernelised bandit problem where $\func\in \rkhs_{\kernel_{\text{Mat\'ern},\nu^*}}(\domain, B^*)$, $\nu^*+\frac{1}{2}\in \sN$ and $\nu^*, B^*$ unknown to the learner. Let $\vu=\{(\nu_1, B_1), (\nu_2,B_2), \dots, (\nu_M, B_M)\}$ be a list of candidate input value pairs such that $\vu$ specifies a nested set of RKHS: $\rkhs_{\kernel_{\text{Mat\'ern},\nu_1}}(\domain, B_1) \subset \rkhs_{\kernel_{\text{Mat\'ern},\nu_2}}(\domain, B_2) \subset \dots \rkhs_{\kernel_{\text{Mat\'ern},\nu_M}}(\domain, B_M)$. Suppose that $(\nu^*, B^*)\in\vu$. Let $\sA=\{\gA_i, i\in [M]\}$ be a set of (non-adaptive) minimax optimal kernelised bandit algorithms \highlight{with anytime regret guarantees}, each instantiated with the regularity and radius $(\nu_i, B_i)\in\vu$. The regret from running CORRAL with \highlight{input total time steps $T$ and} learning rate $\eta=\tilde{O}(T^{-\frac{1+\tilde{\nu}}{1+2\tilde{\nu}}})$ applied with base algorithms from $\sA$ is as follows.\footnote{$\tilde{O}$ omits dependence on radius of the RKHS ball $B$, constant factors depending on $\nu$, and $log$ factors of $T$.}
    
    \begin{equation}\label{eq: corral upper bound}
        \sup_{\func\in\rkhs_{\kernel_{\text{Mat\'ern},\nu^*}}}\E[R_T] = \tilde{O}\left(T^{\max(\frac{1+\tilde{\nu}}{1+2\tilde{\nu}}, \frac{1^2+2\tilde{\nu}+\tilde{\nu}\nu^*}{(1+2\tilde{\nu})(1+\nu^*)})}\right).
    \end{equation}
\end{theorem}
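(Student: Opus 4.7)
The plan is to apply the smoothed-CORRAL regret bound from Theorem~5.3 of \citet{pacchiano2020model} with our base set $\sA$, and then optimize the learning rate $\eta$. First I would verify CORRAL's hypotheses: the environment is stochastic with sub-Gaussian noise (Section~\ref{sec: problem setting}), and by assumption every base in $\sA$ admits an anytime regret guarantee. The base $\gA_{i^*}$ matching the true pair $(\nu^*, B^*)\in\vu$ is correctly specified, so by Section~\ref{sec: non-adaptive minimax algos} (for instance, SupKernelUCB wrapped with the standard doubling trick, or UCB-Meta as in Theorem~\ref{thm: UCB-Meta regret in RKHS}) it achieves $U_{i^*}(t) = \tilde O(t^{(1+\nu^*)/(1+2\nu^*)})$ pointwise in $t$.

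Next, I would substitute this anytime rate into the Pacchiano et al.\ template, which yields an inequality of the schematic form
\begin{equation*}
\E[R_T]\;\leq\;\tilde O\!\left(\frac{M}{\eta} + T\eta + T\,\eta^{\,\nu^*/(1+\nu^*)}\right).
\end{equation*}
The first two summands capture the mirror-descent master's overhead; the third arises from combining the base-regret growth $U_{i^*}(n_{i^*})$ with CORRAL's bound on the expected number of pulls $\E[n_{i^*}]$ allocated to $\gA_{i^*}$ under the smoothing step. Once this form is in hand, plugging in $\eta = \tilde O(T^{-(1+\tilde\nu)/(1+2\tilde\nu)})$ turns the three summands into $\tilde O(T^{(1+\tilde\nu)/(1+2\tilde\nu)})$, $\tilde O(T^{\tilde\nu/(1+2\tilde\nu)})$, and, via the identity $1 - \tfrac{(1+\tilde\nu)\nu^*}{(1+2\tilde\nu)(1+\nu^*)} = \tfrac{1+2\tilde\nu+\tilde\nu\nu^*}{(1+2\tilde\nu)(1+\nu^*)}$, into $\tilde O(T^{(1+2\tilde\nu+\tilde\nu\nu^*)/((1+2\tilde\nu)(1+\nu^*))})$, respectively. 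The middle summand is dominated by the first, so the maximum of the two surviving exponents reproduces the bound in~\eqref{eq: corral upper bound}.

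The main obstacle I anticipate is correctly importing Pacchiano et al.'s coupled bound on $\E[n_{i^*}]$ through their smoothing step and confirming that the exponent of $\eta$ in the third term is exactly $\nu^*/(1+\nu^*)$ (not, for instance, $\nu^*/(1+2\nu^*)$, which would instead reproduce the non-adaptive Mat\'ern rate). Everything after that---balancing the three summands and the algebraic simplification of exponents---is routine. A minor secondary point is that the $M/\eta$ term only dominates the stated rate when $M$ does not blow up with $T$; this is benign because $\vu$ can be kept of size polylogarithmic in $T$, for example by discretizing $\nu$ on a geometric grid.
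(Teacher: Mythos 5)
Your proposal is correct and follows essentially the same route as the paper: invoke Theorem~5.3 of \citet{pacchiano2020model} with SupKernelUCB (made anytime via doubling) as base learners, where the $T\eta^{\nu^*/(1+\nu^*)}$ term you anticipate is exactly what the paper obtains after maximizing the smoothing/importance-weighting term over $\rho$, and the choice $\eta=\tilde O(T^{-(1+\tilde\nu)/(1+2\tilde\nu)})$ with your exponent identity yields \eqref{eq: corral upper bound}. The one step you flag as a risk---that the exponent is $\nu^*/(1+\nu^*)$ rather than $\nu^*/(1+2\nu^*)$---is resolved in the paper by the explicit optimization $\rho\propto\eta^{(2\nu^*+1)/(\nu^*+1)}T$, confirming your guess.
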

The proof of Theorem~\ref{thm: CORRAL with kernelised bandits} can be found in Appendix~\ref{sec: proof of thm CORRAL with kernelised bandits}. 
This result indicates that CORRAL achieves (i) minimax optimal rate $\tilde{O}\left(T^\frac{1+\nu^*}{1+2\nu^*}\right)$ in terms of $T$, if the underlying kernel regularity $\nu^*= \tilde{\nu}$; (ii) suboptimal rate $\tilde{O}\left(T^\frac{1+\tilde{\nu}}{1+2\tilde{\nu}}\right)$ if $\nu^*>\tilde{\nu}$ and (iii) suboptimal rate $\tilde{O}\left(T^\frac{1+2\tilde{\nu}+\tilde{\nu}\nu^*}{(1+2\tilde{\nu})(1+\nu^*)}\right)$ when $\nu^* < \tilde{\nu}$.
Let $\nu_1^*, \nu_2^*$ satisfying $\nu_1^* < \nu_2^*$ be two possible values of the true regularity that both satisfy the assumptions in~Theorem~\ref{thm: CORRAL with kernelised bandits}. Suppose $\nu_1^*\leq \tilde{\nu}<\nu_2*$. 
By Theorem~\ref{thm: CORRAL with kernelised bandits}, CORRAL achieves regret $\tilde{O}\left(T^\frac{1^2+2\tilde{\nu}+\tilde{\nu}\nu_1^*}{(1+2\tilde{\nu})(1+\nu_1^*)}\right)$ if the true parameter is $\nu_1^*$ and $\tilde{O}\left(T^\frac{1+\tilde{\nu}}{1+2\tilde{\nu}}\right)$ if the true parameter is $\nu_2^*$. 
By Theorem~\ref{thm: lower bound rkhs}, the lower bound over the rougher RKHS with $\nu_1^*$ is $\tilde{\Omega}\left(T^\frac{1+2\tilde{\nu}+\tilde{\nu}\nu_1^*}{(1+2\tilde{\nu})(1+\nu_1^*)}\right)$. The lower bound is matched by the upper bound in the exponent of $T$. 

In conclusion, CORRAL matches the adaptivity lower bound in the dependence on $T$ except log factors, between any pair of regularity values $(\nu_1^*, \nu_2^*)$, such that $\nu_1^*\geq \tilde{\nu},\nu_1^*+\frac{1}{2}\in \sN$ and $\nu_2^*<\tilde{\nu}, \nu_2^*+\frac{1}{2}\in \sN$. 

\highlight{Finally, note that in this subsection, the assumption is that the true parameter(s) are contained in the candidate set $\vu$. Hence, Theorem~\ref{thm: CORRAL with kernelised bandits} reflects the cost of adaptation (model selection), which is the difficulty of selecting the best base learner out of all candidates. If, however, the true parameter is not contained in $\vu$, then adaptive algorithms will incur another type of cost, namely the cost of “discretization”. This cost is generated from the difference between the true parameter and the closest value in $\vu$. Using an exponential~\citep{pacchiano2020model} or linear~\citep{liu2021smooth} grid for $\vu$ can usually incur a small cost of "discretization". }

\subsection{RBBE as Adaptive Algorithm}\label{sec: rbbe}
The regret bound balancing and elimination (RBBE) algorithm proposed in~\citet{pacchiano2020regret} achieves near-optimal regret in several adaptivity problems with linear function spaces. RBBE can be thought of as using a stochastic master algorithm that selects the base algorithm with the smallest candidate cumulative regret at each time. 
Therefore, it enjoys advantages such as gap-dependent regret bounds and high probability regret bounds.
Unlike CORRAL, it does not need a user-specified parameter to control the space over which the algorithm will achieve minimax optimal regret on. Instead, the algorithm achieves simultaneously on all possible values of $\nu^*$ the regret upper bound of $\tilde{O}(T^{\frac{1+4\nu^*+2{\nu^*}^2}{1+4\nu^* + 4{\nu^*}^2}})$. If we plug this upper bound in Theorem~\ref{thm: lower bound rkhs} for $\nu^* = \nu_2^*$, then a lower bound of $\tilde{\Omega}(T^\frac{(2
\nu_2^*+1)^2+2\nu_1^*{\nu_2^*}^2}{(2{\nu^*_2}^2+1)^2(\nu_1^*+1)})$ is incurred for when $\nu^* = \nu_1^*$, given that $0<\nu_1^*<\nu_2^*$. 
The upper bound of RBBE is larger than the lower bound in the exponent of $T$. A more detailed description of the RBBE algorithm and a formal statement of its adaptivity upper bound can be found in Appendix~\ref{sec: appendix rbbe details} and Theorem~\ref{thm: RBBE with kernelised bandit} therein. 

To summarize, although both CORRAL and RBBE as adaptive algorithms can achieve sublinear regret simultaneously on different kernel regularity, CORRAL has a better theoretical adaptivity in this problem. While RBBE fails to match the lower bound, CORRAL achieves the adaptivity lower bound for certain pairs of $\nu$ values for Mat\'ern-$\nu$ kernels. \footnote{It is our conjecture that the stochastic master used by RBBE (as opposed to the adversarial one in CORRAL) limits its model selection ability in certain cases.}

\section{Connection with Adaptivity to H\"older Exponents}\label{sec: connect with Holder adaptivity}
The adaptivity lower bound in Theorem~\ref{thm: lower bound rkhs} specifies the difficulty of adapting between two RKHSs of kernels with polynomial Fourier decay rate $\sobolevint_1$ and $\sobolevint_2$, where $0<\sobolevint_1< \sobolevint_2, \sobolevint_1\in \sN, \sobolevint_2\in\sN$. Recall that $\tilde{R}$ is the regret upper bound on the smoother RKHS with parameter $\sobolevint_2$. 
The lower bound on the RKHS specified by $\sobolevint_1$ depends inversely on $\tilde{R}$ through an $\Omega(T\cdot \tilde{R}^{-\frac{\sobolevint_1-1/2}{\sobolevint_1+1/2}})$ dependence. 

Shifting the perspective from RKHS to H\"older spaces, the adaptivity difficulty has been studied by~\citet{locatelli2018adaptivity, hadiji2019polynomial}, for a subset of values for the H\"older exponent $\alpha$. Precisely, Theorem 3 in~\citet{locatelli2018adaptivity} provides an $\Omega(T\cdot \tilde{R}^{-\frac{\alpha_1}{\alpha_1+1}})$ dependence as the lower bound, for adapting between two H\"older spaces with exponents $\alpha_1, \alpha_2$ satisfying $\alpha_1<\alpha_2\leq 1$.\footnote{Proving the adaptivity rate for when the exponents are larger than $1$ remains an open problem.} Here, $\tilde{R}$ is the upper regret bound on the smoother H\"older space $\holderspace^{\alpha_2}(\domain)$. 
We know by Lemma~\ref{lemma: norm equivalency between rkhs and Sobolev} that an RKHS $\rkhs_{\kernelfourier{\sobolevint_1}}(\domain)$ with kernel Fourier decay rate $\sobolevint_1$ is norm equivalent to Sobolev space $\sobolevspace^{\sobolevint_1}(\domain)$. Coupled with the Sobolev embedding theorem for integer-order Sobolev spaces~\citep[Theorem 5.4]{adams2003sobolev}, it is straightforward to see that $\rkhs_{\kernelfourier{\sobolevint_1}}(\domain)\subset \holderspace^\alpha(\domain)$, where $\alpha = \sobolevint_1 - \frac{1}{2}$ (Appendix~\ref{sec: proof of UCB-Meta regret in RKHS}). 

Note that we have the following equivalence between the lower bounds if $\alpha_1 = \sobolevint_1 - \frac{1}{2}$.
\begin{equation}\label{eq: lower bound dependence rkhs vs Holder}
    T\tilde{R}^{-\frac{\sobolevint_1-1/2}{\sobolevint_1+1/2}} \propto T\tilde{R}^{-\frac{\alpha_1}{\alpha_1+1}}.
\end{equation}
Therefore, for continuum-armed bandit problems, the statistical difficulty of adapting to kernel regularity of RKHS is the same as adapting to  H\"older exponents, if the H\"older exponents represent the smallest H\"older spaces that the RKHSs embed in.

\section{Discussion}\label{sec: discussion}
We discuss several future directions, stemming from the current limitations of our work. Our current theoretical results are for the domain with $\paramdim=1$,\footnote{Note that this is not equivalent to the dimension of the feature map of a kernel.} so it is of interest to extend the current results to $\paramdim>1$. \highlight{Instead of partitioning the domain $\domain=[0,1]$ into M sub-intervals, one needs to partition the hypercube $[0,1]^d$ into M sub-cubes and construct the hypothesis functions with appropriate Fourier decay correspondingly. Such an extension is possible akin to~\citet{scarlett2017lower}}.

Another direction is to derive adaptivity upper bounds in terms of Fourier decay as well and verify the tightness of the lower bound in more cases than Mat\'ern kernels. Since we currently investigate translation-invariant kernels, a more long-term direction is the investigation of adaptivity to rotation-invariant kernels, to connect to NTKs which are usually rotation-invariant dot-product kernels~\citep{bietti2020deep, chen2020deep, vakili2021uniform}. Finally, this study is of theoretical nature, so it remains an open problem to empirically study adaptivity to kernel regularity, based on the insights provided by our lower and upper bounds.  

\section*{Acknowledgements}
\highlight{This work is supported by NSF CCF1763734 and Simons Foundation grant.}
\bibliography{References}

\begin{thebibliography}{}

\bibitem[Abbasi-Yadkori et~al., 2011]{abbasi2011improved}
Abbasi-Yadkori, Y., P{\'a}l, D., and Szepesv{\'a}ri, C. (2011).
\newblock Improved algorithms for linear stochastic bandits.
\newblock {\em Advances in neural information processing systems},
  24:2312--2320.

\bibitem[Adams and Fournier, 2003]{adams2003sobolev}
Adams, R.~A. and Fournier, J.~J. (2003).
\newblock {\em Sobolev spaces}.
\newblock Elsevier.

\bibitem[Agarwal et~al., 2017]{agarwal2017corralling}
Agarwal, A., Luo, H., Neyshabur, B., and Schapire, R.~E. (2017).
\newblock Corralling a band of bandit algorithms.
\newblock In {\em Conference on Learning Theory}, pages 12--38. PMLR.

\bibitem[Arora et~al., 2021]{arora2021corralling}
Arora, R., Marinov, T.~V., and Mohri, M. (2021).
\newblock Corralling stochastic bandit algorithms.
\newblock In {\em International Conference on Artificial Intelligence and
  Statistics}, pages 2116--2124. PMLR.

\bibitem[Arora et~al., 2019]{arora2019exact}
Arora, S., Du, S.~S., Hu, W., Li, Z., Salakhutdinov, R.~R., and Wang, R.
  (2019).
\newblock On exact computation with an infinitely wide neural net.
\newblock {\em Advances in Neural Information Processing Systems}, 32.

\bibitem[Auer, 2002]{auer2002using}
Auer, P. (2002).
\newblock Using confidence bounds for exploitation-exploration trade-offs.
\newblock {\em Journal of Machine Learning Research}, 3(Nov):397--422.

\bibitem[Auer et~al., 1995]{auer1995gambling}
Auer, P., Cesa-Bianchi, N., Freund, Y., and Schapire, R.~E. (1995).
\newblock Gambling in a rigged casino: The adversarial multi-armed bandit
  problem.
\newblock In {\em Proceedings of IEEE 36th annual foundations of computer
  science}, pages 322--331. IEEE.

\bibitem[Auer et~al., 2007]{auer2007improved}
Auer, P., Ortner, R., and Szepesv{\'a}ri, C. (2007).
\newblock Improved rates for the stochastic continuum-armed bandit problem.
\newblock In {\em International Conference on Computational Learning Theory},
  pages 454--468. Springer.

\bibitem[Berkenkamp et~al., 2019]{berkenkamp2019no}
Berkenkamp, F., Schoellig, A.~P., and Krause, A. (2019).
\newblock No-regret bayesian optimization with unknown hyperparameters.
\newblock {\em arXiv preprint arXiv:1901.03357}.

\bibitem[Bietti and Bach, 2020]{bietti2020deep}
Bietti, A. and Bach, F. (2020).
\newblock Deep equals shallow for relu networks in kernel regimes.
\newblock {\em arXiv preprint arXiv:2009.14397}.

\bibitem[Bogunovic and Krause, 2021]{bogunovic2021misspecified}
Bogunovic, I. and Krause, A. (2021).
\newblock Misspecified gaussian process bandit optimization.
\newblock {\em Advances in Neural Information Processing Systems},
  34:3004--3015.

\bibitem[Bull, 2011]{bull2011convergence}
Bull, A.~D. (2011).
\newblock Convergence rates of efficient global optimization algorithms.
\newblock {\em Journal of Machine Learning Research}, 12(10).

\bibitem[Cai and Scarlett, 2021]{cai2021lower}
Cai, X. and Scarlett, J. (2021).
\newblock On lower bounds for standard and robust gaussian process bandit
  optimization.
\newblock In {\em International Conference on Machine Learning}, pages
  1216--1226. PMLR.

\bibitem[Calandriello et~al., 2019]{calandriello2019gaussian}
Calandriello, D., Carratino, L., Lazaric, A., Valko, M., and Rosasco, L.
  (2019).
\newblock Gaussian process optimization with adaptive sketching: Scalable and
  no regret.
\newblock In {\em Conference on Learning Theory}, pages 533--557. PMLR.

\bibitem[Chen and Xu, 2020]{chen2020deep}
Chen, L. and Xu, S. (2020).
\newblock Deep neural tangent kernel and laplace kernel have the same rkhs.
\newblock {\em arXiv preprint arXiv:2009.10683}.

\bibitem[Chizat et~al., 2019]{chizat2019lazy}
Chizat, L., Oyallon, E., and Bach, F. (2019).
\newblock On lazy training in differentiable programming.
\newblock {\em Advances in Neural Information Processing Systems}, 32.

\bibitem[Chowdhury and Gopalan, 2017]{chowdhury2017kernelized}
Chowdhury, S.~R. and Gopalan, A. (2017).
\newblock On kernelized multi-armed bandits.
\newblock In {\em International Conference on Machine Learning}, pages
  844--853. PMLR.

\bibitem[Chu et~al., 2011]{chu2011contextual}
Chu, W., Li, L., Reyzin, L., and Schapire, R. (2011).
\newblock Contextual bandits with linear payoff functions.
\newblock In {\em Proceedings of the Fourteenth International Conference on
  Artificial Intelligence and Statistics}, pages 208--214. JMLR Workshop and
  Conference Proceedings.

\bibitem[Dani et~al., 2008]{dani2008stochastic}
Dani, V., Hayes, T.~P., and Kakade, S.~M. (2008).
\newblock Stochastic linear optimization under bandit feedback.
\newblock {\em Conference on Learning Theory}.

\bibitem[Du et~al., 2019]{du2019good}
Du, S.~S., Kakade, S.~M., Wang, R., and Yang, L.~F. (2019).
\newblock Is a good representation sufficient for sample efficient
  reinforcement learning?
\newblock {\em arXiv preprint arXiv:1910.03016}.

\bibitem[Foster et~al., 2019]{foster2019model}
Foster, D.~J., Krishnamurthy, A., and Luo, H. (2019).
\newblock Model selection for contextual bandits.
\newblock {\em arXiv preprint arXiv:1906.00531}.

\bibitem[Hadiji, 2019]{hadiji2019polynomial}
Hadiji, H. (2019).
\newblock Polynomial cost of adaptation for x-armed bandits.
\newblock {\em Advances in Neural Information Processing Systems}, 32.

\bibitem[Jacot et~al., 2018]{jacot2018neural}
Jacot, A., Gabriel, F., and Hongler, C. (2018).
\newblock Neural tangent kernel: Convergence and generalization in neural
  networks.
\newblock {\em Advances in neural information processing systems}, 31.

\bibitem[Janz et~al., 2020]{janz2020bandit}
Janz, D., Burt, D., and Gonz{\'a}lez, J. (2020).
\newblock Bandit optimisation of functions in the mat{\'e}rn kernel rkhs.
\newblock In {\em International Conference on Artificial Intelligence and
  Statistics}, pages 2486--2495. PMLR.

\bibitem[Kandasamy et~al., 2019]{kandasamy2019myopic}
Kandasamy, K., Neiswanger, W., Zhang, R., Krishnamurthy, A., Schneider, J., and
  Poczos, B. (2019).
\newblock Myopic posterior sampling for adaptive goal oriented design of
  experiments.
\newblock In {\em International Conference on Machine Learning}, pages
  3222--3232. PMLR.

\bibitem[Kassraie and Krause, 2021]{kassraie2021neural}
Kassraie, P. and Krause, A. (2021).
\newblock Neural contextual bandits without regret.
\newblock {\em arXiv preprint arXiv:2107.03144}.

\bibitem[Kassraie et~al., 2022]{kassraie2022meta}
Kassraie, P., Rothfuss, J., and Krause, A. (2022).
\newblock Meta-learning hypothesis spaces for sequential decision-making.
\newblock {\em arXiv preprint arXiv:2202.00602}.

\bibitem[Lattimore et~al., 2020]{lattimore2020learning}
Lattimore, T., Szepesvari, C., and Weisz, G. (2020).
\newblock Learning with good feature representations in bandits and in rl with
  a generative model.
\newblock In {\em International Conference on Machine Learning}, pages
  5662--5670. PMLR.

\bibitem[Lee et~al., 2019]{lee2019wide}
Lee, J., Xiao, L., Schoenholz, S., Bahri, Y., Novak, R., Sohl-Dickstein, J.,
  and Pennington, J. (2019).
\newblock Wide neural networks of any depth evolve as linear models under
  gradient descent.
\newblock {\em Advances in neural information processing systems}, 32.

\bibitem[Liu et~al., 2021]{liu2021smooth}
Liu, Y., Wang, Y., and Singh, A. (2021).
\newblock Smooth bandit optimization: Generalization to holder space.
\newblock In {\em International Conference on Artificial Intelligence and
  Statistics}, pages 2206--2214. PMLR.

\bibitem[Locatelli and Carpentier, 2018]{locatelli2018adaptivity}
Locatelli, A. and Carpentier, A. (2018).
\newblock Adaptivity to smoothness in x-armed bandits.
\newblock In {\em Conference on Learning Theory}, pages 1463--1492. PMLR.

\bibitem[Matern et~al., 1960]{matern1960spatial}
Matern, B. et~al. (1960).
\newblock Spatial variation. stochastic models and their application to some
  problems in forest surveys and other sampling investigations.
\newblock {\em Meddelanden fran Statens Skogsforskningsinstitut}, 49(5).

\bibitem[Neiswanger and Ramdas, 2021]{neiswanger2021uncertainty}
Neiswanger, W. and Ramdas, A. (2021).
\newblock Uncertainty quantification using martingales for misspecified
  gaussian processes.
\newblock In {\em Algorithmic Learning Theory}, pages 963--982. PMLR.

\bibitem[Pacchiano et~al., 2020a]{pacchiano2020regret}
Pacchiano, A., Dann, C., Gentile, C., and Bartlett, P. (2020a).
\newblock Regret bound balancing and elimination for model selection in bandits
  and rl.
\newblock {\em arXiv preprint arXiv:2012.13045}.

\bibitem[Pacchiano et~al., 2020b]{pacchiano2020model}
Pacchiano, A., Phan, M., Abbasi-Yadkori, Y., Rao, A., Zimmert, J., Lattimore,
  T., and Szepesvari, C. (2020b).
\newblock Model selection in contextual stochastic bandit problems.
\newblock {\em arXiv preprint arXiv:2003.01704}.

\bibitem[Scarlett et~al., 2017]{scarlett2017lower}
Scarlett, J., Bogunovic, I., and Cevher, V. (2017).
\newblock Lower bounds on regret for noisy gaussian process bandit
  optimization.
\newblock In {\em Conference on Learning Theory}, pages 1723--1742. PMLR.

\bibitem[Shekhar and Javidi, 2020]{shekhar2020multi}
Shekhar, S. and Javidi, T. (2020).
\newblock Multi-scale zero-order optimization of smooth functions in an rkhs.
\newblock {\em arXiv preprint arXiv:2005.04832}.

\bibitem[Singh, 2021]{singh2021continuum}
Singh, S. (2021).
\newblock Continuum-armed bandits: A function space perspective.
\newblock In {\em International Conference on Artificial Intelligence and
  Statistics}, pages 2620--2628. PMLR.

\bibitem[Srinivas et~al., 2009]{srinivas2009gaussian}
Srinivas, N., Krause, A., Kakade, S.~M., and Seeger, M. (2009).
\newblock Gaussian process optimization in the bandit setting: No regret and
  experimental design.
\newblock {\em arXiv preprint arXiv:0912.3995}.

\bibitem[Tsybakov, 2004]{tsybakov2004introduction}
Tsybakov, A.~B. (2004).
\newblock Introduction to nonparametric estimation, 2009.
\newblock {\em URL https://doi. org/10.1007/b13794. Revised and extended from
  the}, 9(10).

\bibitem[Vakili et~al., 2021a]{vakili2021uniform}
Vakili, S., Bromberg, M., Garcia, J., Shiu, D.-s., and Bernacchia, A. (2021a).
\newblock Uniform generalization bounds for overparameterized neural networks.
\newblock {\em arXiv preprint arXiv:2109.06099}.

\bibitem[Vakili et~al., 2021b]{vakili2021open}
Vakili, S., Scarlett, J., and Javidi, T. (2021b).
\newblock Open problem: Tight online confidence intervals for rkhs elements.
\newblock In {\em Conference on Learning Theory}, pages 4647--4652. PMLR.

\bibitem[Valko et~al., 2013]{valko2013finite}
Valko, M., Korda, N., Munos, R., Flaounas, I., and Cristianini, N. (2013).
\newblock Finite-time analysis of kernelised contextual bandits.
\newblock {\em arXiv preprint arXiv:1309.6869}.

\bibitem[Wainwright, 2019]{wainwright2019high}
Wainwright, M.~J. (2019).
\newblock {\em High-dimensional statistics: A non-asymptotic viewpoint},
  volume~48.
\newblock Cambridge University Press.

\bibitem[Wang et~al., 2018]{wang2018optimization}
Wang, Y., Balakrishnan, S., and Singh, A. (2018).
\newblock Optimization of smooth functions with noisy observations: Local
  minimax rates.
\newblock {\em Advances in Neural Information Processing Systems}, 31.

\bibitem[Wendland, 2004]{wendland2004scattered}
Wendland, H. (2004).
\newblock {\em Scattered data approximation}, volume~17.
\newblock Cambridge university press.

\bibitem[Williams and Rasmussen, 2006]{williams2006gaussian}
Williams, C.~K. and Rasmussen, C.~E. (2006).
\newblock {\em Gaussian processes for machine learning}.
\newblock MIT press Cambridge, MA.

\bibitem[Zhou et~al., 2020]{zhou2020neural}
Zhou, D., Li, L., and Gu, Q. (2020).
\newblock Neural contextual bandits with ucb-based exploration.
\newblock In {\em International Conference on Machine Learning}, pages
  11492--11502. PMLR.

\bibitem[Zhu and Nowak, 2021]{zhu2021pareto}
Zhu, Y. and Nowak, R. (2021).
\newblock Pareto optimal model selection in linear bandits.
\newblock {\em arXiv preprint arXiv:2102.06593}.

\end{thebibliography}

\appendix
\onecolumn

\section{AUXILIARY}
\subsection{Proof of Norm Equivalency Between RKHS Norm and Sobolev Seminorm}\label{sec: proof of norm equivalence between RKHS and Sobolev seminorm}
\begin{proof}[Proof of Lemma~\ref{lemma: norm equivalency between semi Sobolev norm and RKHS norm}]
It is shown by~\citet[Theorem 10.12, Corollary 10.48]{wendland2004scattered} that: if a translation-invariant kernel $\kernel$ with Fourier decay rate $s$ (Lemma~\ref{lemma: norm equivalency between rkhs and Sobolev}), \highlight{then the associated RKHS $\rkhs_\kernel$ defined on a Lipschitz domain $\Omega$ is norm equivalent} to the Sobolev space $\sobolevspace^{\sobolevint=s, 2}(\Omega)$. The norm equivalency indicates that there exist two constants $c_1, c_2$, $0<c_1<c_2$, such that for $\func \in \rkhs_\kernel(\Omega)$, the following statement holds. 
\begin{equation}\label{eq: norm equivalence between RKHS and Sobolev norm}
    c_1\Vert \func \Vert_{m, 2, 
    \domain} \leq \Vert \func \Vert_{\rkhs_\kernel} \leq c_2 \Vert \func \Vert_{m, 2, \domain}.
\end{equation}
Now, we examine conditions under which the norm equivalency can be extended between the \emph{seminorm} (\eqref{eq: def seminorm Sobolev}) of Sobolev spaces and the RKHS norm. 
As in Lemma~\ref{lemma: norm equivalency between semi Sobolev norm and RKHS norm}, let $W_0^{m,p}(\domain)$ denote the closure of $C^\infty_0(\domain)$ in $W^{m,p}(\domain)$.\footnote{Here, we borrow the definitions from~\citet{adams2003sobolev}.} 
\citet[6.26]{adams2003sobolev} give the following result: if $\domain$ has finite width, then for $f\in W_0^{m,p}$, the seminorm $\vert{\cdot}\vert_{m,p}$ is equivalent to the standard norm $\Vert \cdot\Vert_{m,p}$. 
\highlight{The one-dimensional interval domain $\domain$ we consider trivially satisfies the Lipschitz boundary condition, hence we have the following result. }
\begin{lemma}\label{lemma: equal norm in Sobolev norms}
If a function lies in $W^{m,p}_0(\domain)$ where $\domain=[0,1]$, then there exists a constant \highlight{$K<\infty$}, such that
\begin{equation}
    \vert{\cdot}\vert_{m,p,\domain}\leq \Vert\cdot \Vert_{m,p,\domain} \leq K \vert{\cdot}\vert_{m,p,\domain}.
\end{equation}
\end{lemma}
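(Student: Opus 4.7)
The plan is to establish the two inequalities separately, handling the left one by inspection and the right one by iterating the one-dimensional Poincar\'e inequality across derivative orders, then extending by density. The left inequality $\vert f\vert_{m,p,\domain}\leq \Vert f\Vert_{m,p,\domain}$ is immediate from the definitions in~\eqref{eq: definition Sobolev norm} and~\eqref{eq: def seminorm Sobolev}: the $p$-th power of the seminorm is exactly one of the (non-negative) summands making up the $p$-th power of the full Sobolev norm, so no constant is required.

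For the right inequality, I would first reduce to $C^\infty_0(\domain)$. Since $W^{m,p}_0(\domain)$ is by definition the $\Vert\cdot\Vert_{m,p,\domain}$-closure of $C^\infty_0(\domain)$, and both sides of the inequality are continuous in this topology, it suffices to prove the bound on the dense subset and pass to the limit. For $u\in C^\infty_0(\domain)$ with $\domain=[0,1]$, the boundary values vanish, so writing $u(x)=\int_0^x u'(t)\,dt$ and applying H\"older's inequality gives the one-dimensional Poincar\'e bound $\Vert u\Vert_{L^p(\domain)}\leq \Vert u'\Vert_{L^p(\domain)}$ with constant $1$ (since the interval has unit length). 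Crucially, every lower-order derivative $D^{(j)}u$ with $j<m$ is itself a compactly supported smooth function on $\domain$, so the same estimate iterates:
\begin{equation*}
    \Vert D^{(j)}u\Vert_{L^p(\domain)}\leq \Vert D^{(j+1)}u\Vert_{L^p(\domain)}\leq \cdots \leq \Vert D^{(m)}u\Vert_{L^p(\domain)} = \vert u\vert_{m,p,\domain}.
\end{equation*}
Summing the $p$-th powers over $j=0,\dots,m$ yields $\Vert u\Vert_{m,p,\domain}^p \leq (m+1)\,\vert u\vert_{m,p,\domain}^p$, so $K=(m+1)^{1/p}$ suffices on $C^\infty_0(\domain)$, and the density argument extends the bound to all of $W^{m,p}_0(\domain)$.

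No significant obstacle is expected: in one dimension with compactly supported test functions we sidestep trace theorems and general Lipschitz-boundary arguments, and Friedrichs' inequality reduces to the fundamental theorem of calculus combined with H\"older. The only mildly delicate point is checking that iterating the Poincar\'e bound on $D^{(j)}u$ is legitimate, which hinges on the observation that derivatives of a $C^\infty_0(\domain)$ function still vanish at the endpoints of $[0,1]$ and are themselves in $C^\infty_0(\domain)$. This lemma is essentially~\citet[Corollary 6.31]{adams2003sobolev} specialised to $\domain=[0,1]$, which the main text has already invoked; the sketch above simply makes the iteration across derivative orders and the resulting explicit constant $K=(m+1)^{1/p}$ visible.
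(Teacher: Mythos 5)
Your proof is correct, but it takes a different route from the paper. The paper does not actually prove this lemma from first principles: it simply invokes the general result of \citet[6.26]{adams2003sobolev}, which asserts that on any domain of finite width the seminorm $\vert\cdot\vert_{m,p}$ and the full norm $\Vert\cdot\Vert_{m,p}$ are equivalent on $W^{m,p}_0$, and then specialises to $\domain=[0,1]$. You instead give a self-contained one-dimensional argument: the trivial left inequality from the definitions, and the right inequality by iterating the unit-interval Poincar\'e bound $\Vert D^{(j)}u\Vert_{L^p}\leq\Vert D^{(j+1)}u\Vert_{L^p}$ over $j=0,\dots,m-1$ on $C^\infty_0(\domain)$ and closing up by density. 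Each step checks out --- the fundamental-theorem-of-calculus representation is valid because $u$ and all its derivatives vanish at the endpoints, H\"older gives constant $1$ on an interval of unit length, and in one dimension there is a single multi-index of each order so the seminorm is just $\Vert D^{(m)}u\Vert_{L^p}$. What your approach buys is an explicit constant $K=(m+1)^{1/p}$ and independence from the general finite-width machinery; what the paper's citation buys is brevity and a statement that would survive unchanged for multidimensional finite-width domains (relevant if the authors ever extend the lower bound to $\paramdim>1$ as they discuss in Section~\ref{sec: discussion}). The only cosmetic discrepancy is the reference number you cite within \citet{adams2003sobolev}, which differs from the one the paper uses, but this does not affect the argument.
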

Combining Lemma~\ref{lemma: equal norm in Sobolev norms} with the norm equivalency in~\eqref{eq: norm equivalence between RKHS and Sobolev norm}, \highlight{we recover the inequalities in Lemma~\ref{lemma: norm equivalency between semi Sobolev norm and RKHS norm}.}

\begin{equation}
    c_1\vert \func \vert_{m, 2, 
    \domain} \leq \Vert \func \Vert_{\rkhs_\kernel} \leq K c_2 \vert \func \vert_{m, 2, \domain}.
\end{equation}

\end{proof}

\subsection{The Full Statement of Theorem~\ref{thm: lower bound rkhs}}\label{sec: full lower bound rkhs}
We present the full version of Theorem~\ref{thm: lower bound rkhs}, which fully states the constraints on the radius $B_1$ and $B_2$ in Theorem~\ref{thm: lower bound rkhs}. The proof is deferred to Appendix~\ref{sec: proof of lower bound rkhs}.
\begin{theorem}\label{thm: lower bound rkhs, full version}
    Consider the bandit problem setting (Section~\ref{sec: problem setting}) with noises $\{\noise_t\}_{t=1\dots T}$ that are $\frac{1}{4}$-subgaussian. Further assume that the $\lspace_2$ norm of functions $f$ we consider is upper bounded by finite value $\gamma_0<\infty$: $\Vert \func\Vert_{2}\leq \gamma_0$. Let $\regupperbound$ be a positive number, let $\sobolevint_2>\sobolevint_1>0$ be two positive integers, and let $B_1, B_2$ be two positive variables that satisfy the following conditions.
    \begin{align}
        &\normequivu\max\left\{\frac{3^{\sobolevint_1 + \frac{1}{2}}}{32} C(\sobolevint_1)^{-\sobolevint_1+\frac{1}{2}} \regupperbound^{-1}, K(\sobolevint_1, \sobolevint_2,\gamma_0,\domain){\normequivu}^{\frac{\sobolevint_2-\sobolevint_1}{\sobolevint_2}} B_2^\frac{\sobolevint_1}{\sobolevint_2}\right\}\label{eq: constraint on radius, lower bound rkhs}\nonumber \\
        &\quad \quad \leq B_1\leq C'(\sobolevint_1,\sobolevint_2)^{-(\sobolevint_1+\frac{1}{2})}{\normequivu}^{(-\sobolevint_1+\frac{1}{2})} B_2^{\sobolevint_1+\frac{1}{2}} {\regupperbound}^{\sobolevint_1-\frac{1}{2}}
    \end{align}
    where $C(\sobolevint_1)$ and $C'(\sobolevint_1, \sobolevint_2)$ are constants whose exact forms are defined in~\eqref{eq: def of C(m1)} and~\eqref{eq: def of C(m1, m2)} in the proof. $K(\sobolevint_1, \sobolevint_2,\gamma_0,\domain)$ is a constant depending on $\sobolevint_1, \sobolevint_2$, the domain and $\gamma_0$.\footnote{The exact value of $K(\sobolevint_2,\gamma_0,\domain)$ is deferred to the proof of Theorem 4.14 in~\citet{adams2003sobolev}. } 

    Consider any algorithm that achieves in RKHS ball $\rkhs_{\kernelfourier{\sobolevint_2}}(\domain, B_2)$ the following regret upper bound, where the kernel $\kernelfourier{\sobolevint_2}$ has Fourier decay rate $\sobolevint_2$. 
    \begin{equation}
    \sup_{\func\in\rkhs_{\kernelfourier{\sobolevint_2}}(\domain, B_2)}\E[R_T] \leq \regupperbound, 
    \end{equation}
    then, the regret of this algorithm in a (less smooth) RKHS ball induced by another kernel $\kernelfourier{\sobolevint_1}$ with Fourier decay rate $\sobolevint_1$ is lower bounded by the following.
    \begin{align}
        &\sup_{\func\in \rkhs_{\kernelfourier{\sobolevint_1}}(\domain, B_1)}\E[R_T] \geq \frac{1}{8} \left(\frac{C(\sobolevint_1)}{32}\right)^{\frac{\sobolevint_1-1/2}{\sobolevint_1+1/2}}~ {\left(\frac{B_1}{\normequivu}\right)}^{\frac{1}{\sobolevint_1+1/2}}~\regupperbound^{-\frac{\sobolevint_1-1/2}{\sobolevint_1+1/2}}~T.
    \end{align}
\end{theorem}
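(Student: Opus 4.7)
My plan is to follow the two-stage strategy flagged in Section~\ref{sec: proof sketch lower bound}: build a rich packing of bump-like hypotheses inside both RKHS balls using the Sobolev equivalence (Lemma~\ref{lemma: norm equivalency between semi Sobolev norm and RKHS norm}), and then run a Fano/Bretagnolle--Huber style regret-transfer argument in the spirit of~\citet{hadiji2019polynomial}. Concretely, I fix once and for all a $C^\infty$ bump $\psi:\sR\to\sR$ supported in $[0,1]$ with $\|\psi\|_\infty$ and $|\psi|_{m,2}$ finite for every $m$. I then partition $\domain=[0,1]$ into $M$ equal sub-intervals $I_1,\dots,I_M$ of width $1/M$ and define the family $f_0\equiv 0$ and $f_j(x)=h\,\psi(M(x-a_j))$ for $j\in\{1,\dots,M\}$, where $a_j$ is the left endpoint of $I_j$. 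By the standard scaling identity $|f_j|_{m,2}=h\,M^{m-1/2}|\psi|_{m,2}$, the Sobolev seminorms at orders $\sobolevint_1$ and $\sobolevint_2$ are explicit polynomial functions of $h$ and $M$. Feeding these into Lemma~\ref{lemma: norm equivalency between semi Sobolev norm and RKHS norm} gives the two simultaneous membership conditions $\normequivu h\,M^{\sobolevint_1-1/2}|\psi|_{\sobolevint_1,2}\le B_1$ and $\normequivu h\,M^{\sobolevint_2-1/2}|\psi|_{\sobolevint_2,2}\le B_2$, which will be the source of the constants $C(\sobolevint_1)$ and $C'(\sobolevint_1,\sobolevint_2)$ in~\eqref{eq: constraint on radius, lower bound rkhs}.

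\noindent Given this packing, I pass to the information-theoretic core. Let $T_j$ denote the (random) number of rounds the algorithm plays in $I_j$ and let $\mathbb{P}_j$ be the law of the $T$-round observation sequence when $f=f_j$. Because $f_j$ is zero outside $I_j$ and has peak $\Theta(h)$ inside, the instantaneous regret satisfies $f_j(a_j^\star)-f_j(x)\gtrsim h$ for every $x\notin I_j$, whence $\E_j[R_T]\gtrsim h\bigl(T-\E_j[T_j]\bigr)$. Every $f_j$ lies in $\rkhs_{\kernelfourier{\sobolevint_2}}(\domain,B_2)$, so the hypothesized upper bound forces $\E_j[T_j]\ge T-O(\regupperbound/h)$ for every $j$. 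On the other hand, under $\mathbb{P}_0$ we have $\sum_j\E_0[T_j]\le T$, so at least half of the indices satisfy $\E_0[T_j]\le 2T/M$. For any such index, the divergence decomposition for $\tfrac14$-subgaussian noise gives $\mathrm{KL}(\mathbb{P}_0\|\mathbb{P}_j)\le 2\,h^2\,\E_0[T_j]$, so Pinsker's inequality (or equivalently Bretagnolle--Huber applied to the event $\{T_j\ge T/2\}$) yields $\E_j[T_j]\le \E_0[T_j]+T\sqrt{h^2\,\E_0[T_j]}\lesssim T/M + hT\sqrt{T/M}$. Combining this with $\E_j[T_j]\ge T-O(\regupperbound/h)$ produces the key inequality $T\lesssim \regupperbound/h + T/M + hT\sqrt{T/M}$ for at least one index~$j$.

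\noindent The final step is parameter tuning. I would pick $M\asymp h^2 T$ so that the exploration term $hT\sqrt{T/M}$ collapses to a constant fraction of $T$, then push the remaining $\regupperbound/h$ term down by enlarging $h$ as much as the smoother-ball constraint allows, i.e.\ $h\asymp\normequivu^{-1}|\psi|_{\sobolevint_2,2}^{-1}B_2\,M^{1/2-\sobolevint_2}$. Solving the resulting system for $h$ in terms of $T$ and $\regupperbound$ gives $h\asymp \regupperbound^{-(\sobolevint_1-1/2)/(\sobolevint_1+1/2)}$ up to constants depending only on $\sobolevint_1$, $\sobolevint_2$, $\psi$, and the equivalence constants, which is precisely the exponent appearing in the bound. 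Once this $h$ is identified, instantiating $f_j$ with it and transferring from the rougher $\rkhs_{\kernelfourier{\sobolevint_1}}$ ball via Lemma~\ref{lemma: norm equivalency between semi Sobolev norm and RKHS norm} gives $\sup_{f\in\rkhs_{\kernelfourier{\sobolevint_1}}(\domain,B_1)}\E[R_T]\gtrsim hT$, which is the claimed lower bound. The inequality~\eqref{eq: constraint on radius, lower bound rkhs} then emerges simply as the compatibility condition ensuring that this optimal $h$ does not exceed what the $B_1$-ball permits while also not falling below what the construction requires.

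\noindent \textbf{Main obstacle.} The technical heart is the constant-tracking in the third paragraph. Two constraints must be satisfied simultaneously: (i) the optimized bump height $h$ must fit inside the rougher $B_1$-ball after multiplication by the RKHS-to-Sobolev constant $\normequivu$, which is where the lower bound on $B_1$ in~\eqref{eq: constraint on radius, lower bound rkhs} comes from; and (ii) the same bumps must sit inside the smoother $B_2$-ball so that the hypothesized regret bound $\regupperbound$ applies to them, producing the upper bound on $B_1$ in~\eqref{eq: constraint on radius, lower bound rkhs}. Getting both sides of that sandwich to have the exact dependence stated, with the explicit factors $C(\sobolevint_1)$, $C'(\sobolevint_1,\sobolevint_2)$, and the embedding constant $K(\sobolevint_1,\sobolevint_2,\gamma_0,\domain)$, is the delicate bookkeeping step, whereas the information-theoretic reduction itself is a fairly standard Le Cam / Pinsker calculation.
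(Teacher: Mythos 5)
There is a genuine gap in the information-theoretic core of your argument, and it is structural rather than a matter of bookkeeping. You take $f_0\equiv 0$ and a family of rough bumps $f_j$, and then assert that ``every $f_j$ lies in $\rkhs_{\kernelfourier{\sobolevint_2}}(\domain,B_2)$, so the hypothesized upper bound forces $\E_j[T_j]\ge T-O(\regupperbound/h)$.'' This premise cannot hold under the tuning you need: with $\normequivu h M^{\sobolevint_1-1/2}|\psi|_{\sobolevint_1,2}\asymp B_1$ fixed, the $\sobolevint_2$-seminorm of $f_j$ is $h M^{\sobolevint_2-1/2}|\psi|_{\sobolevint_2,2}\asymp B_1 M^{\sobolevint_2-\sobolevint_1}\to\infty$, so the rough bumps leave any fixed $B_2$-ball. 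Worse, if they \emph{did} lie in $\rkhs_{\kernelfourier{\sobolevint_2}}(\domain,B_2)$, the hypothesis would give $\E_j[R_T]\le\regupperbound$ on them, which is strictly smaller than the claimed lower bound $\Theta\bigl(T\,\regupperbound^{-(\sobolevint_1-1/2)/(\sobolevint_1+1/2)}\bigr)$ in the regime of interest, so the construction would be self-contradictory. A second symptom of the same problem is that with $f_0\equiv 0$ the regret under $\mathbb{P}_0$ is identically zero, so the assumption $\E_0[R_T]\le\regupperbound$ gives you no control whatsoever on where the algorithm plays; you are forced to fall back on the trivial budget $\sum_j\E_0[T_j]\le T$, which is the skeleton of a \emph{non-adaptive} lower bound and cannot produce the $\regupperbound^{-(\sobolevint_1-1/2)/(\sobolevint_1+1/2)}$ factor through the change of measure.

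The paper's proof repairs exactly this point. It reserves the right half $H_0=[1/2,1]$ for a single wide, smooth anchor bump $f_0$ of height $\Delta/2$ scaled so that $f_0\in\sobolevspace^{\sobolevint_2}(\domain,L_2)$ (hence in the $B_2$-ball after Lemma~\ref{lemma: norm equivalency between semi Sobolev norm and RKHS norm}), packs the $M$ rough bumps of height $\Delta$ into $[0,1/2]$, and defines the alternatives as $\phi_s=f_0+f_s$ (only $\phi_0=f_0$ lives in the smoother ball; the $\phi_s$ live only in the rougher one). Because $\phi_0$'s maximizer sits in $H_0$ and every pull in a bin $H_s$, $s\ge 1$, costs at least $\Delta/2$ under $\phi_0$, the hypothesis applied to $\phi_0$ alone yields the exploration cap $\sum_{s\ge 1}\E_0[N_{H_s}(T)]\le 2\regupperbound/\Delta$; Pinsker with $\KL(\sP_0^T,\sP_s^T)=2\E_0[N_{H_s}(T)]\Delta^2$ then transfers this cap to $\E_s[N_{H_s}(T)]$, giving $\frac{1}{M}\sum_s R_{T,s}\ge\frac{T\Delta}{2}\bigl(\frac12-\sqrt{\Delta\regupperbound/M}\bigr)$ and, after optimizing $\Delta$ and $M$, the stated bound. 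Your packing, scaling identities, and the derivation of the sandwich~\eqref{eq: constraint on radius, lower bound rkhs} are otherwise on the right track, but without the nonzero smooth anchor and the additive structure $\phi_s=f_0+f_s$ the argument does not go through.
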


\subsection{Adaptivity Upper Bound of RBBE}\label{sec: appendix rbbe details}
At each round, RBBE~\citep{pacchiano2020regret} first performs an elimination step to remove misspecified base algorithms, then selects a base algorithm among the remaining ones. The elimination step tests whether each base algorithm is well-specified, that is, whether each base algorithm's hypothesis space contains the underlying function. If a base algorithm fails the test, then it is eliminated. In the selection step, the master algorithm simply chooses the base algorithm with the smallest presumed cumulative pseudo-regret. Therefore, RBBE can be thought of as using a stochastic master algorithm (remarked in~\citet{pacchiano2020regret} as well), instead of using an adversarial one as CORRAL~\citep{agarwal2017corralling, pacchiano2020model} does. 

The general regret of RBBE is stated in terms of the play ratio, which is the ratio between the number of times a base algorithm is played and the number of times that the best base algorithm is played. To instantiate the play ratio, \citet{pacchiano2020model} considers only the setting where the regret rates of all base algorithms (if well-specified) have the same exponents on $T$. That is, the regret rates are $T^\beta$ with a \emph{fixed} $\beta\in (0,1]$ across all base algorithms. However, this setting does not align with our setting where, for base algorithm $i$ with input value $\nu_i$, the exponent of $T$ in its (well-specified) regret bound is $\frac{\nu_i+1}{2\nu_i+1}$. Hence, we make changes to the proof in~\citet{pacchiano2020model} to apply it to our problem setting. The result of RBBE is stated in Theorem~\ref{thm: RBBE with kernelised bandit} and the proof is deferred to Appendix~\ref{sec: proof of thm RBBE with kernelised bandit}.
\begin{theorem}\label{thm: RBBE with kernelised bandit}
    Suppose that the problem setting, the set of candidate values $\vu$ and the set of base algorithms $\sA$ are the same as defined in Theorem~\ref{thm: CORRAL with kernelised bandits}. 
    The regret of RBBE applied with base algorithms in $\sA$ is as follows, with high probability $1-\delta$. 
    \begin{equation}\label{eq: rbbe upper bound}
        \sup_{\func\in\rkhs_{\kernel_{\text{Mat\'ern},\nu^*}}} R_T = \tilde{O}(T^{\frac{1+4\nu^*+2{\nu^*}^2}{1+4\nu^* + 4{\nu^*}^2}}).
    \end{equation}
\end{theorem}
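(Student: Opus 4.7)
My plan is to adapt the stochastic RBBE analysis of \citet{pacchiano2020regret}, which is carried out under the assumption that every base learner shares a single regret exponent $\beta$, to our setting in which the base learner matched to $(\nu_i,B_i)$ has its own exponent $\beta_i=\tfrac{\nu_i+1}{2\nu_i+1}$.

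I would first recall the RBBE machinery: each base $\gA_i$ publishes a putative regret bound $R_i(n)=c_i n^{\beta_i}$ (up to log factors), at each round RBBE plays the alive base with the smallest $R_i(n_i(t))$, and an elimination test removes any base whose empirical mean is too far below the largest surviving empirical mean relative to its putative per-play regret. The standard concentration argument, together with $(\nu^*,B^*)\in\vu$, guarantees that with probability at least $1-\delta$ the matched base $\gA_{i^*}$ is never eliminated and that every alive base is well-specified after a negligible burn-in.

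The regret is then decomposed as
\begin{equation*}
R_T \;\leq\; \sum_{i\in\mathrm{alive}} R_i(n_i(T)) \;+\; \sum_{i\ \mathrm{misspec}}\Delta_i\, n_i(\tau_i),
\end{equation*}
where $\tau_i$ is the elimination time of base $i$ and $\Delta_i$ its suboptimality gap. The selection rule enforces the balancing invariant $c_i n_i(T)^{\beta_i}\asymp U(T)$ on the alive set, so the first sum is $\lesssim M\, U(T)$; the elimination rule gives $c_i n_i(\tau_i)^{\beta_i-1}\asymp \Delta_i$, whence $\Delta_i\, n_i(\tau_i)\asymp c_i n_i(\tau_i)^{\beta_i}$ and the second sum is also controlled by the same balance value.

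Solving the play-budget constraint $T=\sum_{i\in\mathrm{alive}}(U(T)/c_i)^{1/\beta_i}$ for $U(T)$ is where the analysis departs from the uniform-$\beta$ case of \citet{pacchiano2020regret}. With heterogeneous $\beta_i$ the play ratios $n_i/n_{i^*}$ scale as $n_{i^*}^{\beta^*/\beta_i-1}$ rather than as constants; in particular, misspecified bases with $\beta_i<\beta^*$ are preferentially selected until the test expels them, and in the worst case over the adversary's choice of $\Delta_i$ the combined contribution of the balancing and elimination terms is maximized at $U(T)=\tilde O\bigl(T^{1-2(1-\beta^*)^2}\bigr)$. Substituting $\beta^*=\tfrac{\nu^*+1}{2\nu^*+1}$ and simplifying the exponent yields
\begin{equation*}
1-2(1-\beta^*)^2 \;=\; \frac{1+4\nu^*+2{\nu^*}^2}{1+4\nu^*+4{\nu^*}^2},
\end{equation*}
which is the claimed rate.

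The main obstacle is precisely this heterogeneous-$\beta$ extension: in \citet{pacchiano2020regret} the play ratios collapse and the regret reduces cleanly to $M\cdot R_{i^*}(T)$, but here one has to jointly optimize over $n_i$ and $\Delta_i$ for each misspecified candidate and feed the result back into the balance equation, and this joint optimization is what converts the minimax rate $T^{\beta^*}$ into the strictly worse $T^{1-2(1-\beta^*)^2}$.
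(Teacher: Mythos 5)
Your final exponent is correct, and you have correctly located the source of the difficulty: Theorem~5.4 of \citet{pacchiano2020regret} assumes a common exponent $\beta$ across base learners, the play ratios $n_i/n_*$ no longer collapse to constants when the exponents $\beta_i=\tfrac{\nu_i+1}{2\nu_i+1}$ differ, and the misspecified learners that posit a smoother space ($\beta_i<\beta_*$) are over-played under balancing. This is exactly the adaptation the paper makes. However, the central quantitative step of your argument --- ``in the worst case over the adversary's choice of $\Delta_i$ the combined contribution \dots is maximized at $U(T)=\tilde O\bigl(T^{1-2(1-\beta^*)^2}\bigr)$'' --- is asserted rather than derived, and it is precisely the nontrivial content of the theorem. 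Verifying that $1-2(1-\beta^*)^2$ simplifies to the claimed exponent is not a substitute for showing where that exponent comes from. The paper obtains it by (i) invoking Lemma~A.3 of \citet{pacchiano2020regret} to bound each play ratio by $n_*(t)^{\beta_*/\beta_i-1}$ up to constants, (ii) observing that every kernelised base learner satisfies $\beta_i\ge\tfrac12$, so the worst case is uniformly $n_*(t)^{2\beta_*-1}\le T^{2\beta_*-1}$, and (iii) applying H\"older's inequality to the dominant term $\sum_{i\in\mathcal{B}}\tfrac{n_i(t_i)}{n_*(t_i)}R_*(n_*(t_i))$ of the general bound (Theorem~5.1 of \citealp{pacchiano2020regret}) to get $T^{(2\beta_*-1)(1-\beta_*)+\beta_*}=T^{4\beta_*-2\beta_*^2-1}=T^{1-2(1-\beta_*)^2}$. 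None of steps (ii) or (iii) appears in your sketch, and without (ii) the exponent $\beta_*/\beta_i-1$ is not even bounded independently of the candidate set.

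A secondary concern is your regret decomposition $R_T\le\sum_{\mathrm{alive}}R_i(n_i(T))+\sum_{\mathrm{misspec}}\Delta_i n_i(\tau_i)$ with the relation $c_i n_i(\tau_i)^{\beta_i-1}\asymp\Delta_i$. A base algorithm in this model-selection problem does not have a well-defined suboptimality gap $\Delta_i$ the way an arm does: a misspecified learner still pulls actions in $\domain$ and its regret against $\max_x \func(x)$ is not $\Delta_i n_i(\tau_i)$ for any fixed $\Delta_i$. The RBBE analysis instead charges the misspecified learners' regret through the play ratio multiplied by the well-specified learner's bound (plus martingale deviation terms), which is why the paper works directly from Theorem~5.1 rather than from a gap-dependent decomposition. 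If you want to pursue your route, you would need to define $\Delta_i$ precisely and justify both the decomposition and the elimination-time relation before the ``joint optimization over $n_i$ and $\Delta_i$'' can be carried out; as written, the proposal identifies the right answer but does not contain a proof of it.
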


\section{PROOFS OF RESULTS}
\subsection{Proof of Theorem~\ref{thm: lower bound rkhs, full version}}\label{sec: proof of lower bound rkhs}
As explained in Section~\ref{sec: proof sketch lower bound}, the proof of Theorem~\ref{thm: lower bound rkhs, full version} arises from the proof of a parallel Sobolev version of the adaptivity lower bound. We formally state the Sobolev version of adaptivity lower bound below. 
\begin{theorem}\label{thm: lower bound Sobolev}
Consider the bandit problem setting (Section~\ref{sec: problem setting}) with noises $\{\noise_t\}_{t=1\dots T}$ that are $\frac{1}{4}$-subgaussian. Further assume that the $\lspace_2$ norms of functions $f$ we consider are upper bounded by the finite value $\gamma_0<\infty$: $\Vert \func\Vert_{2}\leq \gamma_0$. \footnote{By our assumption on the underlying function $\func$ in~\eqref{eq: definition Sobolev space}, we know that it has bounded $\lspace_2$ norm. } Let $\regupperbound$ be a positive number, let $\sobolevint_2>\sobolevint_1>0$ be two positive integers, and let $L_1, L_2$ be two positive variables that satisfy the following conditions:
\begin{align}
    &\max\left\{\frac{3^{\sobolevint_1 + \frac{1}{2}}}{32} C(\sobolevint_1)^{-\sobolevint_1+\frac{1}{2}} \regupperbound^{-1}, K(\sobolevint_1, \sobolevint_2,\gamma_0,\domain){\sobolevradius_2}^\frac{\sobolevint_1}{\sobolevint_2}\right\} \\
    &\qquad \leq L_1 \leq C'(\sobolevint_1,\sobolevint_2)^{-(\sobolevint_1+\frac{1}{2})} L_2^{\sobolevint_1+\frac{1}{2}} \regupperbound^{\sobolevint_1-\frac{1}{2}}\nonumber
\end{align}
where $C(\sobolevint_1)$, $C'(\sobolevint_1, \sobolevint_2)$ are constants whose exact forms are defined in~\eqref{eq: def of C(m1)} and~\eqref{eq: def of C(m1, m2)} respectively. $K(\sobolevint_1, \sobolevint_2,\gamma_0,\domain)$ is a constant depending on $\sobolevint_1, \sobolevint_2$, the domain and $\gamma_0$, the upper bound on the $\lspace_2$ norm of functions in the Sobolev ball.\footnote{The exact value of $K(\sobolevint_2,\gamma_0,\domain)$ is deferred to the proof of Theorem 4.14 in~\citet{adams2003sobolev}. } 
Consider an algorithm that achieves in the Sobolev ball $\sobolevspace^{\sobolevint_2}(\domain, L_2)$ a regret upper bound of $\regupperbound$. 
\begin{equation}
     \sup_{\func\in \sobolevspace^{\sobolevint_2, 2}(\domain, L_2)} \E[R_T] \leq \regupperbound,
\end{equation}
then, the regret of this algorithm in the less-smooth Sobolev ball $\sobolevspace^{\sobolevint_1}(\domain, L_1)$ is lower bounded by the following.
\begin{align}
    &\sup_{\func\in \sobolevspace^{\sobolevint_1}(\domain, L_1)} \E[R_T] \geq \frac{1}{8} \left(\frac{C(\sobolevint_1)}{32}\right)^{\frac{\sobolevint_1-1/2}{\sobolevint_1+1/2}}~{L_1}^{\frac{1}{\sobolevint_1+1/2}}~\regupperbound^{-\frac{\sobolevint_1-1/2}{\sobolevint_1+1/2}}~T.
\end{align}
\end{theorem}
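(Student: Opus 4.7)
The plan is to follow the two-part strategy flagged in Section~\ref{sec: proof sketch lower bound}: first, build a family of Sobolev-controlled hypothesis functions by scaling translates of a single compactly supported bump in the style of~\citet{tsybakov2004introduction}; second, run the change-of-measure argument from~\citet{hadiji2019polynomial,locatelli2018adaptivity} to convert the upper bound $\regupperbound$ on $\sobolevspace^{\sobolevint_2}(\domain,L_2)$ into a lower bound on $\sobolevspace^{\sobolevint_1}(\domain,L_1)$.

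For the hypothesis construction, I would fix a smooth bump $\phi\in C_0^\infty([0,1])$ with $\Vert\phi\Vert_\infty=1$ and, for a discretization parameter $M$ to be chosen at the end, set $\phi_{M,i}(x) = \phi(M(x - i/M))$ for $i = 1, \ldots, M$. A direct change-of-variables computation gives $|\phi_{M,i}|_{m,2,\domain} = M^{m-1/2}|\phi|_{m,2}$, so rescaling by heights $h_j := L_j/(|\phi|_{\sobolevint_j,2}\,M^{\sobolevint_j-1/2})$ produces two families $f_i^{(j)} := h_j \phi_{M,i}\in\sobolevspace_0^{\sobolevint_j}(\domain,L_j)$ for $j\in\{1,2\}$. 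The constants $C(\sobolevint_1)$ and $C'(\sobolevint_1,\sobolevint_2)$ in the theorem statement will emerge from the exact values of these heights, of $|\phi|_{\sobolevint_j,2}$ and of $\Vert\phi\Vert_\infty$; the upper bound on $L_1$ in~\eqref{eq: constraint on radius, lower bound rkhs} is precisely what is needed so that the rougher bumps $f_i^{(1)}$ also lie inside the smoother ball $\sobolevspace^{\sobolevint_2}(\domain,L_2)$, via the Sobolev embedding of~\citet[Theorem 4.14]{adams2003sobolev}, which is what supplies the factor $K(\sobolevint_1,\sobolevint_2,\gamma_0,\domain)$.

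For the information-theoretic step, I would let $T_i$ denote the (random) number of plays whose action falls in the support of $\phi_{M,i}$ and write $\E_f$ for expectation under ground truth $f$. Since $\sum_{i=1}^M\E_0[T_i]\le T$, some index $i^\star$ satisfies $\E_0[T_{i^\star}]\le T/M$. The regret hypothesis $\E_{f_{i^\star}^{(2)}}[R_T]\le\regupperbound$ forces $h_2(T-\E_{f_{i^\star}^{(2)}}[T_{i^\star}])\le\regupperbound$, i.e.\ the smoother alternative compels the learner to pile most plays at $i^\star$. A standard Pinsker inequality under $\tfrac14$-sub-Gaussian noise then bounds $|\E_{f_{i^\star}^{(2)}}[T_{i^\star}]-\E_0[T_{i^\star}]|$ by a quantity proportional to $h_2\sqrt{T\,\E_0[T_{i^\star}]}$; chaining these two inequalities with $\E_0[T_{i^\star}]\le T/M$ produces the key scaling constraint that pins $M$ in terms of $\regupperbound,h_2,T$.

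Finally, against the rougher alternative $f_{i^\star}^{(1)}$, whose height $h_1=(L_1/L_2)\,M^{\sobolevint_2-\sobolevint_1}\,h_2$ is strictly larger than $h_2$, the expected regret is at least $h_1(T-\E_{f_{i^\star}^{(1)}}[T_{i^\star}])$, and one more Pinsker step transfers the null bound on $T_{i^\star}$ to $f_{i^\star}^{(1)}$. Plugging the $M$-scaling from the previous step into $h_1 T$ and simplifying delivers the claimed rate $\Omega(L_1^{1/(\sobolevint_1+1/2)}\,\regupperbound^{-(\sobolevint_1-1/2)/(\sobolevint_1+1/2)}\,T)$. The main obstacle I anticipate is twofold: first, carefully tracking the numerical constants so that they match $C(\sobolevint_1)$ and $C'(\sobolevint_1,\sobolevint_2)$ exactly; and second, verifying that the feasibility window on $L_1$ in~\eqref{eq: constraint on radius, lower bound rkhs} coincides exactly with what the bump construction needs — the upper bound on $L_1$ ensures that the $\regupperbound$-regret hypothesis is non-vacuous on the smoother ball, whereas the lower bound ensures that the rougher bumps are tall enough to be distinguishable from the null under the KL budget the learner can afford.
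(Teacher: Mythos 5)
Your overall two-part plan (bump construction in the style of Tsybakov, then a Pinsker/change-of-measure step following Hadiji et al.\ and Locatelli--Carpentier) is the right template, and your scaling $|\phi_{M,i}|_{m,2}\propto M^{m-1/2}$ and the resulting heights are consistent with the paper's Lemmas on $a,b,\tilde a,\tilde b$. However, there is a genuine gap in the information-theoretic step: you take the null hypothesis to be $f=0$ and control $\E_0[T_{i^\star}]$ only by pigeonhole, $\E_0[T_{i^\star}]\le T/\numhypo$. Under a flat null every action is optimal, so the assumed guarantee $\regupperbound$ places \emph{no} constraint on where the learner plays under the null, and $T/\numhypo$ is the best you can say. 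The paper instead makes the baseline a distinguished smooth function $\phi_0\in\sobolevspace^{\sobolevint_2}(\domain,L_2)$ whose peak (of height $\Delta/2$) sits in a region $H_0$ disjoint from all $\numhypo$ bins and which vanishes on the bins; then every play inside a bin costs instantaneous regret at least $\Delta/2$ under $\phi_0$, so the hypothesis $R_{T,0}\le\regupperbound$ yields $\sum_{s}\E_0[N_{H_s}]\le 2\regupperbound/\Delta$, which is the inequality that injects $\regupperbound$ into the final bound. With only $T/\numhypo$, your Pinsker step degenerates into the \emph{non-adaptive} minimax lower bound for $\sobolevspace^{\sobolevint_1}$, with no $\regupperbound$-dependence; worse, your two Pinsker constraints are mutually inconsistent (the smooth-side chain forces $\numhypo\lesssim h_2^2T$ while the rough-side chain needs $\numhypo\gtrsim h_1^2T$ with $h_1>h_2$), so the scheme cannot be closed. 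Relatedly, because your smooth and rough bumps peak at the same locations, being good on the smooth alternatives only helps the learner on the rough ones --- there is no exploration--exploitation tension to exploit.

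A secondary error: you assert that the upper constraint on $L_1$ is what makes the rough bumps lie in the smoother ball via the Sobolev embedding. The required nesting is the opposite one, $\sobolevspace^{\sobolevint_2}(\domain,L_2)\subset\sobolevspace^{\sobolevint_1}(\domain,L_1)$, and it is enforced by the \emph{lower} constraint $L_1\ge K(\sobolevint_1,\sobolevint_2,\gamma_0,\domain)L_2^{\sobolevint_1/\sobolevint_2}$ through the interpolation inequality of Adams (Theorem 4.14); the rough bumps are never placed in the smooth ball. The \emph{upper} constraint on $L_1$ instead guarantees that the optimized height $\Delta$ (tuned to $L_1$ and $\regupperbound$) is small enough that the smooth baseline bump of height $\Delta/2$ still fits inside the radius-$L_2$ ball (the second condition of the paper's Lemma~\ref{lemma: delta and M}), and the other branch of the lower constraint on $L_1$ is what certifies $\numhypo\ge 2$. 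To repair the proof, replace the flat null by an additive construction $\phi_0=f_0$, $\phi_s=f_0+f_s$ with $f_0$ supported away from the bins, and rerun your Pinsker computation with $\KL(\sP_0^T,\sP_s^T)=2\E_0[N_{H_s}]\Delta^2$ and the regret-based bound on $\sum_s\E_0[N_{H_s}]$.
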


In the next part, we present the proof of Theorem~\ref{thm: lower bound Sobolev}, which also leads to Theorem~\ref{thm: lower bound rkhs, full version}. The values $B_1, B_2$ in Theorem~\ref{thm: lower bound rkhs} should be set as follows.
\begin{align}\label{eq: relation between B12 and L12}
    B_1 = \normequivu L_1, B_2 = \normequivu L_2,
\end{align}
where $\normequivu$ is the global constant in Lemma~\ref{lemma: norm equivalency between semi Sobolev norm and RKHS norm}.

\begin{proof}
Consider the Sobolev version of the theorem (Theorem~\ref{thm: lower bound Sobolev}). Recall that the adaptivity is between balls in two different spaces, the ``rougher'' space $\sobolevspace^{\sobolevint_1}(\domain, L_1)$ and the ``smoother'' space $\sobolevspace^{\sobolevint_2}(\domain, L_2)$. First, we consider the constraints between $L_1$ and $L_2$ such that $\sobolevspace^{\sobolevint_2}(\domain, L_2) \subset \sobolevspace^{\sobolevint_1}(\domain, L_1)$. In other words, $\func \in \sobolevspace^{\sobolevint_2}(\domain, L_2)$ should be sufficient condition for $\func\in \sobolevspace^{\sobolevint_1}(\domain, L_1)$. Theorem 4.14 in~\citet{adams2003sobolev} and references therein give the following interpolation upper bound between orders of smoothness for a function $\func\in \sobolevspace^{\sobolevint_1}(\domain)$,
\begin{equation}\label{eq: interpolation between Sobolev seminorms}
    \vert \func \vert_{\sobolevint_1,2}\leq K(\sobolevint_2,\domain) {(\vert \func \vert_{\sobolevint_2})}^\frac{\sobolevint_1}{\sobolevint_2} ~ \Vert \func \Vert_{2}^\frac{\sobolevint_2 - \sobolevint_1}{\sobolevint_2},
\end{equation}
where $K(\sobolevint_2,\domain)$ is a constant depending only on $\sobolevint_2$ and the domain $\domain$.
If $\func \in \sobolevspace^{\sobolevint_2}(\domain, L_2)$, then by definition (\eqref{eq: definition Sobolev ball}) we know that $\vert \func \vert_{\sobolevint_2}\leq L_2$. Using~\eqref{eq: interpolation between Sobolev seminorms}, we have that:
\begin{align}\label{eq: interpolation between Sobolev seminorms pt 2}
    \vert \func \vert_{\sobolevint_1,2} \leq K(\sobolevint_2,\domain) {\sobolevradius_2}^\frac{\sobolevint_1}{\sobolevint_2} ~ \Vert \func \Vert_{2}^\frac{\sobolevint_2 - \sobolevint_1}{\sobolevint_2}
\end{align}
To ensure that the two Sobolev balls are nested, $L_1$ should be larger than the right-hand side of the above inequality. The $\lspace_2$ norm of $\func$ is upper bounded by $\Vert \func\Vert_2\leq \gamma_0$. Plugging it in~\eqref{eq: interpolation between Sobolev seminorms pt 2} incurs an lower bound for $\sobolevradius_1$: 
\begin{equation*}
    \sobolevradius_1\geq K(\sobolevint_1, \sobolevint_2,\gamma_0,\domain){\sobolevradius_2}^\frac{\sobolevint_1}{\sobolevint_2}=K(\sobolevint_2,\domain) {\gamma_0}^\frac{\sobolevint_2 - \sobolevint_1}{\sobolevint_2}{\sobolevradius_2}^\frac{\sobolevint_1}{\sobolevint_2}.
\end{equation*}

Having established $\sobolevspace^{\sobolevint_2}(\domain, L_2) \subset \sobolevspace^{\sobolevint_1}(\domain, L_1)$, we start with the formal proof of the adaptivity lower bound. 

\textbf{Function Construction Part I.}
This part is adapted from the regression lower bounds in~\citet[Section 2.6]{tsybakov2004introduction}.
Let $\numhypo$ be a positive integer parameter, which is the number of hypothesis functions we need. The value of $\numhypo$ remains to be determined later in the proof. In the following, we shall assume $\numhypo\geq 2$ and eventually prove that this assumption holds. Further, define bandwidth $h =\frac{1}{2\numhypo}$. Let $\Delta>0$ be a parameter that represents the maximum of the $M$ hypothesis functions in $\sobolevspace^{\sobolevint_1, 2}(\domain, L_1)$. The value of $\Delta$ remains to be 
determined later in the proof same as $M$. 

Partition the $1$-dimensional domain $\domain=[0, 1]$ into $\numhypo+1$ bins: $H_{0\dots \numhypo}$, such that $\cup_{s=0\dots \numhypo}H_s = \domain$. Define the bins and their middle points $\bar{\action}_{0, \dots \numhypo}$ as follows.
\begin{align*}
 &H_s = \left[\frac{s-1}{2\numhypo}, \frac{s}{2\numhypo}\right], ~\bar{x}_s=\frac{s-\frac{1}{2}}{2\numhypo}, ~\text{for}~s = 1\dots \numhypo, \\
 &H_0 = \left[\frac{1}{2}, 1\right], ~\bar{\action}_0 = \frac{3}{4}.
\end{align*}

We use the bump function as a base function, then we shift the base function to construct the hypothesis functions. The bump function is defined as follows. It has compact support on $(-1,1)$. Function $K_0(\cdot)$ is infinitely differentiable with continuous derivatives~\citep[(2.34)]{tsybakov2004introduction}.
\begin{equation}
    K_0(\action) = \exp(\frac{-1}{1-\action^2}) \sI(\vert\action\vert<1). 
\end{equation}

Next, define $\numhypo+1$ functions as follows, each one has support inside one of the $\numhypo+1$ bins. 
\begin{align}
    &f_s = a h^{\sobolevint_1-\frac{1}{2}} K(\frac{\action-\bar{\action}_s}{h}), ~~s = 1\dots \numhypo, \\
    &f_0 = \tilde{a} h^{\sobolevint_2-\frac{1}{2}} \tilde{K}(\frac{\action-\bar{\action}_0}{h}), 
\end{align}
where
\begin{align}
    &K(u) = K_0(b u), \\
    &\tilde{K}(u) = K_0(\tilde{b} u).
\end{align}
$a, b, \tilde{a}, \tilde{b}$ are non-negative parameters to be defined later. We require that $b\geq 2$ and $\tilde{b}\geq 4h$, so that the support of every function $f_s$ is inside $H_s$, $\forall s\leq \numhypo$. Lemma~\ref{lemma: delta and M} ensures that the requirements on $b,\tilde{b}$ hold, by posing constraints between $\Delta$ and $\numhypo$. 

We introduce the following lemma to specify requirements on the variables $a, b, \tilde{a}, \tilde{b}$, with respect to $\Delta$ and $L_1, L_2$. This is to make sure that values of $a, b, \tilde{a}, \tilde{b}$ guarantee that $f_{s} \in \sobolevspace^{\sobolevint_1}(\domain, L_1), ~\forall 1\leq s\leq \numhypo$ and $f_0\in \sobolevspace^{\sobolevint_2}(\domain, L_2)$. 

\begin{lemma}\label{lemma: function params}
Let $K_0^*$ to denote the maximum value of $K_0(\cdot)$, a constant less than $1$. Let $I_{\sobolevint_1}, I_{\sobolevint_2}$ denote the $\lspace_2$ norms of the $\sobolevint_1, \sobolevint_2$-th order derivatives of $K_0(\cdot)$, respectively. That is, $I_{\sobolevint_1} = \int_{-1}^1 \left[K_0^{(\sobolevint_1)}(u)\right]^2 du$ and $I_{\sobolevint_2} = \int_{-1}^1 \left[K_0^{(\sobolevint_2)}(u)\right]^2 du$. 
Then, if $\Delta$ is the maximum of $f_s$ in $\sobolevspace^{\sobolevint_1, 2}(\domain, L_1)$, for all $s = 1\dots M$ and $\Delta/2$ is the maximum of $f_0$ in $\sobolevspace^{\sobolevint_2, 2}(\domain, L_2)$, the function parameters $a, b, \tilde{a}, \tilde{b}$ satisfy the following:
\begin{align}
    &a = \Delta (2\numhypo)^{\sobolevint_1-\frac{1}{2}} / K_0^* \\
    &\tilde{a} = \Delta (2\numhypo)^{\sobolevint_2-\frac{1}{2}} / 2K_0^* \\
    &b\leq \left( \frac{L_1^2 {K_0^*}^2}{\Delta^2 (2\numhypo)^{2\sobolevint_1-1} I_{\sobolevint_1}} \right)^{\frac{1}{2\sobolevint_1 - 1}} \\
    &\tilde{b}\leq \left( \frac{4 L_2^2 {K_0^*}^2}{\Delta^2 (2\numhypo)^{2\sobolevint_2-1} I_{\sobolevint_2}} \right)^{\frac{1}{2\sobolevint_2 - 1}},
\end{align}
\end{lemma}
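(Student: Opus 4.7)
The plan is a two-step direct calculation: first pin down the amplitudes $a, \tilde a$ from the peak-value requirements, then pin down the rescalings $b, \tilde b$ from the Sobolev-seminorm requirements. Throughout I will use $h = 1/(2\numhypo)$ and the fact that $K_0$ is supported on $(-1,1)$ with maximum value $K_0^*$ attained at the origin.

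For step one, since $K(u) = K_0(bu)$ also attains its maximum $K_0^*$ at $u = 0$, evaluating $f_s(x) = a h^{\sobolevint_1 - 1/2} K_0\bigl(b(x-\bar{\action}_s)/h\bigr)$ at $x = \bar{\action}_s$ gives $\sup_x f_s(x) = a h^{\sobolevint_1 - 1/2} K_0^*$. Equating this to the prescribed peak value $\Delta$ and substituting the value of $h$ isolates $a = \Delta (2\numhypo)^{\sobolevint_1 - 1/2}/K_0^*$. An identical argument for $f_0$, with $\sobolevint_1$ replaced by $\sobolevint_2$ and target peak $\Delta/2$ in place of $\Delta$, yields the formula for $\tilde a$; the extra factor $1/2$ in the numerator is exactly the ratio of the two target peaks.

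For step two, differentiating $f_s$ a total of $\sobolevint_1$ times brings down $(b/h)^{\sobolevint_1}$ by the chain rule:
\[
f_s^{(\sobolevint_1)}(x) = a\, h^{\sobolevint_1 - 1/2} \left(\frac{b}{h}\right)^{\sobolevint_1} K_0^{(\sobolevint_1)}\!\left(\frac{b(x - \bar{\action}_s)}{h}\right).
\]
Squaring, integrating over $\domain$, and applying the substitution $u = b(x-\bar{\action}_s)/h$ (whose Jacobian supplies a factor $h/b$) makes the powers of $h$ telescope to zero and gives the clean identity $\vert f_s\vert_{\sobolevint_1, 2}^2 = a^2 b^{2\sobolevint_1 - 1} I_{\sobolevint_1}$. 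Demanding this not exceed $L_1^2$, solving for $b$, and substituting the value of $a$ from step one produces the claimed bound. The argument for $\tilde b$ is exactly parallel with $(L_2, \sobolevint_2, I_{\sobolevint_2}, \tilde a)$ replacing $(L_1, \sobolevint_1, I_{\sobolevint_1}, a)$; the factor $4$ in the numerator of the $\tilde b$ bound traces back to $\tilde a^2 = \Delta^2 (2\numhypo)^{2\sobolevint_2 - 1}/(4(K_0^*)^2)$, i.e., to the halved peak target.

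A sanity check that must accompany the calculation is that the supports remain inside their designated bins, since this disjointness is needed for the hypothesis-testing argument used downstream. Since $K_0$ is supported on $(-1,1)$, the function $K(u) = K_0(bu)$ is supported on $(-1/b, 1/b)$, so $f_s$ lives on an interval of half-width $h/b$ around $\bar{\action}_s$; containment in $H_s$ (width $h$) forces $b \ge 2$, and analogously containment of $f_0$ in $H_0$ (width $1/2$) forces $\tilde b \ge 4h$. These are exactly the standing assumptions on $b, \tilde b$ declared in the construction, and their compatibility with the \emph{upper} bounds derived above is precisely the content of the subsequent Lemma~\ref{lemma: delta and M}, which constrains $\Delta$ against $\numhypo$. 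I expect no conceptual obstacle in the proof; the only part requiring care is bookkeeping the interacting powers of $h$, $b$, and $\numhypo$ through the derivative-and-substitution computation.
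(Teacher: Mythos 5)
Your proposal is correct and follows essentially the same route as the paper's proof: the amplitudes $a,\tilde a$ are read off from the peak-value requirements, and the bounds on $b,\tilde b$ come from the chain-rule-plus-substitution computation yielding $\vert f_s\vert_{\sobolevint_1,2}^2 = a^2 b^{2\sobolevint_1-1} I_{\sobolevint_1}$ (and its analogue for $f_0$), with the support-containment conditions $b\geq 2$, $\tilde b\geq 4h$ deferred to the next lemma exactly as in the paper. The only detail worth making explicit, which the paper does verify, is that after the substitution $u = b(x-\bar{\action}_s)/h$ the integration limits contain $(-1,1)$ so that the integral equals $I_{\sobolevint_1}$ exactly; this follows from the same support-containment conditions you state.
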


\begin{proof}[Proof of Lemma~\ref{lemma: function params}]
The constraints on $a, \tilde{a}$ follows trivially from the requirement that $f_s^* = \Delta$ for $s = 1\dots \numhypo$, $f_0^* = \Delta/2$, and plugging in $h = 1/2\numhypo$. 

The constraints on $b, \tilde{b}$ are to ensure that 
\begin{align*}
    &\norm{f_s^{(\sobolevint_1)}}\leq L_1, ~~s = 1\dots \numhypo \\
    &\norm{f_0^{(\sobolevint_2)}}\leq L_2
\end{align*}

We first consider requirement for $\norm{f_s^{(\sobolevint_1)}}\leq L_1, ~~s = 1\dots \numhypo$. For $s\geq 1$, 
\begin{align*}
    &\norm{f_s^{(\sobolevint_1)}}^2 \\
    &= \int_{0}^1 \left[f^{(\sobolevint_1)}(x)\right]^2 dx \\
    & =\int_{0}^1 \left[ah^{\sobolevint_1-\frac{1}{2}} \frac{\partial^{\sobolevint_1}}{\partial x^{\sobolevint_1}} \left(K(\frac{x - \bar{x}_s}{h})\right)\right]^2 dx\\
    &=a^2h^{2\sobolevint_1 - 1}\int_0^1 \left[\frac{\partial^{\sobolevint_1}}{\partial x^{\sobolevint_1}}\left(K_0\left(\frac{b}{h}(x-\bar{x}_s)\right)\right) \right]^2 dx \\
    &=a^2 h^{2\sobolevint_1 - 1}\int_0^1 \left[ (\frac{b}{h})^{\sobolevint_1}K_0^{(\sobolevint_1)}\left(\frac{b}{h}(x-\bar{x}_s)\right) \right]^2 dx \\
    & \stackrel{u = \frac{b}{h}(x-\bar{x}_s)}{=} a^2 h^{-1} b^{2\sobolevint_1} \int_{\frac{b}{h}(-\bar{x}_s)}^{\frac{b}{h}(1-\bar{x}_s)} \left[ K_0^{(\sobolevint_1)}(u)\right]^2 \frac{h}{b} du \\
    & = a^2 b^{2\sobolevint_1 - 1} \int_{-1}^1 \left[ K_0^{(\sobolevint_1)}(u)\right]^2 du = a^2 b^{2\sobolevint_1 -1} I_{\sobolevint_1}.
\end{align*}
The second to last step follows because the bump function $K_0$ has compact support on $(-1, 1)$ and the upper and lower limits of the integral satisfy:
\begin{align*}
    & \frac{b}{h}(1-\bar{x}_s) = b(\frac{1}{h}-s+\frac{1}{2}) > 1, \\
    &\frac{b}{h}(-\bar{x}_s) = -b(s-\frac{1}{2})\leq -1.
\end{align*}

Therefore, for $\norm{f_s^{(\sobolevint_1)}}^2\leq L_1^2$ to hold, we need $a^2 b^{2\sobolevint_1 -1} I_{\sobolevint_1}\leq L_1^2$.  This leads to
\begin{align}
    b&\leq \left(\frac{L_1^2}{a^2 I_{\sobolevint_1}} \right)^{\frac{1}{2\sobolevint_1-1}} \\
    &= \left(\frac{L_1^2 (K_0^*)^2}{\Delta^2 (2M)^{2\sobolevint_1-1}I_{\sobolevint_1}} \right)^{\frac{1}{2\sobolevint_1-1}}.
\end{align}

Similarly, for $s=0$, we have the following. 
\begin{align*}
    &\norm{f_s^{(\sobolevint_2)}}^2 \\
    &= \int_{0}^1 \left[f^{(\sobolevint_2)}(x)\right]^2 dx \\
    & =\int_{0}^1 \left[\tilde a h^{\sobolevint_2-\frac{1}{2}} \frac{\partial^{\sobolevint_2}}{\partial x^{\sobolevint_2}} \left(\tilde{K}(\frac{x - \bar{x}_0}{h})\right)\right]^2 dx \\
    &= \int_{0}^1 \tilde{a}^2 h^{2\sobolevint_2-1} \left[ \frac{\partial^{\sobolevint_2}}{\partial x^{\sobolevint_2}} \left(K_0(\frac{\tilde{b}(x - \bar{x}_0}{h}))\right)\right]^2 dx \\
    &= \tilde{a}^2 h^{2\sobolevint_2-1} \int_0^1 \left[\left(\frac{\tilde{b}}{h}\right)^{\sobolevint_2} K_0^{(\sobolevint_2)}(\frac{\tilde{b}}{h}(x-\bar{x}_0))\right]^2 dx \\
    &\stackrel{u =\tilde{b} (x-\bar{x_0})/h}{=}\tilde{a}^2 \tilde{b}^{2\sobolevint_2-1}\int_{-\frac{3\tilde{b}}{4h}}^{\frac{\tilde{b}}{h}(1-\frac{3}{4})} \left[K_0^{(\sobolevint_2)}(u)\right]^2 du \\
    &=\tilde{a}^2 \tilde{b}^{2\sobolevint_2-1}\int_{-1}^1 \left[K_0^{(\sobolevint_2)}(u)\right]^2 du \\
    &=\tilde{a}^2\tilde{b}^{2\sobolevint_2 - 1} I_{\sobolevint_2}. 
\end{align*}
Note that in the third last equation, the integral upper and lower limit satisfy: 
\begin{equation*}
    \frac{\tilde{b}}{h}(1-\frac{3}{4})> 1, \quad -\frac{3\tilde{b}}{4h}< -1.
\end{equation*}
For the above $\norm{f_s^{(m_2)}}^2$ to be less or equal to $L_2^2$, we need:
\begin{align}
    \tilde{b}\leq \left( \frac{L_2^2}{\tilde{a}^2 I_{\sobolevint_2}}\right)^{\frac{1}{2\sobolevint_2 - 1}} = \left(\frac{4L_2^2 (K_0^*)^2}{\Delta^2 (2M)^{2\sobolevint_2-1} I_{\sobolevint_2}} \right)^{\frac{1}{2\sobolevint_2 - 1}}
\end{align}
\end{proof}

Combining Lemma~\ref{lemma: function params} with what we required of the function parameters: $b\geq 2$ and $\tilde{b}\geq 4h$, we then need the following requirements for the parameter $\Delta$. Intuitively, the following lemma says that the functions cannot be too ``wavy'', so that they stay within the corresponding balls in Sobolev spaces. 

\begin{lemma}\label{lemma: delta and M}
For $b\geq 2, \tilde{b}\geq 4h$ to hold, $\Delta$ needs to satisfy the following constraints with respect to $\numhypo$ and the smoothness constants $L_1, L_2$. 

\begin{align}
    &\Delta/L_1 \leq \frac{K_0^*}{2^{2\sobolevint_1 - 1} M^{\sobolevint_1-\frac{1}{2}}\sqrt{I_{\sobolevint_1}}},\\
    &\Delta/L_2 \leq \frac{K_0^*}{2^{2\sobolevint_2-2}\sqrt{I_{\sobolevint_2}}}.
\end{align}
\end{lemma}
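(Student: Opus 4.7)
The plan is straightforward: combine the upper bounds on the bandwidth parameters $b$ and $\tilde{b}$ from Lemma~\ref{lemma: function params} with the lower-bound requirements imposed during the construction of the hypothesis functions, namely $b \geq 2$ (so that each $f_s$ for $s \geq 1$ has compact support inside $H_s$) and $\tilde{b} \geq 4h$ (so that $f_0$ has compact support inside $H_0$). Each compatibility requirement directly translates into an upper bound on $\Delta$ via elementary algebra.

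For the first inequality, I would substitute the requirement $b \geq 2$ into the upper bound $b \leq (L_1^2 (K_0^*)^2 / (\Delta^2 (2M)^{2\sobolevint_1-1} I_{\sobolevint_1}))^{1/(2\sobolevint_1-1)}$ from Lemma~\ref{lemma: function params}, then raise both sides to the $(2\sobolevint_1-1)$-th power and solve for $\Delta$. This yields $\Delta \leq L_1 K_0^* / (2^{\sobolevint_1-1/2} (2M)^{\sobolevint_1-1/2} \sqrt{I_{\sobolevint_1}})$, and the identity $2^{\sobolevint_1-1/2} \cdot (2M)^{\sobolevint_1-1/2} = 2^{2\sobolevint_1-1} M^{\sobolevint_1-1/2}$ reduces this to the stated first inequality.

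For the second inequality, I would perform the analogous substitution $\tilde{b} \geq 4h = 2/M$ into the upper bound on $\tilde{b}$. The key simplification here is the telescoping identity $(2/M)^{2\sobolevint_2-1} \cdot (2M)^{2\sobolevint_2-1} = 4^{2\sobolevint_2-1}$, which eliminates the dependence on $M$ entirely and yields $\Delta \leq L_2 K_0^* / (2^{2\sobolevint_2-2} \sqrt{I_{\sobolevint_2}})$. This is precisely the second claimed inequality.

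There is no real obstacle: the entire argument is algebraic manipulation built on top of the bounds already established in Lemma~\ref{lemma: function params}. The only care required is bookkeeping the exponents correctly, and noting the cancellation that removes $M$ from the $f_0$ constraint (which is why the constraint on $\Delta/L_2$ is $M$-independent, whereas the one on $\Delta/L_1$ retains an $M^{\sobolevint_1-1/2}$ factor). Conceptually, the lemma simply records the quantitative cost, in terms of the amplitude $\Delta$ and the partition resolution $M$, of keeping the bumps narrow enough to fit within their assigned bins while simultaneously keeping their seminorms within the prescribed Sobolev balls.
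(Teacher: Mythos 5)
Your proposal is correct and follows essentially the same route as the paper's proof: substitute the support requirements $b\geq 2$ and $\tilde{b}\geq 4h$ into the upper bounds on $b,\tilde{b}$ from Lemma~\ref{lemma: function params}, and solve for $\Delta$ (the cancellation of $M$ in the $f_0$ constraint is exactly how the paper obtains the $M$-independent bound on $\Delta/L_2$). Your stated constants match the lemma, and your bookkeeping even correctly accounts for the factor of $4$ in the numerator of the $\tilde{b}$ bound that the paper's displayed intermediate inequality silently drops before recovering it in the final constant $2^{4\sobolevint_2-4}$.
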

\begin{proof}[Proof of Lemma~\ref{lemma: delta and M}]
First, consider function $f_s$ when $s\geq 1$. Using the conclusions in Lemma~\ref{lemma: function params} we need the following,
\begin{equation*}
    \frac{L_1^2 (K_0^*)^2}{\Delta^2 (2\numhypo)^{2\sobolevint_1-1}I_{\sobolevint_1}} \geq b^{2\sobolevint_1-1}\geq 2^{2\sobolevint_1-1}.
\end{equation*}
What directly follows is the constraint on $\Delta$:
\begin{equation}
    \Delta^2 \leq \frac{L_1^2 (K_0^*)^2}{2^{4\sobolevint_1 -2}\numhypo^{2\sobolevint_1 - 1} I_{\sobolevint_1}}.
\end{equation}

Similarly, for $f_0$, we need
\begin{align*}
    \frac{L_2^2 (K_0^*)^2}{\Delta^2 (2\numhypo)^{2\sobolevint_2-1}I_{\sobolevint_2}} \geq \tilde{b}^{2\sobolevint_2-1}&\geq (4h)^{2\sobolevint_2-1} 
    = 2^{2\sobolevint_2 - 1}\numhypo^{1-2\sobolevint_2}.
\end{align*}
This leads to second constraint on $\Delta$:
\begin{equation}
    \Delta^2 \leq \frac{L_2^2 (K_0^*)^2}{2^{4\sobolevint_2 - 4}I_{\sobolevint_2}}.
\end{equation}
\end{proof}

\textbf{Function Construction Part II.} We have defined $f_0 \dots f_\numhypo$ in Part I, and identified the constraints between the floating parameters $\numhypo$ and $\Delta$, with respect to given parameters $\sobolevint_1, \sobolevint_2, L_1, L_2$ and known constants $K_0^*, I_{\sobolevint_1}, I_{\sobolevint_2}$. In this second part, we define $\numhypo+1$ bandit problems by defining their reward functions $\phi_s, ~s = 0\dots \numhypo$ in the following way:
\begin{align}
    &\phi_0 = f_0, \\
    &\phi_s = f_s + f_0, ~\forall 1\leq s\leq \numhypo.
\end{align}
It is obvious that the reward functions satisfy the following conditions. The conditions below are the Sobolev version. They are necessary for the latter half of this proof. Similar conditions were required in~\citet{locatelli2018adaptivity, hadiji2019polynomial}, see below for details.
\begin{enumerate}
    \item \label{adaptivity proof: construction condition: peak} The function $\phi_0$ has peak value $\Delta/2$ and functions $\phi_s, 1\leq s\leq \numhypo$ all have peak value $\Delta$. 
    \item \label{adaptivity proof: construction condition: smoothness} The function $\phi_0 \in \sobolevspace^{\sobolevint_2, 2}(\domain, L_2)$ and functions $\phi_s \in \sobolevspace^{\sobolevint_1, 2}(\domain, L_1), 1\leq s\leq M$. 
    \item \label{daptivity proof: construction condition: shape} For $s\geq 1$, $\phi_s(x) = \phi_0(x)$ for $x\notin H_s$. Also, $\phi_s^* - \phi_s(x)\geq \frac{\Delta}{2}$ when $x\notin H_s$. Here $\phi_s^* = \max_{x\in \domain}\phi_s(x)$. 
\end{enumerate}

\textbf{RKHS Version of the Proof.} 
We have now defined $\numhypo+1$ hypothesis functions in two balls in two different Sobolev spaces. By (i)the norm equivalency between Sobolev seminorm (Lemma~\ref{lemma: norm equivalency between semi Sobolev norm and RKHS norm}) and the RKHS norm; and (ii) the relationships between $B_1, L_1$ and $B_2, L_2$ in~\eqref{eq: relation between B12 and L12}, the reward functions also satisfy the following conditions. The conditions below are the RKHS version.  
\begin{enumerate}
    \item The function $\phi_0$ has peak value $\Delta/2$ and functions $\phi_s, 1\leq s\leq \numhypo$ all have peak value $\Delta$. 
    \item $\phi_0 \in \rkhs_{\kernelfourier{\sobolevint_2}}(\domain, B_2)$, $\phi_s \in \rkhs_{\kernelfourier{\sobolevint_1}}(\domain, B_1)$, for $1\leq s\leq M$. 
    \item $\forall s\geq 1$, $\phi_s(x) = \phi_0(x)$ when $x\notin H_s$. Also, $\phi_s^* - \phi_s(x)\geq \frac{\Delta}{2}$ when $x\notin H_s$. 
\end{enumerate}

\textbf{Lower Bounding Cumulative Regret (Proof Sketch).}
This part shows the cumulative regret of an algorithm on functions $\phi_1\dots \phi_\numhypo$ is lower bounded by a rate that depends reversely on $\regupperbound$, if this algorithm has a regret upper bound of $\tilde{R}$ on reward function $\phi_0$. The proof in the following directly follows from~\citet{hadiji2019polynomial} and relies on Pinsker's inequality. 
We write down a proof sketch here for completeness, readers interested in the full version can refer to~\citet[Section $F$]{hadiji2019polynomial}. 
We use their notations in this part unless otherwise specified. Those include $N_{H_s}(T)$ which is the number of times an algorithm selects an action in bin $H_s$; $\sP_s^T(\cdot)$ which is the probability distribution of trajectory $\{x_t, y_t\}_{t=1\dots T}$, when the reward function in the bandit setting is defined by $\phi_s$, for $0\leq s\leq M$. Similarly, $\E_s[\cdot]$ is the expectation with respect to probability $\sP_s$. 

By definitions of the reward functions, when the underlying function is $\phi_s$ for some $s\geq 1$, the cumulative regret is lower bounded by
\begin{equation}
    R_{T,s} \geq \frac{\Delta}{2}(T - \E_s[N_{H_s}(T)])
\end{equation}

For $s=0$, the regret is lower bounded by
\begin{equation}
    R_{T,0}\geq \frac{\Delta}{2}\sum_{s'=1}^\numhypo \E_0[N_{H_{s'}}(T)].
\end{equation}

Pinsker's inequality is used to establish a relationship between the two lower bounds defined above. The \eqref{eq: adaptivity lower bound proof: use pinskers} is a core step of the proof. 
\begin{equation}\label{eq: adaptivity lower bound proof: use pinskers}
    \frac{1}{T}\E_s[N_{H_s}(T)] - \frac{1}{T}\E_0[N_{H_s}(T)] \leq \sqrt{\frac{1}{2}\KL(\sP_0^T, \sP_s^T)}.
\end{equation}
Calculation of KL distance $\KL(\cdot, \cdot)$ relies on condition~\ref{daptivity proof: construction condition: shape} of $\phi_{0\dots \numhypo}$, as well as the assumption that the noise is $1/4$-subgaussian. The result is that the KL distance is bounded by the following.
\begin{equation}
    \KL(\sP_0^T, \sP_s^T) = 2\E_0 [N_{H_s}(T)]\Delta^2.
\end{equation}

With the above, a key intermediate result is reiterated below.
\begin{equation}
    \frac{1}{\numhypo}\sum_{s=1}^\numhypo R_{T,s} \geq \frac{T}{2} \Delta\left(1-\frac{1}{\numhypo}-\sqrt{\frac{\Delta\cdot R_{T,0}}{\numhypo}}\right).
\end{equation}

Recall that our Theorem~\ref{thm: lower bound Sobolev} assumes that $\sup_{\func\in \sobolevspace^{\sobolevint_2, 2}(\domain, L_2)} \mathit{R}_T \leq \regupperbound$, and since $\phi_0\in \sobolevspace^{\sobolevint_2, 2}(\domain, L_2)$, it follows directly that $R_{T,0}\leq \regupperbound$. Therefore, the above inequality becomes
\begin{align*}
    \frac{1}{\numhypo}\sum_{s=1}^\numhypo R_{T,s} &\geq \frac{T}{2} \Delta\left(1-\frac{1}{\numhypo}-\sqrt{\frac{\Delta\cdot R_{T,0}}{\numhypo}}\right) \\
    &\geq \frac{T}{2} \Delta\left(\frac{1}{2}-\sqrt{\frac{\Delta \regupperbound}{\numhypo}}\right).
\end{align*}
In the last inequality, $\numhypo\geq2$ is used. This assumption is \emph{not} violated, as shown later. 

\textbf{Choosing the Appropriate value for $\Delta$.}
Following the above lower bound, we need to choose a value for $\Delta$ that (i) does not violate any of the requirements (Lemma~\ref{lemma: delta and M}) and (ii) maximizes/tightens the lower bound. To do so, the value of $\Delta$ should satisfy: 
\begin{enumerate}
    \item $\sqrt{\frac{\Delta \regupperbound}{\numhypo}}\leq \frac{1}{4}$, where $\frac{1}{4}$ is a constant less than $\frac{1}{2}$ (chosen in an arbitrary manner). 
    \item $\Delta / L_1 \leq \frac{(K_0^*)}{2^{2\sobolevint_1-1}\numhypo^{\sobolevint_1-\frac{1}{2}} I_{\sobolevint_1}^{\frac{1}{2}}}$. Note that this condition satisfies only half of the requirements in Lemma~\ref{lemma: delta and M}. We later show that the other condition in Lemma~\ref{lemma: delta and M} is also satisfied with the selected $\Delta$. 
\end{enumerate}

When maximizing $\Delta$, we first set $\Delta / L_1 \approx \frac{(K_0^*)}{2^{2\sobolevint_1-1}\numhypo^{\sobolevint_1-\frac{1}{2}} I_{\sobolevint_1}^{\frac{1}{2}}}$ to achieve the optimal trade-off between $\numhypo$ and $\Delta$. That is, we set 
\begin{equation}
    \numhypo = \left\lfloor \left(\frac{L_1 K_0^*}{2^{2\sobolevint_1-1} I_{\sobolevint_1}^{\frac{1}{2}} \Delta} \right)^\frac{1}{\sobolevint_1-\frac{1}{2}}\right\rfloor,
\end{equation}
since $\numhypo$ needs to be an integer. By simplifying the constant term:
\begin{equation}\label{eq: def of C(m1)}
    C(\sobolevint_1) \stackrel{\triangle}{=} (\frac{K_0^*}{2^{2\sobolevint_1 - 1}I_{\sobolevint_1}^\frac{1}{2}}),
\end{equation}
we get a simpler expression of $M$:
\begin{equation}\label{eq: adaptivity proof: M}
    \numhypo = \left\lfloor C(\sobolevint_1)~L_1^\frac{2}{2\sobolevint_1 - 1} \Delta^\frac{-2}{2\sobolevint_1-1}\right\rfloor.
\end{equation}

If $\Delta \regupperbound /\left( C(\sobolevint_1) L_1^\frac{2}{2\sobolevint_1 - 1} \Delta^\frac{-2}{2\sobolevint_1-1}\right) \leq \frac{1}{32}$, the condition $\sqrt{\frac{\Delta \regupperbound}{\sobolevint}}\leq \frac{1}{4}$ would be satisfied, using the fact that $\frac{x}{2}\leq \floor{x}, \forall x>2$. Shuffling some terms, the requirement $\Delta \regupperbound/\left( C(\sobolevint_1) L_1^\frac{2}{2\sobolevint_1 - 1} \Delta^\frac{-2}{2\sobolevint_1-1}\right) \leq \frac{1}{32}$ becomes:
\begin{align*}
    &\Delta\leq \frac{1}{32}C(\sobolevint_1)L^\frac{2}{2\sobolevint_1 - 1}\Delta^\frac{-2}{2\sobolevint_1 - 1} B^{-1} \\
    &\Delta^\frac{2\sobolevint_1 + 1}{2\sobolevint_1-1} \leq \frac{C(\sobolevint_1)}{32} L_1^\frac{2}{2\sobolevint_1 -1}\regupperbound^{-1} \\
    &\Delta \leq \left(\frac{C(\sobolevint_1)}{32} \right)^\frac{\sobolevint_1 - \frac{1}{2}}{\sobolevint_1 + \frac{1}{2}} L_1^{\frac{1}{\sobolevint_1+\frac{1}{2}}} \regupperbound^{-\frac{\sobolevint_1-\frac{1}{2}}{\sobolevint_1 + \frac{1}{2}}}.
\end{align*}
To maximize $\Delta$, we thereby choose
\begin{equation}\label{eq: adaptivity proof: Delta}
    \Delta = \left(\frac{C(\sobolevint_1)}{32} \right)^\frac{\sobolevint_1 - \frac{1}{2}}{\sobolevint_1 + \frac{1}{2}} L_1^{\frac{1}{\sobolevint_1+\frac{1}{2}}} \numhypo^{-\frac{\sobolevint_1-\frac{1}{2}}{\sobolevint_1 + \frac{1}{2}}}.
\end{equation}
This leads to the final lower bound:
\begin{align}
    &\frac{1}{\numhypo}\sum_{s=1}^\numhypo R_{T,s} \nonumber\\
    &\geq \frac{T}{2} \Delta\left(\frac{1}{2}-\sqrt{\frac{\Delta \regupperbound}{\numhypo}}\right)\nonumber \geq \frac{T\Delta}{8}\nonumber\\
    &= \frac{1}{8} \left(\frac{C(\sobolevint_1)}{32}\right)^\frac{\sobolevint_1 -1/2}{\sobolevint_1 + 1/2} T L^\frac{1}{\sobolevint_1 + 1/2} \regupperbound^{-\frac{\sobolevint_1 - 1/2}{\sobolevint_1 + 1/2}}.
\end{align}

\textbf{Verify Assumptions.} Last but not least, we have to make sure that the assumptions made throughout the proof are satisfied, by our choice of $\Delta$ in~\eqref{eq: adaptivity proof: Delta} and $\numhypo$ in~\eqref{eq: adaptivity proof: M}. 
\begin{enumerate}
    \item $\numhypo\geq 2$. By the definition of $\numhypo$ in~\eqref{eq: adaptivity proof: M}, we need to ensure that $ C(\sobolevint_1)~L_1^\frac{2}{2\sobolevint_1 - 1} \Delta^\frac{-2}{2\sobolevint_1-1} \geq 2+1=3$. Further, plugging in~\eqref{eq: adaptivity proof: Delta}, this becomes the following requirement of $L_1$:
    \begin{equation}\label{eq: adaptivity proof: L1 lower}
        L_1\geq \frac{3^{\sobolevint_1 + \frac{1}{2}}}{32} C(\sobolevint_1)^{-\sobolevint_1+\frac{1}{2}} \regupperbound^{-1}. 
    \end{equation}
    \item $\Delta/L_2 \leq \frac{K_0^*}{2^{2\sobolevint_2-2}\sqrt{I_{\sobolevint_2}}} $. This is the second requirement in Lemma~\ref{lemma: delta and M} that has not yet been verified to hold. For this condition to hold, the following constraint on $L_2$ should be met.
    \begin{equation}
        L_2 \geq C'(\sobolevint_1, \sobolevint_2) L_1^\frac{1}{\sobolevint_1 + 1/2} \regupperbound^{-\frac{\sobolevint_1 - 1/2}{\sobolevint_1 + 1/2}},
    \end{equation}
    where,
    \begin{equation}\label{eq: def of C(m1, m2)}
    C'(\sobolevint_1, \sobolevint_2) = 2^{2\sobolevint_2-2}\left(\frac{C(\sobolevint_1)}{32}\right)^\frac{\sobolevint_1-1/2}{\sobolevint_1 + 1/2} \frac{\sqrt{I_{\sobolevint_2}}}{K_0^*}
    \end{equation}
    is a constant (independent of $T$) that depends on $\sobolevint_1, \sobolevint_2$. In other words, to make sure that the requirements in Lemma~\ref{lemma: delta and M} are met, we need in the assumptions the following constraint. 
    \begin{equation}
        L_1 \leq C'(\sobolevint_1,\sobolevint_2)^{-(\sobolevint_1+\frac{1}{2})} L_2^{\sobolevint_1+\frac{1}{2}} \regupperbound^{\sobolevint_1-\frac{1}{2}}.
    \end{equation}
\end{enumerate}
We have proved Theorem~\ref{thm: lower bound Sobolev} (Sobolev version).

The constraints on $B_1$ and $B_2$ in Theorem~\ref{thm: lower bound rkhs} are derived from the constraints on $L_1, L_2$ in Theorem~\ref{thm: lower bound Sobolev} and setting $B_1, B_2$ as instructed in~\eqref{eq: relation between B12 and L12}. 
Then the proof of Theorem~\ref{thm: lower bound rkhs} is also completed.
\end{proof}
\subsection{Proof of Corollary~\ref{thm: lower bound with Matern kernel}}\label{sec: proof of corollary of lower bound with Matern kernels}
When $\paramdim=1$, Mat\'ern kernel with regularity parameter $\nu$ has Fourier decay rate of $\nu+\frac{1}{2}$ (Definition~\ref{def: matern kernels}). The algorithm considered in Corollary~\ref{thm: lower bound with Matern kernel} thus satisfies the regret upper bound on an RKHS induced by a kernel with decay rate $\sobolevint_2 = \nu_2 + \frac{1}{2}$ which is $\tilde{R} = \tilde{O}(T^{\frac{\sobolevint_2+\frac{1}{2}}{2\sobolevint_2}})$. Let $\sobolevint_1$ be an integer larger than $\sobolevint_2$. Applying Theorem~\ref{thm: lower bound rkhs}, the lower bound on RKHS of a kernel with Fourier decay rate $\sobolevint_1$ is $\Omega(\tilde{R}^{-\frac{\sobolevint_1-\frac{1}{2}}{\sobolevint_1+\frac{1}{2}}} T)$. For simplicity, we omit the dependence on $B$ (and constant factors) and focus only on the dependence on $T$.
Plugging in the rate of $\tilde{R}$, the lower bound then becomes $\Omega(T^\frac{\sobolevint_1\sobolevint_2+\frac{3}{2}\sobolevint_2-\frac{1}{2}\sobolevint_1+\frac{1}{4}}{2\sobolevint_1\sobolevint_2+\sobolevint_2})$. Set $\sobolevint_1 = \nu_1 + \frac{1}{2}$ as the Fourier decay rate of $\kernel_{\text{Mat\'ern},\nu_1}$ in Corollary~\ref{thm: lower bound with Matern kernel}. Then, we get the lower bound by substituting $\sobolevint_2 = \nu_2+\frac{1}{2}$ and $\sobolevint_1 = \nu_1+\frac{1}{2}$, which is $\Omega(T^{\frac{\nu_1\nu_2+2\nu_2+1}{(\nu_1+1)(2\nu_2+1)}})$.

\subsection{Proof of Theorem~\ref{thm: UCB-Meta regret in RKHS}}\label{sec: proof of UCB-Meta regret in RKHS}
UCB-Meta~\citep{liu2021smooth} achieves minimax regret rate in dependence on $T$ (except log factors) in H\"older spaces with H\"older exponent $\alpha>1$. For $0<\alpha\leq 1$, it reduces to the minimax optimal continuum-armed bandit algorithm from~\citet{auer2007improved}. For simplicity, we consider UCB-Meta as the general algorithm for continuum-armed bandits in H\"older spaces. To prove that it is also minimax optimal over RKHS of certain Mat\'ern kernels, we establish the following embedding of RKHS of Mat\'ern kernels to H\"older spaces, via (i) norm equivalency between RKHS of a Mat\'ern-$\nu$ kernel and Sobolev space with order $\sobolevint$ and (ii) Sobolev embedding theorem that specifies the embedding of Sobolev space with order $\sobolevint$ to H\"older space with exponent $\alpha$. Note that~\citet{singh2021continuum} have shown that the minimax bandit algorithm over a Besov or Sobolev space is the same as one that is minimax over the smallest H\"older space that the Besov or Sobolev space embeds onto, although not explicitly for RKHS. For completeness, we still include the following proof. We first state the Sobolev embedding theorem~\citep[Theorem 5.4]{adams2003sobolev}.
\begin{theorem}[Sobolev embedding theorem~\citep{adams2003sobolev}]
    Let $\sobolevint$ be a non-negative integer. Suppose that the dimension $\paramdim<p \cdot \sobolevint$ and $\alpha = \sobolevint-\frac{\paramdim}{p}$. \highlight{Let $\Omega$ be a finite domain with Lipschitz boundary.} Then, the Sobolev space $\sobolevspace^{\sobolevint, p}(\Omega)$ is embedded onto H\"older space with exponent $\alpha$:
    \begin{equation}\label{eq: sobolev embedding}
        \sobolevspace^{\sobolevint, p}(\Omega) \subset \holderspace^{\alpha}(\Omega).
    \end{equation}
\end{theorem}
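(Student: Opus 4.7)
The plan is to establish the embedding via the classical Morrey-type argument, reducing to pointwise estimates on smooth approximants on all of $\R^\paramdim$. First, because $\Omega$ has Lipschitz boundary, I would invoke a bounded extension operator $E: \sobolevspace^{\sobolevint, p}(\Omega) \to \sobolevspace^{\sobolevint, p}(\R^\paramdim)$ with $Eu|_\Omega = u$, constructed via a partition of unity and local reflections across Lipschitz graphs (Stein-type construction). This reduces the claim to showing, for $u \in \sobolevspace^{\sobolevint,p}(\R^\paramdim)$, that $\Vert u\Vert_{\holderspace^\alpha(\R^\paramdim)} \lesssim \Vert u\Vert_{\sobolevspace^{\sobolevint,p}(\R^\paramdim)}$. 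By density of $C^\infty_c(\R^\paramdim)$ in $\sobolevspace^{\sobolevint,p}(\R^\paramdim)$, it suffices to prove this estimate for smooth compactly supported $u$ and then pass to the limit.

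Second, I would handle the base case $\sobolevint = 1$, namely Morrey's inequality. For any $x, y$ with $r = |x-y|$ and a ball $B = B(x_0, r)$ containing both, one derives an integral representation $u(z) - u_B = \int_B K(z,w)\nabla u(w)\, dw$ whose kernel satisfies $|K(z,w)| \lesssim |z-w|^{1-\paramdim}$. Applying H\"older's inequality together with the observation that $\int_B |z-w|^{(1-\paramdim)p/(p-1)}\, dw$ is finite precisely when $p > \paramdim$ yields $|u(z) - u_B| \lesssim r^{1-\paramdim/p}\Vert \nabla u\Vert_{\lspace_p(B)}$. The triangle inequality then gives $|u(x)-u(y)| \lesssim |x-y|^{1-\paramdim/p}\Vert \nabla u\Vert_{\lspace_p}$, which is exactly H\"older continuity with exponent $1 - \paramdim/p$; a similar averaging argument controls $\Vert u\Vert_{\lspace_\infty}$.

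For the general case $\sobolevint \geq 1$, I would iterate the Gagliardo--Nirenberg--Sobolev embedding $\sobolevspace^{1,q}(\R^\paramdim) \hookrightarrow \lspace_{q^*}(\R^\paramdim)$ with $q^* = \paramdim q/(\paramdim - q)$, valid whenever $q < \paramdim$. Starting from $\sobolevspace^{\sobolevint,p}$ and applying this embedding to weak derivatives repeatedly, one climbs the Sobolev ladder through $\sobolevspace^{\sobolevint - 1, p_1}, \sobolevspace^{\sobolevint - 2, p_2}, \dots$ until reaching $\sobolevspace^{1, p_{\sobolevint-1}}$ with $p_{\sobolevint-1} > \paramdim$; the hypothesis $\sobolevint \cdot p > \paramdim$ is exactly what makes this final step available. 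One application of the Morrey inequality then yields H\"older continuity, and bookkeeping the resulting exponent $1 - \paramdim/p_{\sobolevint-1}$ shows it equals the claimed $\alpha = \sobolevint - \paramdim/p$.

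The main obstacle is the careful exponent bookkeeping through the ladder of Sobolev embeddings to land exactly on $\alpha = \sobolevint - \paramdim/p$, together with the (standard but non-trivial) construction of the extension operator on Lipschitz domains. Neither step is conceptually deep, but both require technical care; this is why the authors reasonably cite Adams rather than reproduce the classical argument in full.
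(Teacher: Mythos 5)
The paper itself gives no proof of this statement (it is quoted directly from Adams, Theorem 5.4), so the only comparison available is against the classical argument you are reconstructing. Your extension step and your base case are fine: the Stein extension operator for Lipschitz domains and the Morrey estimate $|u(z)-u_B|\lesssim r^{1-\paramdim/p}\Vert\nabla u\Vert_{\lspace_p(B)}$ for $p>\paramdim$ are both correct and are exactly the standard ingredients.

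The gap is in your general-$\sobolevint$ step. The Gagliardo--Nirenberg--Sobolev ladder $\sobolevspace^{\sobolevint,p}\to\sobolevspace^{\sobolevint-1,p_1}\to\cdots\to\sobolevspace^{1,p_{\sobolevint-1}}$ requires $p_j<\paramdim$ at every intermediate rung, which forces $(\sobolevint-1)p<\paramdim<\sobolevint p$ and hence $0<\alpha<1$. The hypothesis $\paramdim<\sobolevint p$ alone does \emph{not} make the final step available as you claim; when $\paramdim\leq(\sobolevint-1)p$ the ladder terminates early because some $p_j\geq\paramdim$, and your closing identity $1-\paramdim/p_{\sobolevint-1}=\sobolevint-\paramdim/p$ would assert a classical H\"older exponent $\alpha\geq 1$ for $u$ itself, which would force $u$ to be constant. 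This is precisely the regime the paper needs: with $p=2$ and $\paramdim=1$ one has $p>\paramdim$ from the outset, so GNS is never applicable ($1/p_1=1/2-1<0$), and $\alpha=\sobolevint-\tfrac12>1$ whenever $\sobolevint\geq 2$. The correct organization is to apply Morrey not to $u$ but to its weak derivatives: each $D^{(\multiidx)}u$ with $\vert\multiidx\vert\leq\sobolevint-1$ lies in $\sobolevspace^{1,2}(\R)$ and is therefore $C^{0,1/2}$, so $u\in C^{\sobolevint-1,1/2}$, i.e.\ the \emph{generalized} H\"older space $\holderspace^{\sobolevint-1/2}(\domain)$ in which the integer part of $\alpha$ counts continuous derivatives and only the fractional part is a H\"older seminorm. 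In general one descends the ladder only until the integrability exponent first exceeds $\paramdim$, applies Morrey to the derivatives of the appropriate order $k$, and concludes $u\in C^{k,\gamma}$ with $k+\gamma=\sobolevint-\paramdim/p$; without this reinterpretation of $\holderspace^\alpha$ for $\alpha\geq 1$ your bookkeeping does not close.
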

For our problem setting, \highlight{we set $p=2$ and $\paramdim=1$. The domain $\domain = [0,1]$ satisfies the Lipschitz boundary condition.} Therefore, $\sobolevspace^{\sobolevint}(\domain) \subset \holderspace^{\alpha}(\domain)$ where $\alpha = \sobolevint - \frac{1}{2}$. 
Combining Sobolev embedding theorem with the norm equivalency between Sobolev space and RKHS (Lemma~\ref{lemma: norm equivalency between rkhs and Sobolev}), we have the following result.
\begin{corollary}\label{thm: Matern RKHS embeds onto Holder space}
    Suppose that $\kernelfourier{\fourierrate}: \sR^\paramdim \times \sR^\paramdim \rightarrow \sR$ is a positive-definite translation-invariant kernel, whose Fourier transformation decays polynomially with rate $\fourierrate$, $\fourierrate>\paramdim/2, \fourierrate\in\mathbb{N}$. Then, the RKHS $\rkhs_{\kernelfourier{\fourierrate}}(\domain)$ is embedded onto H\"older space $\holderspace^{\alpha}(\domain)$ with exponent $\alpha=\fourierrate-\frac{\paramdim}{2}$:
    \begin{equation}
        \rkhs_{\kernelfourier{\fourierrate}}(\domain)\subset \holderspace^{\fourierrate-\frac{\paramdim}{2}}(\domain).
    \end{equation}
\end{corollary}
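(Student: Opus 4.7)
The plan is to derive Corollary~\ref{thm: Matern RKHS embeds onto Holder space} as a direct composition of two ingredients that are already stated in the excerpt: the norm equivalence between the RKHS and an integer-order Sobolev space (Lemma~\ref{lemma: norm equivalency between rkhs and Sobolev}), and the Sobolev embedding theorem quoted just before the corollary. Since both are already available, the work reduces to checking hypotheses and chaining the inclusions; no new analytic estimates are needed.

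First I would verify the hypotheses of Lemma~\ref{lemma: norm equivalency between rkhs and Sobolev}. By assumption $\kernelfourier{\fourierrate}$ is positive-definite, translation-invariant, and its Fourier transform decays polynomially with rate $\fourierrate \in \mathbb{N}$, $\fourierrate > \paramdim/2$. The domain $\domain = [0,1]^\paramdim$ is compact with Lipschitz boundary, so the lemma applies and yields that $\rkhs_{\kernelfourier{\fourierrate}}(\domain)$ is norm equivalent to $\sobolevspace^{\sobolevint}(\domain)$ with $\sobolevint = \fourierrate$. Norm equivalence implies set equality (indeed, of topological vector spaces), so in particular $\rkhs_{\kernelfourier{\fourierrate}}(\domain) \subseteq \sobolevspace^{\fourierrate}(\domain)$ as sets of functions.

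Next I would invoke the Sobolev embedding theorem with the choice $p = 2$ and $\sobolevint = \fourierrate$. The condition $\paramdim < p \cdot \sobolevint = 2\fourierrate$ follows from the standing assumption $\fourierrate > \paramdim/2$; the domain $\domain$ satisfies the required Lipschitz boundary condition. Hence
\begin{equation*}
\sobolevspace^{\fourierrate, 2}(\domain) \subset \holderspace^{\alpha}(\domain), \qquad \alpha = \fourierrate - \tfrac{\paramdim}{2}.
\end{equation*}
Chaining this with the previous inclusion gives $\rkhs_{\kernelfourier{\fourierrate}}(\domain) \subset \holderspace^{\fourierrate - \paramdim/2}(\domain)$, which is exactly the claim.

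There is no genuine obstacle here: the entire content of the corollary is the observation that Fourier decay $\Rightarrow$ Sobolev regularity $\Rightarrow$ H\"older regularity. The only small subtleties are (i) confirming that the norm equivalence given in Lemma~\ref{lemma: norm equivalency between rkhs and Sobolev} is indeed an identification of the underlying function spaces (not just an inequality of norms on a common subspace), and (ii) matching the indexing conventions so that the integer order $\sobolevint$ in the Sobolev embedding theorem is identified with the Fourier decay rate $\fourierrate$ and that $p = 2$ is the correct choice, since Lemma~\ref{lemma: norm equivalency between rkhs and Sobolev} identifies the RKHS with an $\lspace_2$-based Sobolev space. Both are immediate from the statements already in the paper.
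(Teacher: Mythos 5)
Your proof is correct and follows essentially the same route as the paper: the paper also obtains the corollary by combining the norm equivalency of Lemma~\ref{lemma: norm equivalency between rkhs and Sobolev} (RKHS $\leftrightarrow$ $\sobolevspace^{\fourierrate,2}$) with the quoted Sobolev embedding theorem at $p=2$, using $\fourierrate>\paramdim/2$ to satisfy $\paramdim<p\sobolevint$. Your write-up is in fact slightly more careful than the paper's one-line justification, e.g.\ in noting that the norm equivalence identifies the underlying function spaces.
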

The above relationship is also studied in the earlier work of~\citet[Appendix~B.1]{shekhar2020multi}. 
Note that Mat\'ern kernels with regularity parameter $\nu$ have a Fourier decay rate of $\fourierrate = \nu+\frac{\paramdim}{2}$. Hence, $\rkhs_{\kernel_{\text{Mat\'ern}, \nu}}(\domain) \subset \holderspace^{\alpha}(\domain)$, for $\alpha = \nu$. Therefore, since UCB-Meta achieves on $\holderspace^{\alpha}(\domain)$ the regret rate of $\tilde{O}(T^\frac{\alpha+1}{2\alpha+1})$~\citep[Equation (19)]{liu2021smooth}, it achieves the same rate $\tilde{O}(T^\frac{\nu+1}{2\nu+1})$ on the subset $\rkhs_{\kernel_{\text{Mat\'ern}, \nu}}(\domain)$. Here, we omit the dependence on $B$, the RKHS norm bound. A function $\func\in \rkhs_{\kernel_{\text{Mat\'ern}, \nu}}(\domain, B)$ also has a finite H\"older norm $\Vert \func \Vert_{\holderspace^{\alpha=\nu}}$. The norm $\Vert \func \Vert_{\holderspace^{\nu}}$, by definition, poses an upper bound on $L$ (using the notation from~\citet[Definition 1]{liu2021smooth}, the H\"older-continuity coefficient of the $l$-th order derivative of $f$, where $l$ is the largest integer strictly less than $\alpha$. By Theorem 4 from~\citet{liu2021smooth}, we can see that $L$ affects the regret only through a multiplicative term and not through the exponents of $T$. Therefore, we omit the dependence on $B$ and write the regret rate of UCB-Meta as $\tilde{O}(T^\frac{\nu+1}{2\nu+1})$.

\subsection{Proof of Theorem~\ref{thm: CORRAL with kernelised bandits}}\label{sec: proof of thm CORRAL with kernelised bandits}
Recall that Theorem 5.3 in~\citet{pacchiano2020model} provides general regret bounds for CORRAL. The proof of our Theorem~\ref{thm: CORRAL with kernelised bandits} is an adaptation to the proof of Theorem 5.3 in~\citet{pacchiano2020model}.
We use the same notations as~\citet{pacchiano2020model} unless otherwise specified. $M$ is the number of base algorithms (also aligning with the statement in Theorem~\ref{thm: CORRAL with kernelised bandits}). $\delta$ is the probability of failure. $U: \sR \times [0,1] \rightarrow \sR^+$ is the cumulative regret function (for a base algorithm), such that $U(t, \delta)$ is the high-probability and anytime regret bound of a base algorithm. $\rho$ is the maximum of reciprocals of the probability that the base algorithm is chosen by the master algorithm over all time steps. $\eta$ is the learning rate of the master algorithm whose value is determined later in the proof. 

In Section~\ref{sec: kernelucb and gpucb}, we discussed briefly SupKernelUCB~\citep{valko2013finite} versus GP-UCB~\citet{srinivas2009gaussian}. Despite the convenient implementation and good empirical performance of GP-UCB, SupKernelUCB matches the non-adaptive lower bound in the dependence on T except log factors under the RKHS assumption and thus is minimax optimal while GP-UCB is not. {UCB-Meta~\citep{liu2021smooth} as shown in Theorem~\ref{thm: UCB-Meta regret in RKHS} is also minimax optimal in the dependence on $T$ except log factors for the Mat\'ern RKHS setting. For this subsection, however, we use SupKernelUCB as base algorithms, since the regret bound of SupKernelUCB has an explicit dependence on $B$, while for UCB-Meta the dependence on $B$ would rely on an implicit constant (see proof of Theorem~\ref{thm: UCB-Meta regret in RKHS} in Appendix~\ref{sec: proof of UCB-Meta regret in RKHS}).}  We set $\paramdim=1$ as specified in Section~\ref{sec: adaptive upper bounds}. 

Given $B$ and $\nu$ of a Mat\'ern-$\nu$ kernel, the regret bound of SupKernelUCB is $\tilde{O}(B^{\frac{1}{2}}T^\frac{\nu+1}{\nu+2})$ in the RKHS of the Mat\'ern kernel~\citep[Theorem 1]{valko2013finite}. Note that the original SupKernelUCB (i) is for finite action set and (ii) takes $T$ as input and therefore does not have any time regret guarantees. As mentioned in Section~\ref{sec: kernelucb and gpucb}, \citet{cai2021lower} argue that the aforementioned problem (i) could be extended to the continuum-armed setting by a discretization argument with an extra $O(d(log(T)))$ term in the regret. The problem (ii) can be theoretically circumvented by the doubling procedure~\citep{auer1995gambling}. Doubling converts an algorithm with (cumulative) regret bound for fixed $T$ to one with anytime regret bound, suffering only up to constant factors in the regret. \footnote{The doubling procedure is also used in other works that use CORRAL to adapt to unknown parameters of the function space, for example~\citet{liu2021smooth} which studied adaptivity to the H\"older exponent.} Therefore, for theoretical interest, we treat SupKernelUCB as the minimax optimal base algorithm with anytime regret upper bound $\tilde{O}(B^{\frac{1}{2}}T^\frac{\nu+1}{\nu+2}), \forall T$. 

We acknowledge that this is for theoretical convenience only and it remains an important open problem~\citep{vakili2021open} to improve the regret bound of the practical GP-UCB algorithm under RKHS assumptions.

We plug in $U(T, \delta) = \tilde{O}(B^\frac{1}{2}T^\frac{\nu+1}{2\nu+1})$  for the base algorithms for CORRAL. Following the proof of~\citet[Theorem 5.3]{pacchiano2020model}, we have the following. Note that this upper bound holds with respect to any base algorithm with anytime high-probability regret $U(t, \delta)$. Therefore, we plug in the regret of the best base algorithm, which is $U(t, \delta) = \tilde{O}({B^*}^\frac{1}{2}t^\frac{\nu^*+1}{2\nu^*+1})$ because $\nu^*, B^*$ belong in the set of candidate values $\vu$. 
\begin{align*}
    R_T&\leq O(\frac{M\ln(T)}{\eta}+T\eta) - \E\left[\frac{\rho}{40\eta\ln(T)}-\rho U(T/\rho, \delta)\log(T)\right] + \delta T + 8\sqrt{MT\log(\frac{4T M }{\delta})}\\
    &\leq \tilde{O}(\frac{M}{\eta}+T\eta+\delta T +\sqrt{M T}) - \E\left[\tilde{O}(\frac{\rho}{\eta}-\rho\sqrt{B^*}T^{\frac{\nu^*+1}{2\nu^*+1}}\rho^{-\frac{\nu^*+1}{2\nu^*+1}}) \right]\\
    &\stackrel{\text{set}~\delta=\frac{1}{T}}{=}\tilde{O}(\frac{M}{\eta}+T\eta+\sqrt{M T}) - \E\left[\tilde{O}(\frac{\rho}{\eta}-\sqrt{B^*}T^{\frac{\nu^*+1}{2\nu^*+1}}\rho^{\frac{\nu^*}{2\nu^*+1}}) \right]
\end{align*}
Maximizing the above equation over $\rho$ results in $\rho\propto \eta^\frac{2\nu^*+1}{\nu^*+1}{B^*}^\frac{\nu^*+\frac{1}{2}}{\nu^*+1}T$. 
If we plug this value for $\rho$ in the above equation, then the regret is bounded by:
\begin{align*}
    R_T&= \tilde{O}(\frac{M}{\eta}+T\eta+\sqrt{M T})  - \tilde{O}(\eta^\frac{\nu^*}{\nu^*+1}{B^*}^\frac{\nu^*+\frac{1}{2}}{\nu^*+1}T - \eta^\frac{\nu^*}{\nu^*+1} {B^*}^\frac{2\nu^*+1}{2\nu^*+2}T)\\
    &\leq\tilde{O}(\frac{M}{\eta}+T\eta+\sqrt{MT}+\eta^\frac{\nu^*}{\nu^*+1} {B^*}^\frac{2\nu^*+1}{2\nu^*+2}T)
\end{align*}
For the problem of adapting to kernel regularity (represented by $\nu^*$ when the kernel is a Mat\'ern kernel), since CORRAL does not have access to $\nu^*$ (and $B^*$), we choose $\eta$ with respect to the user-specified parameter $\tilde{\nu}$: $\eta = T^{-\frac{\tilde{\nu}+1}{2\tilde{\nu}+1}}$.
Plugging this choice of $\eta$ back in the above equation, we have:
\begin{align*}
    R_T\leq \tilde{O}(MT^\frac{\tilde{\nu}+1}{2\tilde{\nu}+1} + {B^*}^\frac{2\nu^*+1}{2\nu^*+2}T^{\frac{\tilde{\nu}\nu^*+2\tilde{\nu}+1}{(2\tilde{\nu}+1)(\nu^*+1)}}).
\end{align*}
Absorbing the dependence on $M$ and $B$ in $\tilde{O}$, we then have the regret rate in~\eqref{eq: corral upper bound}. 

\subsection{Proof of Theorem~\ref{thm: RBBE with kernelised bandit}}\label{sec: proof of thm RBBE with kernelised bandit}
The proof follows from the general form of regret upper bound of RBBE (Theorem 5.1 from~\citet{pacchiano2020regret}). The regret bound in Theorem 5.1 in~\citet{pacchiano2020regret} is expressed with the ``play ratio'' $\sum_{i\in\mathcal{B}} \frac{n_i(t_i)}{n_*(t_i)}$, where $\mathcal{B}$ denotes the set of misspecified base algorithms, $t_i$ denotes the last round before base algorithm $i$ is eliminated,  and $n_i(t)$ denotes the number of times $i$ is selected until time step $t \leq T$. In the following part, we use Lemma A.3 in~\citet{pacchiano2020regret} to calculate the play ratio, then plug it in Theorem 5.1 of~\citet{pacchiano2020regret} to get the final regret bound. For reasons why the more straightforward result (Theorem 5.4 in~\citet{pacchiano2020regret}) is not used, see the end of this subsection for an explanation. 

In the following, each base algorithm $i$ has the following candidate pseudo regret bound (equation (7) in~\citet{pacchiano2020regret}): 
\begin{equation}\label{eq: proof of RBBE, candidate regret form}
    R_i(t)\leq C \someparam_i T^{\beta_i}, 
\end{equation}
where $C\geq 1$ is some term independent of $T$ or $i$, and $\someparam_i \geq 1$ is some parameter dependent on $i$.   
For minimax optimal kernelised bandit algorithms instantiated with $\nu_i$ (parameter of the Mat\'ern kernel), $\beta_i = \frac{\nu_i + \paramdim }{2\nu_i + \paramdim}$.
We write down the general regret bound of RBBE here for completeness (Theorem 5.1~\citep{pacchiano2020regret}). Below, $*$ denotes any well-specified learner, that is, a leaner whose actual (pseudo) regret $\textit{Reg}_i$ is upper bounded by its candidate (which means if well-specified) regret bound $R_i(T)$. 
\begin{align*}
    R_T \leq &\sum_{i=1}^M R_*(n_*(t_i)) + \sum_{i\in\mathcal{B}}\frac{n_i(t_i)}{n_*(t_i)}R_*\left(n_*(t_i)\right) + 2M + 2c\sum_{i\in\mathcal{B}}\sqrt{n_i(t_i)\ln(\frac{M\ln(T)}{\delta})} \\
    & + 2c\sum_{i\in\mathcal{B}}\sqrt{\frac{n_i(t_i)}{n_*(t_i)}}\sqrt{n_i(t_i)\ln(\frac{M\ln(T)}{\delta})}
\end{align*}
We refer to the five terms in the above summation above as $\#1\dots \#5$. 

The terms $\#1+ \#3$ can be bounded the same way as in the proof of Theorem 5.4 in~\citet{pacchiano2020regret}:
\begin{align*}
    \sum_{i=1}^M R_*(n_*(t_i)) + 2M \leq M R_*(T) + 2M \leq \tilde{O}(M\someparam_*T^{\beta_*}).
\end{align*}

The term $\#4$ is bounded also following the proof in~\citet{pacchiano2020regret}:
\begin{align*}
    2c\sum_{i\in\mathcal{B}}\sqrt{n_i(t_i)\ln(\frac{M\ln(T)}{\delta})} &\leq 2c\sqrt{\vert \mathcal{B}\vert \ln\frac{M\ln(T)}{\delta}\sum_{i\in\mathcal{B}}n_i(t_i)} \\
    &\leq 2c\sqrt{\vert \mathcal{B}\vert T \ln\frac{M\ln(T)}{\delta}}
\end{align*}

Bounding the term $\#1$ and $\#5$, however, needs changes to the proof of Theorem 5.4~\citep{pacchiano2020regret}, since the play ratio is involved. Lemma A.3 in~\citet{pacchiano2020regret} states that for two base learners $i, j$,
\begin{equation}\label{eq: play ratio upper bound in RBBE, Lemma A.3}
    \frac{n_i(t)}{n_j(t)}\leq \max\left\{\left(2\frac{\someparam_j}{\someparam_i}\right)^\frac{1}{\beta_i} \left(n_j(t)\right)^{\frac{\beta_j}{\beta_i}-1}, 2\right\}.
\end{equation}
Therefore, the play ratio between a misspecified base learner $i$ and a well-specified leaner $*$ can be bounded by:
\begin{align*}
    \frac{n_i(t)}{n_*(t)}&\leq 2 + \left(2\frac{\someparam_*}{\someparam_i}\right)^\frac{1}{\beta_i} n_*(t)^{\frac{\beta_*}{\beta_i} - 1} \\
    &\leq 2 + 4C_2 B_* n_*(t)^{\frac{\beta_*}{\beta_i}-1} \\
    &\leq 2 + 4C_2 B_* n_*(t)^{2{\beta_*}-1}.
\end{align*}
The first inequality above is simply plugging $j=*$ (representing a well-specified learner), and using that $\max\{x, y\}\leq x+y$. 
For the second inequality, recall that the minimax optimal SupKernelUCB algorithm has a regret rate (if the kernel parameter $\nu$ and RKHS norm bound $B$ are known) of $\tilde{O}(\sqrt{B\gamma_T T}) = \tilde{O}(\sqrt{B}T^{\frac{\nu+\paramdim}{2\nu+\paramdim}})$. The $\tilde{O}$ notation hides polynomial terms that are dependent on $\log(T), \paramdim$. Therefore, the parameter $\someparam_i$ in~\eqref{eq: proof of RBBE, candidate regret form} that depends on the index of the base algorithm $i$ is $\theta_i\propto\sqrt{B_i}$. Given the assumption that $\theta_i\geq 1$, $\frac{\someparam_*}{\someparam_i} \leq C_1 \sqrt{B_*}$ for some constant $C_1$. Since $\beta_i \geq \frac{1}{2}$, $\left(2\frac{\someparam_*}{\someparam_i}\right)^\frac{1}{\beta_i}\leq 4C_2 B_*$ for some constant $C_2$.
Also in the last two inequalities, we used $\beta_i \geq \frac{1}{2}$, that is, every base algorithm used in Theorem~\ref{thm: RBBE with kernelised bandit} have at least $\tilde{O}({T^\frac{1}{2}})$ regret. Therefore, we have the following bound on the sum of play ratio:
\begin{align}
    \sum_{i\in\mathcal{B}} \frac{n_i(t)}{n_*(t)}&\leq 2\vert\mathcal{B}\vert + 4C_2 B_*\vert\mathcal{B}\vert (n_*(t))^{(2\beta_* - 1)}\\
    &\leq 2\vert\mathcal{B}\vert + 4C_2 B_*\vert\mathcal{B}\vert T^{(2\beta_* - 1)} = 2\vert\mathcal{B}\vert(1+ 2 C_2 B_* T^{(2\beta_* - 1)})\label{eq: play ratio intermediate RBBE}
\end{align}
We can plug equation~\eqref{eq: play ratio intermediate RBBE} to bound $\#5$ as follows.
\begin{align*}
    2c\sum_{i\in\mathcal{B}}\sqrt{\frac{n_i(t_i)}{n_*(t_i)}}\sqrt{n_i(t_i)\ln(\frac{M\ln(T)}{\delta})} &\leq 2c\sqrt{\sum_{i\in\mathcal{B}}\frac{n_i(t_i)}{n_*(t_i)}\sum_{i\in\mathcal{B}}n_i(t_i)\ln\frac{M\ln(T)}{\delta}}\\
    &\leq 2c\sqrt{\sum_{i\in\mathcal{B}}\frac{n_i(t)}{n_*(t)}T\ln\frac{M\ln(T)}{\delta}} \\
    &\leq 2c\sqrt{2\vert\mathcal{B}\vert(1+ 2C_2 B_* T^{(2\beta_* - 1)}) T\ln\frac{M\ln(T)}{\delta}} \\
    &=\tilde{O}(\vert\mathcal{B}\vert^{\frac{1}{2}} {B_*}^\frac{1}{2} T^{\beta_*})
\end{align*}

Similarly, the upper bound of term $\#2$ relies on~\eqref{eq: play ratio intermediate RBBE} as well. 
\begin{align*}
    \sum_{i\in\mathcal{B}}\frac{n_i(t_i)}{n_*(t_i)}R_*\left(n_*(t_i)\right) &\leq C \sum_{i\in\mathcal{B}}\frac{n_i(t)}{n_*(t)} \someparam_* n_*(t_i)\\
    &\leq C\sum_{i\in\mathcal{B}}\frac{n_i(t_i)}{(n_*(t_i))^{1-\beta_*}} \someparam_*\\
    &\leq C \left(\sum_{i\in\mathcal{B}}\frac{n_i(t_i)}{n_*(t_i)}\right)^{(1-\beta_*)} \someparam_* (n_i(t_i))^{\beta_*} \\
    &\leq C\someparam_* \left(2\vert\mathcal{B}\vert(1+ 2 C_2 B_* T^{(2\beta_* - 1)})\right)^{(1-\beta_*)} T^{\beta_*} \\
    &=\tilde{O}(\someparam_* \vert\mathcal{B}\vert^{(1-\beta_*)} {B_*}^{1-\beta^*}T^{(2\beta_*-1)(1-\beta_*)+\beta_*}) \\
    &=\tilde{O}(\someparam_* \vert\mathcal{B}\vert^{(1-\beta_*)} {B_*}^{1-\beta^*} T^{4\beta_* + 2\beta_*^2 - 1})
\end{align*}

Now that the asymptotic rates of the five terms are derived, we can see that term $\#2$ dominates in the dependence of $T$ and $\#5$ dominates dependence on $\vert\mathcal{B}\vert, B_*$, and hence, the regret of RBBE can be bounded as follows.
\begin{align}
    R_T &\leq \tilde{O}(\someparam_* \vert\mathcal{B}\vert^{\frac{1}{2}} {B_*}^{\frac{1}{2}} T^{4\beta_* + 2\beta_*^2 - 1}) \\
    &=\tilde{O}(\someparam_* \vert\mathcal{B}\vert^{\frac{1}{2}} {B_*}^{\frac{1}{2}} T^{\frac{2\nu_*^2+4\nu^*+1}{(2\nu^*+1)^2}})\\
    &=\tilde{O}(\someparam_* M^{\frac{1}{2}}{B_*}^{\frac{1}{2}} T^{\frac{2\nu_*^2+4\nu^*+1}{(2\nu^*+1)^2}})
\end{align}

Finally, the reason for not using the straightforward results in Theorem 5.4 of~\citet{pacchiano2020regret} is as follows. In adaptation to the kernel regularity parameter $\nu$, the candidate regret bounds of base algorithms do \emph{not} have the same exponent of $T$. The candidate regret bounds having the same rates of $T$ is a requirement for the more straightforward results, hence, those results are not directly applicable to our setting. 
\end{document}